\newcommand{\arm}{\mathtt{arm}}
\newcommand{\agent}{\mathtt{agent}}
\DeclareMathOperator*{\argmax}{\arg\,\max}
\DeclareMathOperator*{\hc}{{:}}
\newcommand{\rem}[1]{\textbf{\textit{#1}}}
\newtheorem{theorem}{Theorem}
\newtheorem{lemma}{Lemma}
\newtheorem{definition}{Definition}
\newtheorem{claim}{Claim}
\newtheorem{proposition}{Proposition}
\newcommand{\mytilde}{\raise.17ex\hbox{$\scriptstyle\mathtt{\sim}$}}
\newcommand{\name}{UCB-D3 with Local Deletion}
\newcommand{\mainAlg}{\textsc{UCB-D4}}
\newcommand{\phase}{i}
\newcommand{\UCB}{\mathtt{UCB}}
\newcommand{\ucon}{\mathtt{UnqC}}
\g@addto@macro\bfseries{\boldmath}
\title{\bf Beyond $\log^2(T)$ Regret for Decentralized Bandits in Matching Markets}
\author{Soumya Basu\thanks{Google. email: basusoumya@google.com}
\and Karthik Abinav Sankararaman\thanks{Facebook. email: karthikabinavs@gmail.com}
\and Abishek Sankararaman\thanks{Amazon. email: abishek.90@gmail.com. Work done outside of Amazon}}
\date{}
\begin{document}

\maketitle

\begin{abstract}
    We design decentralized algorithms for regret minimization in two sided matching markets with one-sided bandit feedback that significantly improves upon the prior works \cite{liu2020competing,sankararaman2020dominate,decentralized_jordan}. First, for general markets, for any $\varepsilon \mathtt{>} 0$, we design an algorithm that achieves a $O(\log^{1+\varepsilon}(T))$ regret to the agent-optimal stable matching, with unknown time horizon $T$,  improving upon the $O(\log^{2}(T))$ regret achieved in \cite{decentralized_jordan}. Second, we provide the optimal $\Theta(\log(T))$  regret for markets satisfying {\em uniqueness consistency} -- markets where leaving participants don't alter the original stable matching. Previously, $\Theta(\log(T))$ regret was achievable \cite{sankararaman2020dominate,decentralized_jordan} in the much restricted {\em serial dictatorship} setting, when all arms have the same preference over the agents. We propose a phase based algorithm, where in each phase, besides deleting the globally communicated dominated arms, the agents locally delete arms with which they collide often. This \emph{local deletion} is pivotal in breaking deadlocks arising from rank heterogeneity of agents across arms.  We further demonstrate superiority of our algorithm over existing works through simulations.
\end{abstract}

\section{Introduction}
Decentralized decision making by competing agents under uncertainty, each one motivated by one's own objective, is a key feature in online market places, e.g. TaskRabbit, UpWork, DoorDash. An emerging line of research~\cite{aridor2020competing, johari_1, liu2020competing, sankararaman2020dominate, decentralized_jordan} in the field of multi-agent bandits is dedicated to understanding algorithmic principles in the interplay of competition, learning and regret minimization. The two-sided matching market~\cite{gale_shapely} is one such thread, where regret minimization is first studied in \cite{liu2020competing} with a centralized arbiter, and in \cite{sankararaman2020dominate, decentralized_jordan} at different levels of decentralization. 

We study the fully decentralized two-sided matching market, comprising of $N$ demand-side entities (a.k.a. agents) and $K$ supply-side entities (a.k.a. arms). Each entity has a separate preference ranking of the opposite side agents, i.e. each agent over $K$ arms, and each arm over $N$ agents, and aims to match with the most preferred entity.  In the bandit setting, the agents lack prior knowledge of their respective preferences, thus need to learn the preferences only through their {\em own past interactions}, while the arms know their preferences. In each round, every agent simultaneously chooses an arm of their choice, and are either \emph{matched} to their arm of choice and receive a stochastic reward (reward defines agents' preference); or are \emph{blocked}, are notified of this, and receive no reward. 

A matching between agents and arms is \emph{stable} if there exist no pair of agents and arms that are not matched with each other under the matching, prefer each other over their current partners. When each participant knows its preference and the agents propose to the arms across multiple rounds, the system admits any \emph{stable matching} as a Nash equilibrium. Further, the {\em agent-optimal} stable matching is the one, which yields the highest reward among multiple possible stable matchings, to all agents \cite{gale_shapely}. Our objective is to design a uniform protocol for the agents, which allows each agent to quickly find and match with its agent-optimal arm, thus maximizing their cumulative reward.

Under a restrictive special case, known as serial dictatorship, where all arms have the same preference \cite{sankararaman2020dominate} shows it is possible for each agent to attain  $O(NK\log(T))$ cumulative regret (the gap between optimal and achieved reward) in $T$ rounds. They rely on  an Upper confidence bound (UCB) based algorithm, called UCB-D3, that uses strategic deletion of arms done through global communication alongside UCB based explore-exploit. When the decentralization is relaxed, and each agent observes the response of all the arms per round, \cite{decentralized_jordan} designs Collision Avoidance UCB (CA-UCB), which avoids collision by arm deletion and reduced switching of arms. This provides $O(\exp(N^4) \log^2(T))$ regret for each agent with respect to an agent-pessimal stable matching.  Under agent-pessimal stable matching, all the agents obtain the minimum reward among all stable matching, thus it is less desirable than the agent-optimal stable matching. Moreover, the relaxation of decentralization comes at the cost of privacy, and the possible lack of truthfulness of agents which can hurt the performance, and discourage participation.  

Our first contribution is to show through a simple phase based explore then commit protocol each agent can achieve $O(K\log^{1+\varepsilon}(T))$ regret for any $\varepsilon > 0$, for sufficiently large time horizon $T$. We further focus on two-sided markets that satisfy {\em uniqueness consistency} ($\ucon$)~\cite{karpov2019necessary}, where the stable match is \emph{robust} \textemdash namely if any subset of matched pairs of a stable match leave, the remaining matches are still stable. This also relaxes the stringent requirement of homogeneous preferences imposed by the serial dictatorship model. Arguably, robustness to matches departing the system and heterogeneity of preferences is a desirable property in large dynamic markets such as TaskRabbit, InstaCart etc. Under $\ucon$, where UCB-D3 fails due to heterogeneous arm ranking, we design UCB-D4 (UCB-D3 augmented with local deletions) that allows each agent to achieve $O(NK\log(T))$ regret. In local deletion, an agent aggressively and  locally removes the arms to which it collides above a well designed threshold. 
\subsection{Main Contributions}
 \textbf{1. General  markets.} The best existing regret bound for general stable matching due to \cite{decentralized_jordan} obtains a $O(\exp(N^4)\log^2(T))$ {\em agent-pessimal} regret under {\em partial decentralized} feedback in time horizon $T$. The CA-UCB algorithm is used where agents switch with very low frequency to the arm with highest UCB index in a carefully chosen subset. The slow switching allows for collision avoidance but at a cost of high regret. Therefore, we design, phased ETC, a simple phase based algorithm, used uniformly by agents, which sets up a protocol that allows collision free exploration at the beginning of each phase. Then using the Gale-Shapley~\cite{gale_shapely} algorithm commits to a stable matching with their individual estimated preference lists. We prove this algorithm achieves a $O(K \log^{1+\varepsilon}(T))$ {\em agent-optimal} regret under {\em fully decentralized} feedback for any $\varepsilon > 0$, by setting the exploration duration  based on $\varepsilon$. Although our proposed algorithm beats the state-of-the-art CA-UCB guarantees in many dimensions -- fully\,vs\,partially decentralized, agent optimal\,vs\,pessimal regret, $O(K \log^{1+\varepsilon}(T))$\,vs\,$O(\exp(N^4)\log^2(T))$ -- it suffers from cold-start, i.e. it works for $T = \Omega(\exp(N/\Delta^2))$,  where $\Delta$ is the minimum reward gap across all arms and agents. This leaves open the quest for an optimal $O(\log(T))$ regret without the curse of cold-start.

\textbf{2. Markets with uniqueness consistency.} We next focus on  markets with uniqueness consistency (or $\ucon$ in short), where there is a unique and robust stable matching. In this setting, the best known result (excluding  serial dictatorship) is $O(\exp(N^4)\log^2(T))$, achieved by CA-UCB. For serial dictatorship, a special case of $\ucon$ where all arms have the same preference order, the best result, $O((j-1)K\log(T))$ for agent ranked $j$ for all arms, is obtained by UCB-D3. UCB-D3, a phase based algorithm however, cannot incur sub-linear regret when the preferences are heterogeneous (empirically shown in the Section~\ref{sec:experiment}). We introduce UCB-D4, a generalization of UCB-D3 that handles heterogeneous arm preferences under uniqueness consistency, and achieves the coveted  $O(\log(T))$ regret for any $T = \Omega( N/\Delta^2)$, i.e. without the cold-start problem. Our key algorithmic and theoretical insights behind this result are as follows. 


\rem{Algorithmic.} UCB-D4 augments UCB-D3 with a \emph{local deletion}, where an agent in each phase deletes an arm locally if it experiences collision more than a $(\beta\mathtt{\times}\text{phase length})$ times, for an appropriate  $\beta >0$. 
The local deletion plays an important role in eliminating deadlocks that can be created under UCB-D3.  In particular, consider the case when uniqueness consistency holds. Here arms have heterogeneous preferences, and agent $a$ and agent $b$ can block each other from exploring their non-stable matched arms. Interestingly, we discover that such deadlocks do not occur when playing the stable matched arms. 
Due to the specific nature of the deadlock, if an agent `frequently' collides with an arm it is safe to delete that arm locally. The key is to carefully set the threshold of local deletion. A small threshold can remove the stable matched arm with constant probability due to the stochastic feedback, thus incur linear regret. Whereas, a large threshold deter the agent from getting out of the deadlock fast enough to achieve $O(\log(T))$ regret. UCB-D4 with $\beta < 1/K$ strikes the correct balance.

\rem{Theoretical.}  We introduce a dual induction proof-technique linked with $\alpha$-condition. The $\alpha$-condition bestows two orders: one among agents -- left order, and one among arms -- right order. In the left order, the agents have their stable matched arm as best arm once the stable matched arms for higher order agents are removed. For the right order same holds with  the roles of agents and arms swapped. Our dual induction uses these two orders. We show local deletion ensures  the arms inductively, following the right order, become `available' for their respective stable matched agents. This allows the agents to inductively, following the left order, identify and broadcast their respective stable matched arm. The second induction, depends on first, and is driven by global deletion of dominated arms, alongside deadlock resolution due to local deletion.

\textbf{3. Empirical.} We compare the performance of our algorithms against CA-UCB, an algorithm for matching markets under \emph{partial decentralization}: the response of all arms per round is visible. Our proposed phased ETC and $\mainAlg$ both work under {\em full decentralization}: only sees response of the proposed arm. Extensive simulations show, despite the restrictive feedback, phased ETC outperforms CA-UCB in general instances, while $\mainAlg$ does so under uniqueness consistency. We also empirically validate that UCB-D3 produces linear regret under uniqueness consistency.

\subsection{Related Work}
The field of bandit learning has a vast literature with multiple textbooks on this subject \cite{lattimore2020bandit,bubeck2012regret,slivkins2019introduction}. Our work falls in the multi-agent bandits setting, which is effective in modelling applications like wireless networks \cite{wireless_mab, wireless_mab_2}, online advertising \cite{mab_ads}, data-centers and search scheduling \cite{social_learning} where multiple decision makers interact causing an interesting interplay of competition, learning and consensus \cite{bandit_nash, bandit_nash_2}. The study of multi-agent bandits in two-sided matching markets is initiated by~\cite{liu2020competing}. They study the centralized setting, when the agents pass on their estimated preference to a single decision maker who then proposes to the arms. In this setting, the authors design a UCB based protocol that completely avoids collision (due to centralized proposals) and attains a $O(NK\log(T))$ regret with respect to the agent-pessimal stable matching. The papers of \cite{sankararaman2020dominate} and \cite{decentralized_jordan} are the closest to our work as both of them studies decentralized (the latter only partially) two sided matching markets. We have already mentioned their relation to this work, and how we improve upon them. Two sided markets under full information, commonly known as the \emph{stable marriage problem} was introduced in the seminal work of \cite{gale_shapely} where they established the notion of stability and provided the optimal algorithm to obtain a stable matching. Our results rely on the recent combinatorial characterization, namely $\alpha$-condition, of $\ucon$ in stable marriage problem by \cite{karpov2019necessary}.  Further, in the economics literature, uncertainty in two sided matching have been studied recently in \cite{johari_1, ashlagi_1} in directions tangential to our work. We provide a detailed related work in Appendix \ref{sec:appendix_related_work}.

\section{System Model}
\label{sec:system_model}
\textbf{Agents and arms.}  We have $N$ agents and $K \geq N$ arms. When agent $j\in [N]$ is matched to arm $k \in [K]$ it receives a reward sampled (independent of everything) from a latent distribution with support $[0,1]$, and  mean $\mu_{jk} \in [0,1]$. We assume that $\{ \mu_{jk} \}_{j, k}$ are all distinct. For each agent, the arm means impose a preference order over the arms, with higher means preferred over lower means. Similarly, every arm $k\in [K]$ has a total preference order over the arms $>_{\arm(k)}$; for $j,j' \in [N]$, if 
$\agent(j) >_{\arm(k)} \agent(j')$ then arm $k$ prefers agent $j$ over $j'$. When context is clear, we use $j$ for $\agent(j)$, and $k$ for $\arm(k)$. For any subset of agents and arms, the \emph{preference profile} is the preference order of the agents  restricted to the given set of arms, and vice versa. The game proceeds in $T$ rounds (value of $T$ unknown to the agents) where every agent simultaneously \emph{plays} one arm, and is either matched to that arm or is notified that it was not matched. In every round, each arm is matched to the most preferred agent playing that arm in that round (if any).

%

\textbf{Stable Matching.} Given the preference order for agents and arms, consider a matching denoted by the set $\mathbf{k^*, j^*}$, with $k^*_j$ denoting the matched arm for agent $j$, and $j^*_k$ denoting the matched agent for arm $k$. Under a \emph{stable-matching} there exist no two pairs $(j, k)$ and $(j', k')$ such that $k = k^*_j$ and $k'=k^*_{j'}$, but $\mu_{jk} < \mu_{jk'}$ and $\agent(j) >_{\arm(k')} \agent(j')$ -- if agent $j$ matches with $k'$ both improve their position.  An agent-optimal stable match is unique and is one where all agents obtain their respective best possible match among all possible stable matchings (see, \cite{gale_shapely}).


\textbf{Decentralized bandit with no information.} 
All agents have common information, that we have $N$ total agents and $K$ arms each labeled $1$ through $K$. In each round, every agent observes only the outcome of its  action and cannot observe the other agents' play/outcome. Thus, our feedback structure is \emph{fully decentralized} where in each round, an agent's decision to play an arm to play can depend only on common information before the start, its past actions and outcomes. Agents can however agree to a common protocol that map their observed outcomes to arms  in every round.

\textbf{Agent optimal regret.}
Total reward obtained by any agent is compared against that obtained by playing the agent optimal stable match in all rounds. Let $P_j(t) \in [K]$ be the arm played by agent $j$ in round $t$ and $M_j(t) \in \{0,1\}$ be the indicator random variable denoting whether agent $j$ was matched to $P_j(t)$. Let $k_j^{*}$ be the agent-optimal stable match of agent $j$. The $T$-round individual regret for an agent is
$$
    R_j(T) =  T\mu_{j k_j^{*}}  - \mathbb{E}\left[\sum_{t=1}^T M_j(t) \mu_{jP_j(t)}\right].
$$
The goal is to design a protocol that all agents follow to minimize their individual regret.

We make a few important remarks on the model.

\rem{1. Feedback structure.} Our feedback structure is same as~\cite{sankararaman2020dominate}, and more restrictive than the one proposed very recently in~\cite{decentralized_jordan} called decentralized bandit with partial information (see also, Section~\ref{sec:experiment}). In the latter, all the agents can observe the  matching between arms and agents in each round, alongside it's own reward or collision information. Further, the knowledge of preference order of the arms is explicitly assumed. 

\rem{2. Why agent optimal regret?}
 Our rationale for this is to compare against an oracle in which the arm-preferences of the agents are common information known to all agents, and each agent plays to maximize its individual total reward. This corresponds to a repeated game, in which, the set of pure strategy Nash equilibria corresponds to all agents playing a particular stable match in all rounds. The optimal equilibria that maximizes the reward for all agents simultaneously, corresponds to the agent-optimal stable matching. 

\rem{3. Model generalizations.} We focus on $N\leq K$, as for $N > K$ some agents will remain unmatched under full information, and have $0$ regret by definition. Our algorithms, can be modified easily to handle such cases. Further, we can admit any sub-Gaussian reward distribution with finite mean in place of rewards supported on $[0,1]$, with minor changes in our proof.  
\section{Achieving $O(\log^{(1+\varepsilon)}(T))$ Regret in Matching Markets for $\varepsilon>0$}
In this section, we give a simple Phased Explore then Commit (ETC) style algorithm that obtains an asymptotic per-agent regret of $O(\log^{1+\varepsilon}(T))$ for any given $\varepsilon > 0$. The algorithm proceeds in phases of exponentially growing lengths, with phase $i$ lasting $2^i$ rounds. In each phase $i \geq 0$, all agents explore for the first $i^{\varepsilon}$ rounds, and then subsequently exploit by converging to a stable matching through the Gale-Shapley strategy. The explorations are organized so that the agents avoid collision, which can be done in a simple decentralized fashion, where each agent gets assigned an unique id in the range $\{1,\cdots,N\}$ (Algorithm~\ref{algo:index_estimation} in Appendix~\ref{sec:app_algo}). The pseudo-code for phased ETC is given in Algorithm \ref{algo:etc}.


\begin{algorithm}[ht!]
\caption{Phased ETC Algorithm}
\label{algo:etc}
\begin{algorithmic}[1]
    \STATE  $\mathsf{Index} \gets$ {\ttfamily INDEX-ESTIMATION}()
	\FOR {$N+1 \leq t \leq T$ }
		\STATE $\phase \gets \lfloor \log_2(t-1) \rfloor$
		\IF {$t - 2^{\phase}+1 \leq K\lfloor {\phase}^{\varepsilon} \rfloor$} 
		    \STATE Play arm $(t+\mathsf{Index} - 2^{\phase}+1) \mod K$
		\ELSE 
		    \STATE Play {\ttfamily GALE-SHAPLEY-MATCHING}~\cite{gale_shapely} with arm-preference ordered by empirical means computed using all the explore samples thus far.
		\ENDIF
		\ENDFOR
\end{algorithmic}
\end{algorithm}



The minimum reward gap across all arms and agents is $\Delta = \min\{|\mu_{jk} - \mu_{jk'}|: j\in [N], k, k'\in [K]\}.$
\begin{theorem}\label{thm:etc}
If every agent runs Algorithm \ref{algo:etc} with input parameter $\varepsilon > 0$, then the regret of any agent $j$ after $T$ rounds satisfies
\begin{multline*}
    R_T^{(j)} \leq \frac{K(\log_2(T)+2)^{1+\varepsilon}}{1+\varepsilon} + (N^2+K)( \log_2(T) + 2) + 2^{ \left(\left(8/\Delta^2\right)^{1/\varepsilon} 4^{(1+\varepsilon)/\varepsilon} + 1 \right) } + \tfrac{e}{e-2}.
\end{multline*}
\end{theorem}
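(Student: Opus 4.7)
The plan is to split the per-agent regret into three contributions: (i) the cost of the {\ttfamily INDEX-ESTIMATION} procedure plus the round-robin exploration rounds that open each phase, (ii) the cost of the first few rounds of each Gale--Shapley exploit segment (while proposals stabilize), and (iii) the cost of those phases in which Gale--Shapley fails to return the agent-optimal stable matching because of empirical mis-rankings. Parts (i) and (ii) are deterministic and produce the polynomial-in-$\log T$ terms in the bound; part (iii) is the tail-probability calculation that competes with the $2^{\phase}$ phase lengths.

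First I would count the exploration cost. Phase $\phase$ contributes $K\lfloor \phase^{\varepsilon} \rfloor$ round-robin rounds, each of regret at most $1$; summing $\phase$ from $1$ to $\lfloor \log_2 T \rfloor +1$ and bounding by an integral gives $K(\log_2 T + 2)^{1+\varepsilon}/(1+\varepsilon)$. The {\ttfamily INDEX-ESTIMATION} rounds (at most $\BigO{N^2}$) and the at-most $N^2$ rounds needed for Gale--Shapley to stabilize inside each of the at most $\log_2 T + 2$ phases together produce the $(N^2+K)(\log_2 T + 2)$ term. This uses the standard fact that each agent is rejected at most $K$ times in a Gale--Shapley run.

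Next I would analyze exploit correctness. By the end of phase $\phase$, each agent has sampled each arm at least $n_{\phase}:=\sum_{j=0}^{\phase}\lfloor j^{\varepsilon}\rfloor \geq c\,\phase^{1+\varepsilon}/(1+\varepsilon)$ times. Let $E_{\phase}$ be the event that for some agent and some pair of arms the empirical ranking disagrees with the true ranking. Hoeffding applied to each pairwise difference, followed by a union bound over the at most $NK^2$ relevant triples, gives $\Pr(E_{\phase}) \leq 2NK^{2}\exp(-n_{\phase}\Delta^{2}/2)$. On $E_{\phase}^{c}$ every agent holds the true preference list, so Algorithm~\ref{algo:etc}'s Gale--Shapley call with agents proposing converges to the unique agent-optimal stable matching~\cite{gale_shapely}, and the entire exploit segment contributes zero regret. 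On $E_{\phase}$ I pay the trivial per-round bound, i.e.\ at most $2^{\phase}$ regret for the whole phase.

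Finally I would sum $\sum_{\phase} 2^{\phase}\Pr(E_{\phase})$. The summand $\sim 2^{\phase}\exp(-c\phase^{1+\varepsilon})$ is eventually summable because, as soon as $\phase$ exceeds a threshold $\phase^{\star}=\Theta((1/\Delta^{2})^{1/\varepsilon})$, the exponent $c\phase^{1+\varepsilon}$ strictly dominates $\phase\log 2$ and the tail is bounded by a geometric series whose explicit sum can be taken as $e/(e-2)$. For $\phase<\phase^{\star}$ I use only $\Pr(E_{\phase})\leq 1$, which costs at most $2^{\phase^{\star}+1}$; tracking constants and choosing $\phase^{\star}=(8/\Delta^{2})^{1/\varepsilon}\,4^{(1+\varepsilon)/\varepsilon}$ yields exactly the stated cold-start term. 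Combining the three contributions recovers the theorem. The main obstacle is this last balancing act: the threshold $\phase^{\star}$ must be large enough that the Hoeffding exponent eats $\phase\log 2$ cleanly so that the surviving tail contracts to a universal constant, yet every such choice forces a $1/\varepsilon$ exponent into $\phase^{\star}$ and hence into $2^{\phase^{\star}}$, which is precisely what drives the cold-start requirement $T=\Omega(\exp(N/\Delta^{2}))$ noted after the theorem.
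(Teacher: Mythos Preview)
Your proposal is correct and follows essentially the same route as the paper's proof: the paper decomposes regret via a Proposition bounding $R_T^{(j)}\le \sum_i 2^i\Pr[\mathcal{E}_i^c] + K(\log_2 T+2)^{1+\varepsilon}/(1+\varepsilon)+(N^2+K)(\log_2 T+2)$, then bounds $\Pr[\mathcal{E}_i^c]$ by Hoeffding plus a union bound, and finally splits the sum at the same threshold $i_\Delta=(8/\Delta^2)^{1/\varepsilon}4^{(1+\varepsilon)/\varepsilon}$ to obtain the $2^{i_\Delta+1}$ cold-start term and the $e/(e-2)$ geometric tail. The only cosmetic difference is that the paper's union bound is over the $NK$ events $\{|\widetilde\mu_{jk}-\mu_{jk}|\ge\Delta/2\}$ rather than your $NK^2$ pairwise mis-rankings, which changes nothing structurally.
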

The proof is given in Appendix~\ref{sec:app_etc}. Several remarks are in order now on the regret upper bound of phased-ETC. 

\rem{1. Comparison with CA-UCB.}
From an asymptotic viewpoint (when $T \to \infty$), our result improves upon the recent result of \cite{decentralized_jordan}. Their proposed algorithm CA-UCB, achieves a regret of $O(\log^2(T))$ with respect to the \emph{agent-pessimal} stable matching. Theorem~\ref{thm:etc} shows that, even under our (restrictive) {\em decentralized bandit} model, the phased ETC algorithm achieves $O(\log^{1+\varepsilon}(T))$ regret with respect to the \emph{agent-optimal} stable matching.

\rem{2. Exponential dependence on gap.}
The constant in the regret bound has an exponential dependence on $\Delta^{-2}$, which limits it's applicability to practical values of $T$. We note that, the CA-UCB algorithm proposed in \cite{decentralized_jordan}, has an exponential dependence on $N^4$ that is multiplied with the $\log^2(T)$ term in their regret bound. It is an interesting open problem, to obtain an algorithm, that obtains $O(\log^{1+\varepsilon}(T))$ regret for the general matching bandit case, without any exponential dependence. 

In the following section, we present $\mainAlg$ - \name, a decentralized algorithm that achieves regret scaling as $O\left(\frac{\log(T)}{\Delta^2} \right)$ without any exponential dependence on problem parameters, whenever the underlying system satisfies \emph{uniqueness consistency}.

\section{Achieving $O(\log(T))$ Regret under Uniqueness Consistency}
We first introduce the uniqueness consistency in stable matching systems, and provide known combinatorial characterization of such systems, before presenting our algorithm. 


\noindent \textbf{Uniqueness Consistency.} The uniqueness consistency is an important subclass of graphs with unique stable-matching, and is defined as below after \cite{karpov2019necessary}.   
\begin{definition}[\cite{karpov2019necessary}]
A preference profile satisfies {\em uniqueness consistency} iff\\
(i) there exists a unique stable matching  $(\mathbf{k}^*,\mathbf{j}^*)$;\\
(ii) for any subset of arms or agents, the restriction of the preference profile on this subset with their stable-matched pair according to $(\mathbf{k}^*,\mathbf{j}^*)$ has a unique stable matching.
\end{definition}

The uniqueness consistency implies that even if an arbitrary subset of agent leave the system 
with their respective stable matched arms, we are left with a system with unique stable matching among the rest of agents and arms. This allows for any algorithm, which is able to identify at least one stable pair in a unique stable-matching system, to iteratively lead the system to the global unique stable matching. 

{\bf Combinatorial characterization.} 
In \cite{karpov2019necessary}, a necessary and sufficient condition for the uniqueness consistency of the stable matching system is established in the form of $\alpha$-condition (defined shortly). However, to connect $\alpha$-condition with the serial dictatorship studied in \cite{sankararaman2020dominate}, we present the sequential preference condition (SPC)~\cite{eeckhout2000uniqueness} which is a generalization of serial dictatorship, but a subclass of uniqueness consistency. The definition of SPC~\cite{eeckhout2000uniqueness} (which is stated in terms of men and women preferences) restated in our system where we have agents and arms is as follows.
\begin{definition}\label{def:spc}
{\em Sequential preference condition} (SPC) is satisfied iff  there is an  order of agents and arms so that 
\begin{align*}
    &\forall j \in [N], \forall k \in [K], k > j, : \mu_{jj} > \mu_{jk}, \text{ and } 
    &\forall k\leq N, \forall j \in [N], j > k: \agent\, j^*_k >_{\arm\, k} \agent\, j.
\end{align*}
\end{definition}

We next introduce a generalization of the SPC condition known as $\alpha$-condition, introduced in \cite{karpov2019necessary} recently, which is known to be the weakest sufficient condition for the uniqueness of stable matching.
Let $[K]_r =\{a_1,\dots, a_K\}$ and $[N]_r=\{A_1,\dots,A_N\}$ be permutations of the ordered sets $[K]$ and $[N]$, respectively, where the $k$-th arm in $[K]_r$ is the $a_k$-th arm in  $[K]$, and the $j$-th agent in $[N]_r$  is the $A_j$-th agent in $[N]$. We again restate the definition from \cite{karpov2019necessary} in our scenario. 
\begin{definition}\label{def:alpha}
The \textbf{$\boldsymbol{\alpha}$-condition} is satisfied iff there is a stable matching $(\mathbf{k}^*,\mathbf{j}^*)$, a left-order of agents and arms
$$\text{ s.t. }\forall j \in [N],\forall k > j, k \in [K]: \mu_{jk^*_j} > \mu_{jk},$$
and a (possibly different) right-order of agents and arms
\begin{align*}
&\text{ s.t. }\forall k < j \leq N, a_k \in [K]_r, A_j \in [N]_r: \agent\, A_{j^*_{a_k}}  >_{\arm\, a_k} \agent\, A_j.    
\end{align*}
\end{definition}
The following theorem in~\cite{karpov2019necessary} provides the characterization of uniqueness consistency, where unacceptable mates are absent (an unacceptable pair $(j,k)$ means agent $j$ does not accept arm $k$, or the vice versa).\footnote{\cite{karpov2019necessary} state the theorem for a system with same number of men and women. However, the theorem readily extends to our system where we have $K$ arms and $N$ agents with $N\leq K$. Indeed, the unmatched $(N-K)$ arms do not influence the stable matching under any arbitrary restriction of the preference profile.}

\begin{theorem}[\cite{karpov2019necessary}]\label{thm:unique}
If there is no unacceptable mates, then the $\alpha$-condition is a necessary and sufficient condition for the uniqueness consistency.
\end{theorem}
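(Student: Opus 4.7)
The plan is to prove the two directions of the iff separately. For sufficiency ($\alpha$-condition $\Rightarrow$ uniqueness consistency), I would proceed by induction on $N$, driven jointly by the left-order and the right-order. The first agent in the left-order has $k^*_1$ as its globally top-ranked arm, since nothing precedes index $1$ in that order, and symmetrically the first arm in the right-order has $j^*_1$ as its top-ranked agent. I would pin down these two pairs in any candidate stable matching using a blocking-pair argument: if the first left-order agent were not paired with $k^*_1$, then together with the arm's own top agent (identified via the right-order) it forms a blocking pair, contradicting stability. Removing these pinned pairs yields a residual profile that inherits left and right orders by simply dropping the first index on each side, and so the same argument applies inductively, forcing a unique stable matching. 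Extending from uniqueness on the full profile to uniqueness consistency is then immediate, since the construction works verbatim on any residual obtained by stripping a subset of stable-matched pairs: the inherited orders certify the $\alpha$-condition on the residual, so uniqueness on that residual follows.

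For necessity, assume uniqueness consistency. I would construct the left-order by an iterative peeling procedure: at each step, identify an agent in the current residual whose (unique) stable partner is that agent's top-ranked arm in the residual, place it next in the left-order, and remove the corresponding pair. Uniqueness consistency guarantees that each residual continues to have a well-defined unique stable matching, so the peeling is well-posed as long as such a ``top-choice agent'' exists at every step. The right-order is built symmetrically by peeling from the arm side. If at some stage no agent in the residual has its stable partner as its top arm, I would derive a contradiction by constructing an alternative stable matching on the residual via a rotation: starting from any agent, follow the chain ``agent strictly prefers some arm $X$ to its stable partner; arm $X$'s stable partner also has a better available option; $\ldots$'', which must eventually close into a cycle in the finite residual. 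Rotating along this cycle alters the stable matching on the residual subset, violating the uniqueness of the residual stable matching guaranteed by uniqueness consistency.

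The main obstacle will be the rotation-construction step in necessity, which is where the full strength of uniqueness consistency (rather than just uniqueness on the full profile) is essential: the cycle produced must lie entirely within a valid residual sub-profile so that the contradiction targets uniqueness consistency directly. A secondary subtlety in sufficiency is that the left- and right-orders are distinct permutations, so the blocking-pair argument needs to simultaneously invoke both orders when pinning the first pair — one must track the position of $k^*_1$ inside the right-order and of $j^*_1$ inside the left-order to certify that the constructed blocking pair is genuinely blocking. Handling these cross-order dependencies cleanly, and making the rotation argument robust to the interplay between the two orderings in the residual, is where I expect the technical bulk to lie.
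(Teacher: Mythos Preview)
This theorem is not proved in the paper at all: it is stated with attribution to \cite{karpov2019necessary} and used as a black box. The paper's ``proof'' consists solely of the citation, together with a footnote remarking that the result extends from the equal-size setting to $N\leq K$. There is therefore nothing in the paper to compare your proposal against.

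That said, a brief comment on the proposal itself. Your sufficiency sketch has a real gap precisely where you flagged it: the left-order gives you that agent $1$ ranks arm $k^*_1$ first, but it says nothing about arm $k^*_1$'s preferences over agents; the right-order gives you that arm $a_1$ ranks agent $A_1$ first, but $(1,k^*_1)$ and $(A_1,a_1)$ are in general different pairs. Your blocking-pair argument (``if agent $1$ is not with $k^*_1$, then together with the arm's own top agent it forms a blocking pair'') conflates these two pairs and does not go through as written. The actual argument needs to pin the pair $(A_1,a_1)$ first using the right-order (since $a_1$ top-ranks $A_1$, any stable matching must give $a_1$ to $A_1$ or better, but $a_1$ has no better), then peel on the arm side, and separately run the left-order to force agent-optimality; reconciling the two peels into a single induction is the substance of Karpov's proof. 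Your necessity sketch via rotations is the standard approach and is essentially correct in outline.
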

We end with a few remarks on these three systems.

\rem{1. Serial Dictatorship, SPC, and $\alpha$-condition.} 
As mentioned earlier, the $\alpha$-condition generalizes SPC. SPC is satisfied when the left order is identical to the right order in $\alpha$-condition. Further, SPC generalizes serial dictatorship, as the unique rank of the agents in the latter and their respective stable matched arms creates an SPC order. We present examples of the three systems.\\ 
1. This system is Serial dictatorship, SPC, and $\alpha$-condition
\begin{align*}
&\agent: a\hc 1>2>3, b\hc 1>2>3, c\hc 2>1>3,
&\arm: 1\hc a>b>c, 2\hc a>b>c, 3\hc a>b>c.
\end{align*}
2. This system is not Serial dictatorship as arms do not have a unique rank. But it satisfies SPC and $\alpha$-condition, with a valid SPC order $\{(a,1), (b,2), (c,3)\}$.
\begin{align*}
&\agent: a\hc 1>2>3, b\hc 1>2>3, c\hc 2>1>3,
&\arm: 1\hc a>b>c, 2\hc a>b>c, 3\hc a>c>b.
\end{align*}
3. The third system is not Serial dictatorship or SPC as there is no SPC order. But it satisfies $\alpha$-condition as a valid left order is $\{(a,1),(b,2),(c,3)\}$, and the corresponding right order is $\{(b,2),(c,3),(1,a)\}$.
\begin{align*}
&\agent: a\hc 1>2>3, b\hc 1>2>3, c\hc 2>1>3,
&\arm: 1\hc b>c>a, 2\hc b>a>c, 3\hc b>c>a.
\end{align*}
Currently, the $\alpha$-condition is the {\em weakest sufficient condition} for uniqueness of stable matching. Necessary condition for the uniqueness of stable matching remain elusive.

\rem{2. Stable matching under SPC and $\alpha$-condition.}
The  definition of SPC implies that the unique stable matching is obtained when, under the SPC order, for every $i \leq N$, the agent $i$ is matched to the arm $i$. Furthermore, under $\alpha$-condition for any $j\in [N]$, the agent $j$ is matched with arm $j$, and the agent $A_j$ is matched with the arm $a_j$ in the stable matching. We present a proof in the Appendix~\ref{sec:app_alpha}.

\subsection{\mainAlg: UCB Decentralized Dominate-Delete with Local Deletions}
In this section, we describe our main algorithm. The algorithm applies a novel technique called local deletions that interleaves with the phased global deletion strategy of UCB-D3 algorithm proposed in \cite{sankararaman2020dominate}. Algorithm~\ref{alg:UCBD4} describes this algorithm in detail.

\begin{algorithm}[ht!]
\caption{$\mainAlg$ algorithm (for an agent $j$)}
\label{alg:UCBD4}
\begin{algorithmic}[1]
    \STATE \textbf{Input:} Parameters $\beta \in (0, 1/K)$, and  $\gamma > 1$
    \STATE Set   $\mathsf{Index} \gets$ {\ttfamily INDEX-ESTIMATION}()
		\STATE  Global deletion $G_{j}[0] = \phi, \; \forall j \in [N]$
		\FOR {phase $\phase=1,2,\dots$}
	       \STATE Reset the collision counters $C_{jk}[i] = 0, \forall k \in[K]$
		   \STATE Delete dominated arms, $\mathcal{A}_j[\phase] \gets [K] \setminus G_{j}[\phase-1]$ 
	       \IF{$t < 2^\phase + N + NK(\phase-1)$} 
    		    \STATE Local deletion $L_{j}[\phase]\gets \{k: C_{jk}[i] \geq \lceil\beta2^i\rceil \}$ 
        		\STATE Play an arm 
        		 $P_j(t) \in \displaystyle \argmax_{k \in \mathcal{A}_j[\phase]\setminus L_{j}[\phase]} \left( \widehat{\mu}_{k, j}(t\mathtt{-}1) \mathtt{+} \sqrt{\tfrac{2\gamma \log(t)}{N_{k, j}(t-1)}} \right)$
        	    \IF{Arm $k=P_j(t)$ is matched}
        	        \STATE Update estimate $\widehat{\mu}_{k, j}$, and matching count $N_{k, j}$
        	    \ELSE
        	     \STATE Increase  collision counter $C_{jk}[i] \gets C_{jk}[i]+1$
        	    \ENDIF
		  \ELSIF{$t = 2^\phase + N + NK(\phase-1)$}
		       \STATE $\mathcal{O}_j[\phase]\mathtt{\gets}$ most matched arm so far in phase $\phase$ 
	           \STATE $G_{j}[\phase]\mathtt{\gets}$ {\ttfamily COMMUNICATION($\phase$, $\mathcal{O}[\phase]$)}
	     \ENDIF
		\ENDFOR
\end{algorithmic}
\end{algorithm}


At a high-level, the algorithm proceeds in phases as follows. At each phase $i$, every agent $j$ first updates its set of active arms for the phase given by $\mathcal{A}_j[\phase]$ by removing all the arms in the global dominated set $G_{j}[\phase-1]$. Then for the next $2^\phase$ time-steps, every agent plays the arm with the highest $\UCB$ from its active set $\mathcal{A}_j[\phase]$. Here, $\hat{\mu}_{jk}(t)$ denotes the estimated mean reward for, and $N_{jk}(t)$ the number of matches with,  arm $k$ by agent $j$ at the end of round $t$. Whenever the agent collides with an arm at least $\lceil\beta2^i\rceil $ times (tracked by collision counters $C_{jk}[\phase]$), it removes this arm from the active set. Finally, in the next $NK$ steps of this phase, each agent participates in a communication protocol (Algorithm~\ref{algo:communication} in Appendix~\ref{sec:app_algo}) to update the global dominated sets $G_{j}[\phase]$.

\section{Main Results}
We now present our main result in Theorem~\ref{thm:main} which show that $\mainAlg$ attains near optimal logarithmic regret when the stable matching satisfies uniqueness consistency.

{\bf System parameters.} We  introduce the following definitions first that will be used in the regret upper bound:\\
1. The {\em blocking agents} for agent $j \in [N]$ and arm $k \in [K]$,
$\mathcal{B}_{jk} :=  \{ j': \agent(j')>_{\arm(k)} \agent(j)\}.$
2. The  {\em dominated arms} for agent $j \in [N]$, $\mathcal{D}_j := \{k^*_{j'}:  j'\leq j-1\}.$
3. The {\em blocked non-dominated arms} for agent $j\in [N]$, $\mathcal{H}_j = \{k:  \exists j'\in \mathcal{B}_{jk}: k\notin \mathcal{D}_j \cup \mathcal{D}_{j'}\}.$
4. The {\em max blocking agent} for agent $j\in [N]$
$J_{\max}(j) = \max\left(j+1, \{j': \exists k\in \mathcal{H}_j, j'\in \mathcal{B}_{jk}\}\right).$
5. The {\em right-order mapping} for $\alpha$-condition for agent $j \in [N]$ is
$lr(j)$ so that $A_{lr(j)} = j$ with $A_{j}$ as defined in Definition~\ref{def:alpha}. We define the 
$lr_{\max}(j) = \max\{lr(j'): j'\leq j\}.$
6. The gaps of each agent $j$, is given as $\Delta_{jk} = (\mu_{jk^*_j} - \mu_{jk})$ which can be negative for some arms $k$. We define $\Delta_{\min} = \min\{\Delta_{jk}: j \in [N], k \in [K], \Delta_{jk}> 0\}$ as the minimum positive gap across arms and agents.

Some comments are due on some of the above definitions. The {\em dominated arms} is defined similar to \cite{sankararaman2020dominate}. These arms may have higher mean, hence higher long term UCB,  than the agent's stable matched arm. If not removed, the agent will incur linear regret due to collision from these arms which are played by blocking agents in steady state.  The {\em blocked non-dominated arms} are absent in serial dictatorship, but in the SPC and $\alpha$-condition  they emerge due to heterogeneity of  arm preference orders. These arms are not necessarily removed during the global deletion (i.e. through the set $G_j[i]$ in UCB-D4), and may create a deadlock for an agent where the agent keeps playing these arms without exploring them due to collisions.

{\bf Regret bounds.} We present the $O(\log(T))$ regret bound for the systems following uniqueness consistency in Theorem~\ref{thm:main}.  

\begin{theorem}\label{thm:main}
For a stable matching instance satisfying $\alpha$-condition (Definition~\ref{def:alpha}), suppose each agent follows $\mainAlg$ (Algorithm~\ref{alg:UCBD4}) with $\gamma > 1$ and $\beta \in (0,1/K)$, then the regret for an agent $j\in [N]$ is upper bounded by 
\begin{align*}
    \mathbb{E}[R_j(T)] &\leq \underbrace{\sum_{k\notin \mathcal{D}_j \cup k^*_j}\tfrac{8\gamma}{\Delta_{jk}}\left(\log(T)+\sqrt{\tfrac{\pi}{\gamma}\log(T)}\right)}_{\text{sub-optimal match}}
    + \underbrace{\sum_{k\notin \mathcal{D}_j} \sum_{j'\in \mathcal{B}_{jk}: k \notin \mathcal{D}_{j'}} \tfrac{8\gamma \mu_{k^*_j}}{\Delta^2_{j'k}}\left(\log(T)+\sqrt{\tfrac{\pi}{\gamma}\log(T)}\right)}_{\text{collision}} \\
    &+ \underbrace{(K - 1 + |\mathcal{B}_{jk^*_j}|)\log_2(T)}_{\text{communication}} + 
    + \underbrace{O\left(\tfrac{N^2K^2}{\Delta^2_{\min}} + (\beta|\mathcal{H}_j|f_\alpha(J_{\max}(j)) + f_{\alpha}(j)  - 2)2^{i^*}\right)}_{\text{transient phase, independent of $T$ }},\\
    \text{ where }
    &f_\alpha(j) = lr_{\max}(j) + j,\, i^*= \max\{8,i_1,i_2\},\\
    &i_1 = \min\{i: (N-1)\tfrac{10\gamma i}{\Delta^2_{min}} < \beta 2^{(i-1)}\}, \text{ and }
    i_2 = \min \{i: (N-1 + NK(i-1))\leq 2^{i+1}\}.
\end{align*}
Furthermore, if the system satisfies SPC (Definition~\ref{def:spc}) the $T$ independent term in the above regret bound can be replaced by $O\left(\tfrac{N^2K^2}{\Delta^2_{\min}} + ( \beta|\mathcal{H}_j| J_{\max}(j) + j-1)2^{i^*} \right)$.
\end{theorem}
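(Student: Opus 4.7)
The plan is to decompose the per-agent regret into the four labelled buckets in the statement (sub-optimal match, collision, communication, transient) and then control each bucket by combining standard UCB arguments with a dual induction that is driven by the two orders guaranteed by the $\alpha$-condition. Concretely, I will first condition on a clean event on which every empirical mean $\widehat\mu_{jk}(t)$ used in \textsc{UCB-D4} lies within its confidence interval; by the usual Chernoff tail bound and $\gamma>1$, the complement contributes only the constant $\tfrac{e}{e-2}$-type term and can be absorbed into the transient part. On the clean event, the communication subphase contributes $(K-1+|\mathcal{B}_{jk^*_j}|)\log_2(T)$ exactly as in \textsc{UCB-D3} (\cite{sankararaman2020dominate}), because its length is $NK$ per phase and the number of phases is $\log_2(T)+O(1)$.

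Next I would set up the dual induction. Define the right-order statement $\mathcal{R}_k(i)$: ``by the end of phase $i$, every agent $j'\in\mathcal{B}_{j^*_{a_k},a_k}$ no longer plays $a_k$ in any subsequent phase,'' and the left-order statement $\mathcal{L}_j(i)$: ``by the end of phase $i$, agent $j$ has added $k^*_j$ to $G_{j'}[i]$ for every $j'>j$ in the left order, so that $k^*_j\notin\mathcal{A}_{j'}[i+1]$.'' The base case $j=1$ of the left order is essentially the serial-dictatorship argument of \cite{sankararaman2020dominate}: agent $1$'s stable arm has the largest mean among its active set, so after $O(\log T/\Delta_{\min}^2)$ pulls its UCB ranking stabilises and agent $1$ outputs $k^*_1$ as $\mathcal{O}_1[i]$. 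For the inductive step, I would carry the two statements together: assuming $\mathcal{L}_{j'}(i-1)$ holds for all $j'<j$ in the left order, the arms in $\mathcal{D}_j$ are globally deleted, so agent $j$'s best remaining arm is $k^*_j$; the only obstruction to $j$ identifying $k^*_j$ is the blocked non-dominated set $\mathcal{H}_j$, where a blocking agent $j''\in\mathcal{B}_{j,k}$ with $k\in\mathcal{H}_j$ keeps winning collisions on an arm that has higher empirical UCB than $k^*_j$ for $j$. This is precisely where the right-order induction is invoked: $\mathcal{R}_{k'}$ for arms $k'$ preceding $k$ in the right order, combined with local deletion at threshold $\lceil\beta 2^i\rceil$, forces $j''$ to locally delete $k$ within the current phase once the phase is long enough that $j$ has collided with $k$ at least $\beta 2^i$ times. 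The driver for $\mathcal{R}_{a_k}(i)$ is the symmetric observation that, by $\alpha$-condition, once the agents preceding $A_{j^*_{a_k}}$ in the right order have locally deleted $a_k$, $A_{j^*_{a_k}}$ becomes arm $a_k$'s favourite and wins all collisions on it, so any other agent's collision count on $a_k$ crosses $\beta 2^i$ before its match count does.

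Given these inductions, each remaining bucket follows by standard accounting. The sub-optimal-match term is the usual UCB bound $\sum_{k\notin \mathcal{D}_j\cup\{k^*_j\}}\tfrac{8\gamma}{\Delta_{jk}}(\log T+\sqrt{(\pi/\gamma)\log T})$, now valid because after the transient phase agent $j$'s active set contains $k^*_j$ and its preference between non-dominated arms matches the true one. The collision term arises from each blocking agent $j'$ on arm $k\notin \mathcal{D}_j$ with $k\notin \mathcal{D}_{j'}$; $j'$ plays $k$ (incurring a collision for $j$ of cost at most $\mu_{jk^*_j}$) at most $\tfrac{8\gamma}{\Delta_{j'k}^2}(\log T+\sqrt{(\pi/\gamma)\log T})$ times, again by UCB analysis applied to $j'$ on its own active set. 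The transient bucket then collects (i) the pre-clean-event losses, (ii) the $\tfrac{N^2K^2}{\Delta_{\min}^2}$ warm-up needed before the UCB rankings on the $N$ arms $\{k^*_{j'}\}$ stabilise, and (iii) the geometric $2^{i^*}$ cost of the first phase in which both inductions have finished; the factor $(\beta|\mathcal{H}_j| f_\alpha(J_{\max}(j))+f_\alpha(j)-2)$ records how many inductive steps each agent on the left order and each arm on the right order must complete, which is exactly $f_\alpha(\cdot)=lr_{\max}(\cdot)+(\cdot)$ by Definition~\ref{def:alpha}. Under SPC the left and right orders coincide so $lr_{\max}(j)=0$ and $f_\alpha(j)$ collapses to $j$, giving the tighter constant claimed at the end of the theorem.

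The main obstacle I expect is the precise calibration around $\beta\in(0,1/K)$: I must simultaneously show (a) $\beta$ is small enough that the stable matched arm $k^*_j$ is \emph{never} locally deleted once the dual induction has taken effect, since stochastic collisions with $k^*_j$ come only from agents still in their transient phase and their total contribution per phase is $o(2^i/K)$; and (b) $\beta$ is large enough that the deadlock on any $k\in\mathcal{H}_j$ is broken within a single phase of length $2^i$ for $i\ge i^*$, which is where the thresholds $i_1$ (forcing collision counts on non-stable arms to exceed $\beta 2^{i-1}$) and $i_2$ (ensuring the exploration subphase dominates the communication subphase) enter. Getting these two thresholds to interlock through the right-order induction—so that arms higher in the right order finish locally self-cleaning before lower arms need them to—is the delicate bookkeeping step, but once that interlock is in place the rest of the bound assembles routinely from the UCB-D3 analysis plus the collision/communication accounting sketched above.
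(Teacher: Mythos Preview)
Your high-level plan matches the paper's: decompose regret into the four buckets, then run a dual induction along the left and right orders of the $\alpha$-condition, with UCB controlling the sub-optimal-match and collision terms once the inductions have stabilised. But several formulations are wrong in ways that would block the argument.

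First, your right-order statement $\mathcal{R}_k(i)$\textemdash that blocking agents ``no longer play $a_k$ in any subsequent phase''\textemdash is too strong and is not achievable. The paper's right-order induction instead defines a \emph{Warmup phase} for agent $A_j$: each arm $k\in K_W(A_j)=\{k:\mu_{A_jk}<\mu_{A_j a_j},\ \exists j'\ (k=k^*_{j'},\ A_j>_k j')\}$ is \emph{matched} at most $10\gamma i/\Delta_{A_jk}^2$ times in phase $i$, not zero times. The induction shows that once $A_1,\dots,A_{j-1}$ are in warmup, $A_j$'s stable arm $a_j$ suffers fewer than $\beta 2^{i-1}$ collisions (this is exactly the role of $i_1$), so $a_j$ is not locally deleted and UCB keeps $A_j$'s matches with $K_W(A_j)$ arms at $O(i)$. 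If you insist on zero plays you will be trying to control tails you cannot control.

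Second, the sentence ``forces $j''$ to locally delete $k$'' reverses the roles. The blocking agent $j''$ \emph{wins} the collisions on $k\in\mathcal{H}_j$, so its collision counter on $k$ is zero; it is agent $j$ who locally deletes $k$ after $\lceil\beta 2^i\rceil$ collisions. More importantly, the purpose of the right-order (warmup) induction is \emph{not} to trigger local deletion of $\mathcal{H}_j$ arms\textemdash that happens automatically from $j$'s collision counter\textemdash but to guarantee that the stable arm $k^*_j$ is \emph{not} locally deleted, by showing every agent in $\mathcal{L}_j=\{j':k^*_j\in K_W(j')\}$ plays $k^*_j$ only $O(i)$ times. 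The structural fact that makes this work (the paper proves it as a separate claim) is that $\{j':j'>_{k^*_j}j\}\subseteq\mathcal{L}_j$: any agent who can block $j$ on $k^*_j$ necessarily finds $k^*_j$ suboptimal relative to its own stable arm, by stability.

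Third, your SPC simplification is incorrect: under SPC the left and right orders coincide, so $lr(j)=j$ and $lr_{\max}(j)=j$, not $0$. The tighter SPC constant is not obtained by specialising $f_\alpha$ but by a separate, simpler induction that skips the warmup stage entirely\textemdash under SPC $\mathcal{B}_{jk^*_j}\subseteq\{1,\dots,j-1\}$, so freezing of agents $1,\dots,j-1$ already ensures $k^*_j$ is not locally deleted, and no right-order argument is needed.

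Finally, your communication accounting is muddled: the $NK$ subphase length contributes an $NK\cdot\mathbb{E}[V_{\alpha j}]$ transient cost, while the $(K-1+|\mathcal{B}_{jk^*_j}|)\log_2(T)$ term arises because \emph{after} the vanishing phase, each communication block costs $j$ only $K-1$ rounds of not playing $k^*_j$ plus at most $|\mathcal{B}_{jk^*_j}|$ collisions from other agents' blocks.
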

We conclude with some remarks about our main result.

\rem{Scaling of regret bounds.} Our regret bounds in Theorem~\ref{thm:main} are desirable for the following reasons.\\
1. {\em Dependence on the arm gaps and horizon.} Our regret upper bound is near optimal in the arm gaps and horizon, as it matches with the regret lower bound of $\Omega\left(\tfrac{\log(T)}{\Delta^2_{\min}}\right)$ in Corollary 6 of \cite{sankararaman2020dominate}.\\
2. {\em Polynomial dependence.} The constant term associated with the regret bound   is polynomial in all the system parameters, including the minimum gap, as 
$$2^{i^*}= O\left(\max\left\{\tfrac{N}{\Delta^2_{min}}\log(\tfrac{N}{\Delta^2_{min}}), NK \log(NK)\right\}\right).$$ 
Hence, UCB-D4 regret bounds, under uniqueness consistency, has no exponential dendence on system parameters.\\
3. {\em Dependence on preference profile.} The preference profile influences the regret mainly in three ways. First, each agent deletes the dominated arms $\mathcal{D}_j$ so incurs no long term regret for those arms. The first term in the regret bound scales as $(K-j)$. Second, for any agent and any non-dominated arms (which are not globally deleted) the  blocking agents during exploration creates collision leading to the second term scaling as at most $|\mathcal{H}_j \setminus \mathcal{D}_j|$. Finally, the constant in the regret bound scales linearly with the order of the agent and all of its blocking agents ($j + J_{\max}(j)$) in SPC. For the uniqueness consistency, the constant depends on both the left and right order ($f_{\alpha}(j) = j+lr_{\max}(j)$) of the agent and all of its blocking agents $\left(f_{\alpha}(j) + f_{\alpha}(J_{\max}(j))  \right)$.

\rem{Local deletion threshold.} The local deletion threshold is set as $\beta\times \text{phase length}+\Theta(1)$ with $\beta < 1/K$. Increasing the threshold leads to higher regret until local deletion vanishes. This happens as more collision is allowed until an arm is deleted. But higher threshold allows for quick detection of the stable matched arm. However, decreasing the threshold leads to a more aggressive deletion leading to lower regret from collision per phase, at a cost of longer detection time for the stable matched arm. 
In particular, if instead we set local deletion threshold as  $\Theta(\mathrm{polylog}(\text{phase length}))$ then the constant $2^{i^*}$ becomes exponential, i.e. $\exp(N/\Delta^2_{\min})$.  At one extreme, for $\beta \geq 1/K$, the global deletion may stop freezing as the stable matched arm for an agent is not guaranteed to emerge as the most matched arm. In the other extreme, for a threshold of $\Theta(\log(\text{phase length}))$ with large enough probability the stable matched arm may get locally deleted.  In both extremes, the $\mainAlg$ ceases to work.

\subsection{Key Insights into the Proof of Theorem~\ref{thm:main}}
The full proof of Theorem~\ref{thm:main} is presented in the Appendix~\ref{sec:app_alpha}.  We first present why UCB-D3~\cite{sankararaman2020dominate} that  works for the serial dictatorship setting fails under SPC. 

{\bf Deadlocks beyond serial dictatorship.}  Under SPC while running UCB-D3~\cite{sankararaman2020dominate} the {\em blocked non-dominated arms} cause deadlock where two agents are unable to sample their respective best stable matched arm leading to linear regret. This is best explained by an example. Let us consider the $3$ agent ($a$, $b$, and $c$), and $3$ arm ($1$, $2$, and $3$) system with the preference lists given as 
$$\begin{aligned}
&\agent: a\hc 1>2>3, b\hc 2>1>3, c\hc 3>1>2,
&\arm: 1\hc a>b>c, 2\hc a>b>c, 3\hc a>c>b.
\end{aligned}$$
This system satisfies SPC with the agent optimal stable matching $\{(a,1), (b,2),(c,3)\}$,  but it is not a serial dictatorship. If we run the UCB-D3 algorithm in~\cite{sankararaman2020dominate} then the agent $a$ matches with arm $1$,and agent  $b$  and $c$ both delete arm  $1$ using global deletion. However, without additional coordination, with non negligible probability, agent $b$ may not match with arm $3$ (collision with agent $c$), and agent $c$ may not match with arm $2$ (collision with agent $b$). No further global deletion is guaranteed resulting in a deadlock, hence  linear regret (see, Section~\ref{sec:experiment}). 

We first focus on the proof for SPC condition before tackling $\alpha$-condition, as the latter builds on the former. 
\begin{proof}[\rem{Proof Sketch for SPC}] There are two main components of the proof,  {\em inductive freezing of stable matching pairs}, and {\em vanishing of local deletion}, both in expected constant time. 


{\em Inductive freezing of stable matching pairs.} Local deletion of arms with more than $(\beta \times \text{phase length})$ collisions, for $\beta < 1/K$, ensures the agent ranked $1$ (under SPC) removes all the {\em blocked  non-dominated arms}, thus it never gets stuck in a deadlock. This gives agent $1$ opportunity to detect arm $1$ as its most matched arm in expected constant time, as in the presence of arm $1$, agent $1$ matches with any non-deleted sub-optimal arm  $O(\log(\text{phase length}))$ times with high probability under UCB dynamics. Note that agent $1$ never deletes arm $1$ (either globally or locally)  due to the SPC condition. Agent $2$ next deletes dominated arm $1$, and for similar reasons as above matches most with arm $2$. The proof is completed using an induction over the agents following the SPC order, where in agent $j$ freezing happens in $O(ji^*)$ phases. 

{\em Constant time vanishing of local deletion.}
We establish that the local deletion vanishes in expected constant time. Indeed, when all the agents settle to their respective stable matched arms,  the sub-optimal play is limited to $O(log(\text{phase length}))$, hence total collision to blocked non-dominated arm is also logarithmic in phase length. This leads to the vanishing of local deletion, which requires $(\beta\times\text{phase length})$ collisions,  with an extra $O( \beta|\mathcal{H}_j| J_{\max}(j) 2^{i^*})$ regret, where $J_{\max}(j)$ is the maximum blocking agent. 

{\em Stable regime.} The terms that grow with $T$ as $O(\log(T))$ are accounted for by keeping count of expected (1)  number of sub-optimal matches, (2) number of collisions from blocking agents, and (3) the regret due to communication sub-phases. 
\end{proof}

\noindent \textbf{Uniqueness consistency - A tale of two orders.} When moving from  SPC instances to uniqueness consistency (equivalently $\alpha$-condition due to Theorem~\ref{thm:unique}) we no longer have the simple inductive structure that we leverage in the proof sketch of SPC. Under $\alpha$-condition we have two orders instead. The left order states when arm $1$ to arm $(j-1)$ are removed agent $j$ has arm $j$ as the most preferred arm. Whereas, the right order states when agent $A_1$ to agent $A_{(k-1)}$ (recall $\{A_1, \dots, A_N\}$ is a separate permutation of the agents) are removed arm $a_k$ prefers agent $A_k$. Global deletion here must follow left order. However, unlike SPC the set of blocking agents for agent $1$ and arm $1$ (i.e. $\mathcal{B}_{11}$) is nonempty. Thus, agent $1$ cannot get matched with arm $1$ majority of time unless the agents in $\mathcal{B}_{11}$ stop playing arm $1$. We next show how this is resolved.

\begin{proof}[\rem{Proof Sketch of $\alpha$-condition}]
We show due to local deletion an {\em inductive warmup of stable matching pairs} precedes the two phases mentioned in the proof sketch of SPC. 

{\em Inductive warmup of stable matching pairs.} We begin with an observation that holds for every stable matching: for any two stable matching pair $(j,k)$, $(j',k')$, the arm $k$ is either sub-optimal (has mean reward lesser than $k'$) for agent $j'$, or arm $k$ prefers agent $j$ over agent $j'$. Hence, in our system,  for any $2\leq j \leq N$, either (a) arm $a_j$ (stable matched arm for agent $A_j$) is suboptimal for agent $A_1$, or (b) arm $a_j$ prefers $A_j$ over $A_1$. In case (b) arm $A_1$ never causes collision in $a_j$ for agent $A_j$.
Now suppose case (a) holds. Due to $\alpha$-condition, the agent $A_1$ is the most preferred for arm $a_1$.  Therefore, arm $a_1$ is always available to agent $A_1$, and  $a_1$ has higher mean than $a_j$. This implies the UCB algorithm plays arm $a_j$ only $O(\log(\text{phase length}))$ times with high probability in any phase. Once this happens, arm $a_2$ is almost always available to $A_2$, and the induction sets in. This is used to prove $A_j$ is warmed up in $O(ji^*)$ phases, or agent $j$ is warmed up in $O(lr(j)i^*)$ phases.  Once agent $1$ is warmed up the inductive freezing starts as it matches with arm $1$. The rest closely follows the proof of SPC. 
\end{proof}

\section{Numerical Simulations}
\label{sec:experiment}
In this section, we present our numerical simulations with $5$ agents and $6$ arms. Additional results with larger instances are deferred to the Appendix~\ref{sec:app_experiments}.  

\textbf{Baselines.} We use two baselines with their own feedback.\\
{1. \em Centralized UCB (UCB-C)} is proposed in \cite{liu2020competing} for the centralized feedback setting, where each agent in every round submits it's preference order (based on UCB indices) to a centralized agent who assigns the agent optimal matching under this preference, and the rewards are observed locally.\\
{2. \em Collision Avoidance UCB (CA-UCB)} is proposed in \cite{decentralized_jordan} for the {\em decentralized with partial information} setting where at each round all agents observe the player matched to each arm, but rewards are observed locally. 

The centralized setting is the most relaxed where no collision happens, followed by the partial decentralized setting where each round the matching of arms can be learned without any collision. Both are relaxed compared to our {\em decentralized with no information} setting where any global information can be obtained only through collisions.

\textbf{Results.}
We generate random instances to compare the performance of UCB-C, CA-UCB and UCB-D4 in their respective settings. Since \cite{decentralized_jordan} does not mention how to set their hyper-parameter $\lambda$, we report the best result by running a grid search over $\lambda$. For UCB-D4, we use $\beta = 1/2K$ and $\gamma = 2$. We simulate all the algorithms on the same sample paths, for a total 50 sample paths and report mean, 75\% and 25\% agent-optimal regret.

Figure~\ref{fig:etcMain} shows that in a general instance phased ETC outperforms CA-UCB even with a restricted feedback, whereas, as expected, UCB-C outperforms phased ETC.  
Figure~\ref{fig:alphaMain} shows that when uniqueness condition holds UCB-D4, despite the restricted feedback, outperforms CA-UCB,  while it is comparable to the centralized UCB-C.

\begin{figure}[h]
\centering
    \begin{subfigure}[b]{0.5\textwidth}
    \includegraphics[width=0.9\linewidth]{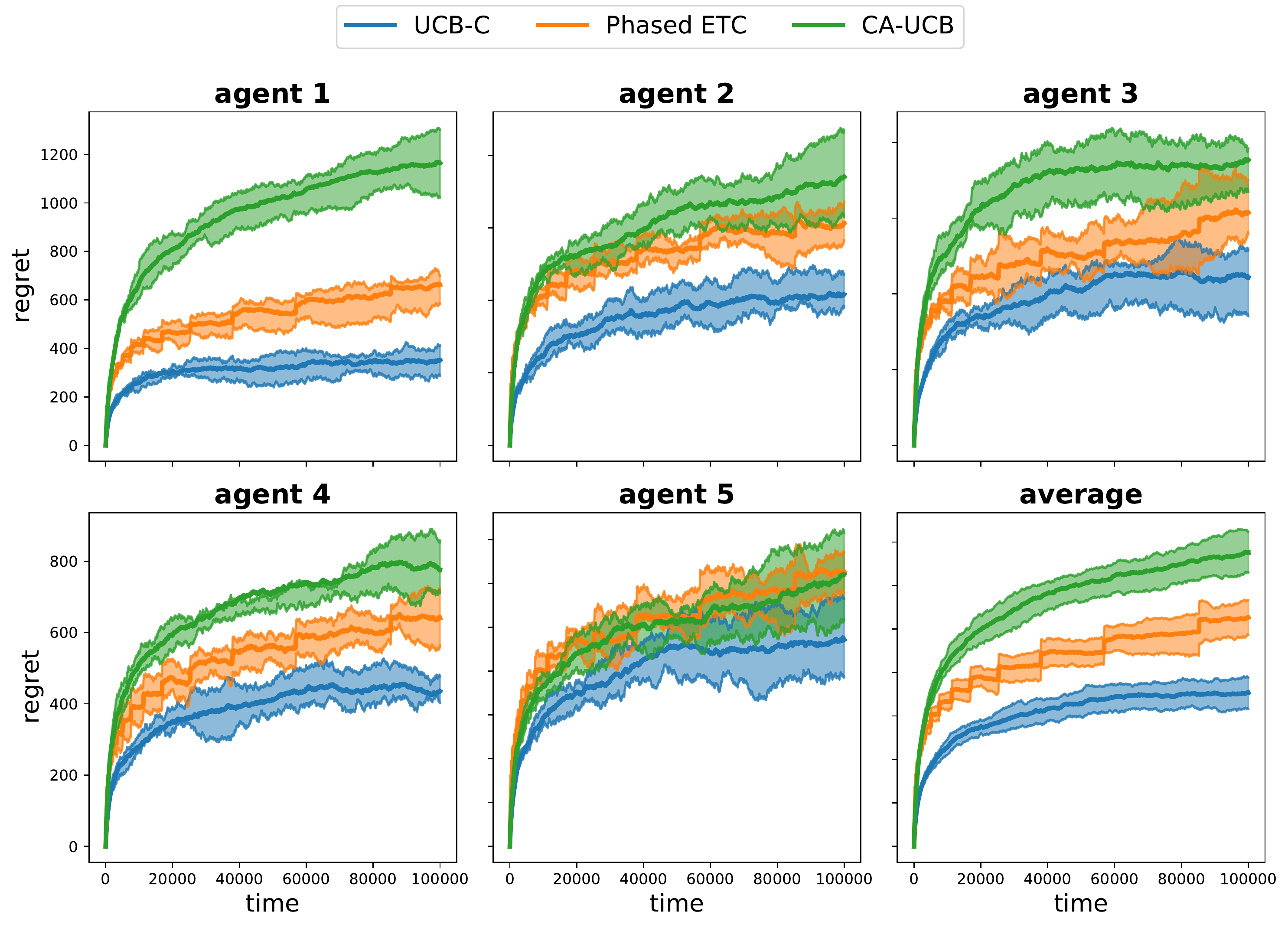}
    \caption{A general instance with $N=5$, and $K=6$. }
    \label{fig:etcMain}
    \end{subfigure}%
    ~
    \begin{subfigure}[b]{0.5\textwidth}
    \includegraphics[width=0.9\linewidth]{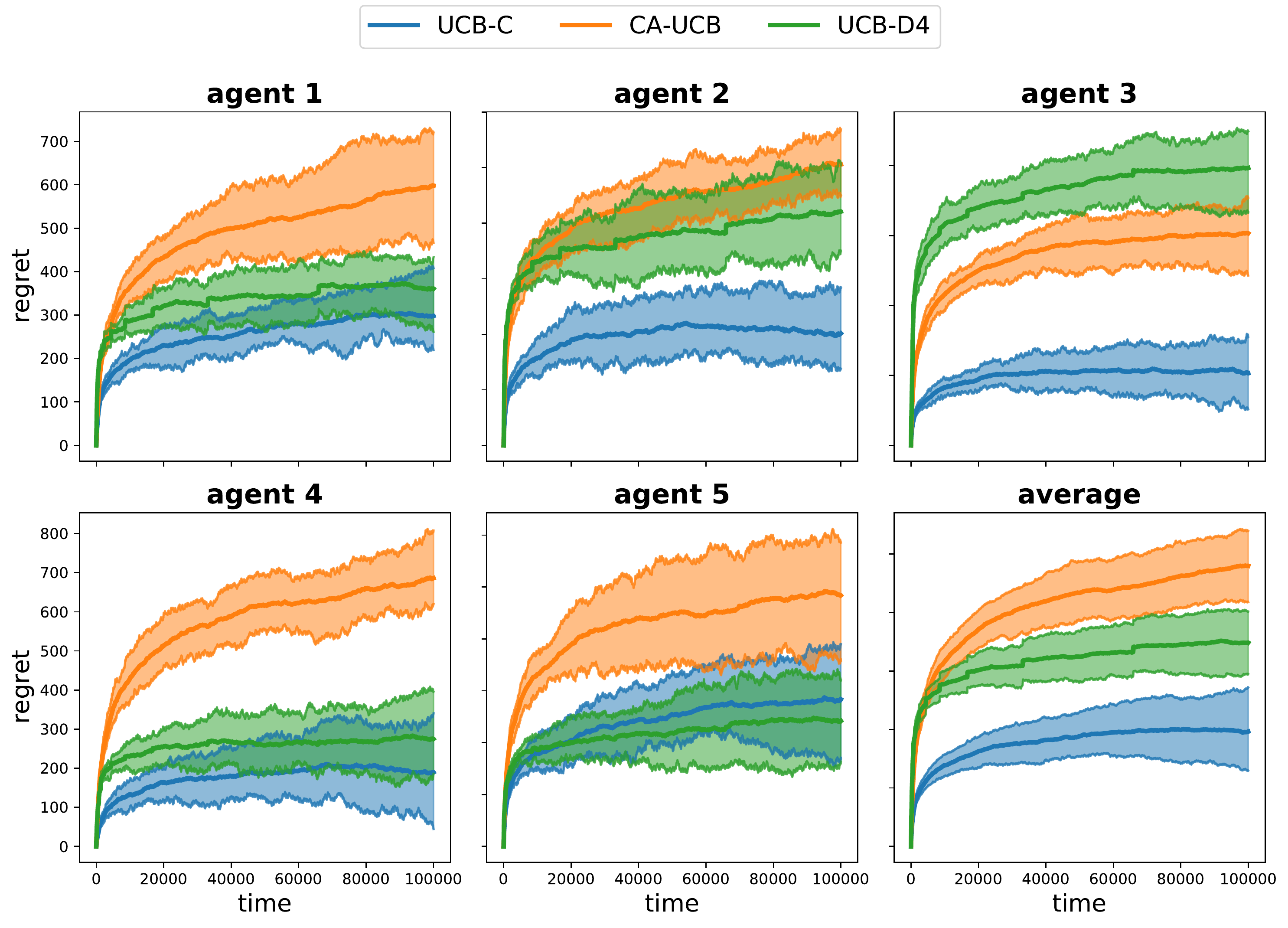}
    \caption{An $\alpha$-condition instance with $N=5$, and  $K=6$.}
    \label{fig:alphaMain}
    \end{subfigure}
    \caption{Performance comparison of algorithms}
\end{figure}
\vspace{-1em}
\begin{figure}[H]
\centering
  \includegraphics[width=0.45\linewidth]{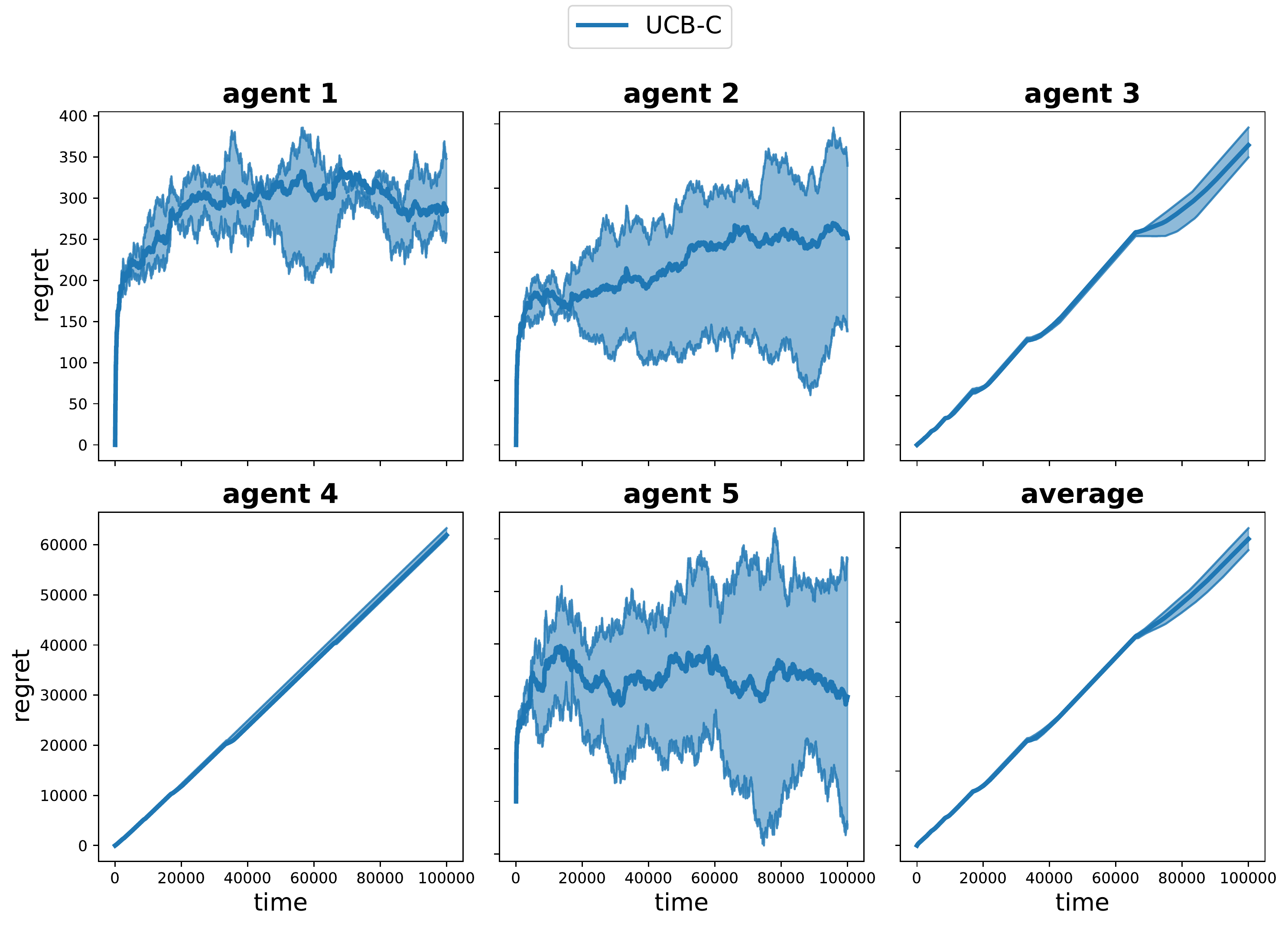}
\caption{Linear regret of UCB-D3 for an SPC instance.}
\label{fig:ucbd3Main}
\end{figure}
\vspace{-1em}
{\bf Linear Regret of UCB-D3 in SPC.} Figure~\ref{fig:ucbd3Main} shows that even when the SPC condition holds, UCB-D3 may result in linear regret, emphasizing the importance of UCB-D4.

\section{Conclusion}
In this paper, we gave a simple phase-based ETC algorithm, that achieves regret $O(\log^{1+\varepsilon}(T))$ with respect to the agent optimal stable matching, for any $\varepsilon > 0$, and any market. This improves on prior work in two ways \textemdash first the ETC compares against the agent-optimal stable matching, while prior works could only compare against the pessimal stable matching. Second, we show an algorithm achieving $O(\log^{1+\varepsilon}(T))$, an improvement over $O(\log^2(T))$ regret established prior. Further, we provided a new algorithm, UCB-D4, that achieves $O(\log(T))$ regret for the class of uniqueness-consistent markets. This class is the largest known class today to admit a unique stable matching. This result extends the previously known result, where $O(\log(T))$ regret was only known to be achieved in the restrictive special case of serial dictatorship. The paper however leaves open the intriguing question of whether there exists an algorithm achieving $O(\log(T))$ regret for genral matching markets.

\bibliography{ref}
\bibliographystyle{plain}

\clearpage
\begin{appendices}
\onecolumn
\allowdisplaybreaks
\section{Sub-Routines used in the Algorithms}\label{sec:app_algo}
\label{appendix_algo_support}

\begin{algorithm}[H]
\caption{{\ttfamily INDEX-ESTIMATION}()}
\label{algo:index_estimation}
\begin{algorithmic}[1]
    \STATE  {{\ttfamily Index}} $\gets$ {\ttfamily N}
    \STATE {\ttfamily Arm} $\gets 1$
	\FOR {$1 \leq t \leq N-1$ }
		\STATE Play Arm labeled {\ttfamily Arm}
		\IF {Matched \textbf{ AND }  {\ttfamily Arm} $== 1$ }
		    \STATE {{\ttfamily Index}} $\gets t$
		    \STATE {\ttfamily Arm} $\gets 2$
		    \ENDIF
		\ENDFOR
	\STATE \textbf{Return} {\ttfamily Index}
\end{algorithmic}
\end{algorithm}

In Algorithm \ref{algo:index_estimation}, we give a simple algorithm by which every agent in a decentralized fashion, can estimate an unique rank. As the arms are labeled, the agents agree to the protocol, that at the beginning of the game, they will play arm labeled $1$, until it gets matched for the first time. The index of the agent is the time at which it matches with arm $1$. Subsequently, the agent will play arm $2$ in the remaining time. Thus, the id estimated by an agent is its relative rank at arm $1$, i.e. $\mathsf{Index}(j) = rank(j,1)$ which is unique among the agents as for any arm there is no tie in its preference.


\begin{algorithm}[H]
\caption{{\ttfamily COMMUNICATION}()}
\label{algo:communication}
\begin{algorithmic}[1]
    \STATE \textbf{Input:} Phase number $\agent j, \phase \in \{1, 2, \ldots, \}$, max played arms $\mathcal{O}[\phase]$
    \FOR{$t=1$ to $NK-1$}
        \IF{$K(\mathsf{Index}(j)-1) \leq t \leq K\mathsf{Index}(j)-1$}
            \STATE Play arm $P_j(t) = (t \mod K ) +1$ 
			\IF{ Collision Occurs}
				\STATE $\mathcal{C} \gets \mathcal{C} \cup \{ I_j(t)\}$
			\ENDIF
        \ELSE
			\STATE Play arm $P_j(t) = \mathcal{O}_j[\phase]$
		\ENDIF
    \ENDFOR
    \STATE \textbf{Return} $\mathcal{C}$
\end{algorithmic}
\end{algorithm}

Algorithm~\ref{algo:communication} allows each agent $j$ to learn the dominated arms -- the arms which are most played by at least one agent $j'\neq j$ such that $j'>_{\mathcal{O}_{j'}[\phase]} j$. We argue that if there exists such an agent $j'$ then
$\mathcal{C} \ni \mathcal{O}_{j'}[\phase]$. Assume an arbitrary such agent $j'$. At the time $t = (K(\mathsf{Index}(j)-1) + \mathcal{O}_{j'}[\phase]-1)$ the agent $j'$ and $j$ plays arm $\mathcal{O}_{j'}[\phase]$, but agent $j$ collides as $j'>_{\mathcal{O}_{j'}[\phase]} j$. This ensures $\mathcal{C} \subseteq \{\mathcal{O}_{j'}[\phase]: j'>_{\mathcal{O}_{j'}[\phase]} j\}$. Also, if there is no $j'$ such that  $j'>_{\mathcal{O}_{j'}[\phase]} j$, then there is no collision during $K(\mathsf{Index}(j)-1) \leq t \leq K\mathsf{Index}(j)-1$ when $\mathcal{C}$ is updated (there can be collision in other times). So we have  $\mathcal{C} = \{\mathcal{O}_{j'}[\phase]: j'>_{\mathcal{O}_{j'}[\phase]} j\}$ which proves the correctness for the  Algorithm~\ref{algo:communication}.
\section{Additional Related Work on Multi-Agent Bandits}
\label{sec:appendix_related_work}

A popular line of work in competitive multi-agent bandits (as in this paper) is the \emph{colliding bandits model}, where if two or more agents play the same arm in the same round, then \emph{all} of them get blocked. Such models study spectrum sharing in wireless networks \cite{kalathil, musical_chairs, got_bandits}, and recently sophisticated centralized and de-centralized algorithms were  developed to obtain the right $O(\log(T))$ regret \cite{sic_mmab, practical_mab_algo}. Our model fundamentally differs from the colliding bandits, because if multiple agents play the same arm simultaneously, one of them receives a reward while the others do not. Thus, the developments therin \cite{sic_mmab, practical_mab_algo} are inapplicable to our problem. 
From an application side, bandits have been used for resource allocation in networks \cite{wireless_mab_2, bandits_resource_alloc, wireless_mab}. This line of work does not fall under our matching bandit model and are mostly centralized systems.
Another related line of work is that of collaborative multi-agent bandits, where agents are not competing for resources, but aim to maximize group reward by minimal communications \cite{colab_social_nets, gosine, social_learning, info_share, collab_5, collab_6}.
\section{Proof of Theorem \ref{thm:etc}}\label{sec:app_etc}
\label{appendix_proof_etc}

\begin{proof}[Proof of Theorem \ref{thm:etc}]
We use the Hoeffding's bound and linearity of expectation to establish this result. We collect some useful observations from the description of the algorithm.
\begin{itemize}[noitemsep]
\item The total number of phases until time $T$ is at-most $ \lceil \log_2(T) \rceil + 1 $. 
    \item The total number of explore samples by any agent after the explore part of phase $i$ is at-least $\frac{(i-1)^{1+\varepsilon}}{1+\varepsilon} - i$.
    \item Each agent experiences no more than $N^2$ collisions in a phase.
\end{itemize}
The first point follows from the fact that phase $i$ lasts for $2^i$ rounds. The second point follows from the fact that in any phase $i$. the first $i^{\varepsilon}$ rounds are used for exploration. The last point follows from classical results on Gale-Shapley matching, which takes at-most $N^2$ rounds.

In any phase $i$, denote by the event $\mathcal{E}_i$ to be the one in which, every agent correctly estimated the ordering of its arms, at the end of the explore portion of phase $i$. The following two propositions help us prove the regret bound
\begin{proposition}
\begin{equation*}
    R_T^{(j)} \leq \sum_{i=0}^{\lceil \log_2(T) \rceil +1} 2^i\mathbb{P}(\mathcal{E}_i^{\complement}) +  K\frac{(\log_2(T)+2)^{1+\varepsilon}}{1+\varepsilon} + (N^2+K)( \log_2(T) + 2).
\end{equation*}
\label{prop:etc_linearity_expectation}
\end{proposition}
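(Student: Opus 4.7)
The plan is to decompose the cumulative regret across the at most $\lceil\log_2 T\rceil+1$ phases of the algorithm and, within each phase, further split rounds into a deterministic explore portion and a Gale--Shapley exploit portion. The three terms in the stated bound correspond respectively to (i) the cost of phases in which the estimation event $\mathcal{E}_i$ fails, (ii) the total cost of explore rounds, and (iii) the per-phase overhead of index estimation and Gale--Shapley convergence.

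First I would write $R_T^{(j)} \leq \sum_{i=0}^{\lceil\log_2 T\rceil+1} R_i^{(j)}$, where $R_i^{(j)}$ denotes the expected regret accrued by agent $j$ in phase $i$; this is legitimate by the first observation in the excerpt. Within a phase of length at most $2^i$, the rounds are partitioned into (a) at most $K\lfloor i^\varepsilon\rfloor$ explore rounds, which use the cyclic sweep offset by \textsf{Index} to guarantee collision-free samples for every agent, and (b) the remaining rounds in which each agent runs Gale--Shapley with preferences ordered by the empirical means computed from all explore samples accumulated so far. Since per-round rewards lie in $[0,1]$, the per-round regret is at most $1$; hence the total explore contribution across phases is bounded by $\sum_i K\lfloor i^\varepsilon\rfloor \leq K\int_0^{\log_2 T+2} x^\varepsilon\,dx = K(\log_2 T+2)^{1+\varepsilon}/(1+\varepsilon)$ via an integral comparison, yielding the second term in the bound.

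Next, for the exploit portion of phase $i$ I would condition on $\mathcal{E}_i$. On $\mathcal{E}_i$ every agent's empirical preference order agrees with the truth, so decentralized Gale--Shapley commits to the agent-optimal stable matching $k^*_j$; invoking the third observation, each agent experiences at most $N^2$ collisions during convergence (each contributing at most $1$ regret), and the per-round regret is $0$ afterwards. On $\mathcal{E}_i^\complement$ I use the trivial phase-length bound of $2^i$ on the regret contribution. Taking expectation and summing over $i$ produces the probabilistic sum $\sum_i 2^i\mathbb{P}(\mathcal{E}_i^\complement)$, together with an $N^2$ contribution per phase that aggregates to at most $N^2(\log_2 T + 2)$. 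The one-time $N$-round index-estimation routine and the $K$ rounds per phase needed to prime the cyclic explore schedule and to handle floor/boundary effects absorb into the remaining $K(\log_2 T+2)$, delivering the full additive $(N^2+K)(\log_2 T+2)$ overhead.

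The main obstacle is purely bookkeeping rather than any deep inequality: I have to verify that the cyclic exploration schedule really does produce collision-free samples for every agent across phases; that the $N^2$-step bound on Gale--Shapley survives in the fully decentralized implementation when all agents share the true preference order; and that rounding artifacts from $\lfloor i^\varepsilon\rfloor$ and the $\lceil\log_2 T\rceil$ count of phases are absorbed into the stated constants without inflating any exponent. Once these per-phase counts are pinned down, linearity of expectation and the integral comparison for $\sum_i i^\varepsilon$ assemble the three pieces into the claimed bound.
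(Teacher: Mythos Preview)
Your proposal is correct and follows essentially the same approach as the paper: decompose the regret by phase, bound the first $K\lfloor i^\varepsilon\rfloor + N^2$ rounds of each phase trivially by $1$ (explore plus Gale--Shapley convergence), bound the remaining exploit rounds by $\mathbb{P}(\mathcal{E}_i^\complement)$ since on $\mathcal{E}_i$ all agents play $k_j^*$, and then use an integral comparison on $\sum_i i^\varepsilon$. The only minor difference is the source of the additive $K(\log_2 T+2)$: in the paper it arises purely from the slack $K\lfloor i^\varepsilon\rfloor \le K(i^\varepsilon+1)$ together with the sum-to-integral step, not from index estimation or schedule priming, but this is exactly the ``floor/boundary'' bookkeeping you already flagged.
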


\begin{proposition}
\begin{equation}
    \mathbb{P}[\mathcal{E}_i^{\complement}] \leq 2N K \exp \left( - \left( \frac{(i-1)^{1+\varepsilon}}{1+\varepsilon} - i\right) \frac{\Delta^2}{2} \right).
\end{equation}
\label{prop:etc_chernoff}
\end{proposition}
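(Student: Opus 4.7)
The plan is to reduce the event $\mathcal{E}_i^{\complement}$ to a concentration failure of empirical arm means, and then apply Hoeffding's inequality plus a union bound over agents and arms. The key reduction I would use at the start is: if for every agent $j$ and every arm $k$, the empirical mean $\widehat{\mu}_{jk}$ computed at the end of the explore portion of phase $i$ lies within $\Delta/2$ of $\mu_{jk}$, then the empirical ordering of arms for every agent agrees with its true ordering. This is because $\Delta$ is, by definition, the minimum gap between any two arm means for any fixed agent, so a uniform $\Delta/2$-accurate estimate cannot swap any pair $(k,k')$ with $\mu_{jk}>\mu_{jk'}$: one gets $\widehat{\mu}_{jk} > \mu_{jk} - \Delta/2 \geq \mu_{jk'} + \Delta/2 > \widehat{\mu}_{jk'}$. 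Hence $\mathbb{P}[\mathcal{E}_i^{\complement}]$ is at most the probability that some $(j,k)$ pair has an empirical mean deviating by more than $\Delta/2$ from its true value.

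I would next lower bound the per-arm sample count $n_i$ available to each agent by the end of the explore portion of phase $i$. The round-robin exploration schedule (made collision-free by the unique indices produced by \texttt{INDEX-ESTIMATION}) ensures every agent collects exactly $\lfloor (i')^{\varepsilon} \rfloor$ samples of each arm during the explore portion of phase $i'$. A routine sum-vs-integral comparison, together with the floor loss of at most $1$ per phase, yields $n_i \geq (i-1)^{1+\varepsilon}/(1+\varepsilon) - i$, which is precisely the second observation listed before the proposition.

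With this sample count in hand, I would apply the standard two-sided Hoeffding bound to each coordinate $(j,k)$, using that the reward samples at each arm are i.i.d.\ from a distribution supported on $[0,1]$. This yields a failure probability of at most $2\exp(-n_i \Delta^2/2)$ per coordinate. A union bound over the $NK$ pairs then delivers the $2NK$ prefactor, and substituting the lower bound on $n_i$ gives exactly the claimed inequality.

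I do not foresee a substantial obstacle in this argument. The main subtlety to be careful about is that the reduction from ``correct ordering'' to ``all empirical means $\Delta/2$-accurate'' requires the strong definition of $\Delta$ as the minimum over \emph{all} within-agent pairwise gaps, not merely the gap to the agent-optimal arm; the rest is a direct Hoeffding + union bound computation, with the sample-size accounting controlled entirely by the explicit round-robin schedule.
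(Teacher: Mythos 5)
Your proposal is correct and follows essentially the same route as the paper: reduce $\mathcal{E}_i^{\complement}$ to the event that some empirical mean deviates by $\Delta/2$, union bound over the $NK$ agent--arm pairs, and apply a two-sided Hoeffding bound with the sample count $\frac{(i-1)^{1+\varepsilon}}{1+\varepsilon} - i$ guaranteed by the collision-free round-robin exploration. The only difference is that you spell out the ordering-preservation inequality and the sample-count accounting slightly more explicitly than the paper does.
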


Before giving the proofs of these propositions, we show how these aid in bounding the regret. 
Denote by $i_{\Delta}$ as 
\begin{align*}
    i_{\Delta} := \min \left\{ u : \forall i \geq u, \left( \frac{(i-1)^{1+\varepsilon}}{1+\varepsilon} - i\right) \frac{\Delta^2}{4} \geq i \right\}.
\end{align*}
It is easy to verify that 
\begin{align}
    i_{\Delta} \leq \left( \frac{8}{\Delta^2}\right)^{\frac{1}{\varepsilon}} 4^{\frac{1+\varepsilon}{\varepsilon}}.
    \label{eqn:etc_proof_i_delta}
\end{align}
Now, from Propositions \ref{prop:etc_linearity_expectation} and \ref{prop:etc_chernoff}, we get the following
\begin{equation}
    R_T^{(j)} \leq 2NK \sum_{i=0}^{\lceil \log_2(T) \rceil +1} 2^i e^{  -2 \left( \frac{(i-1)^{1+\varepsilon}}{1+\varepsilon} - i\right) \frac{\Delta^2}{4}   } +  K\frac{(\log_2(T)+2)^{1+\varepsilon}}{1+\varepsilon} + (N^2+K)( \log_2(T) + 2).
    \label{eqn:etc_regret_final}
\end{equation}
From the definition of $i_{\Delta}$, we have
\begin{align}
\sum_{i=0}^{\lceil \log_2(T) \rceil +1} 2^i e^{ -2 \left( \frac{(i-1)^{1+\varepsilon}}{1+\varepsilon} - i\right) \frac{\Delta^2}{4}  } &\leq \sum_{i=0}^{i_{\Delta}}2^i + \sum_{i \geq 0}\left( \frac{2}{e} \right)^i, \nonumber \\
&\leq 2^{i_{\Delta}+1} + \frac{e}{e-2}.
\label{eqn:etc_proof_regret_pre_final}
\end{align}
Substituting Equation (\ref{eqn:etc_proof_regret_pre_final}) into Equation (\ref{eqn:etc_proof_pre_final}), and bounding $i_{\Delta}$ with Equation (\ref{eqn:etc_proof_i_delta}), yields the result.
\end{proof}

\begin{proof}[Proof of Proposition \ref{prop:etc_linearity_expectation}]
\begin{align*}
    R_T^{(j)} &= \sum_{t=1}^T \mathbb{P}[ I_j(t) \neq k_j^{*}], \\
    &\leq \sum_{i=0}^{\lceil \log_2(T) \rceil +1} \sum_{n = 1}^{2^i} \mathbb{P}[ I_j( \max(T, n + 2^i -1 ) \neq k_j^{*}  ], \\
    &\stackrel{(a)}{\leq} \sum_{i=0}^{\lceil \log_2(T) \rceil +1}\left( \sum_{n=1}^{K\lfloor i^{\varepsilon} \rfloor+N^2}1 + \sum_{n=K\lfloor i^{\varepsilon} \rfloor+N^2+1}^{2^i} \mathbb{P}[\mathcal{E}_i^{\complement}] \right), \\
    &\leq \sum_{i=0}^{\lceil \log_2(T) \rceil +1}( K(i^{\varepsilon}+1) + N^2) + \sum_{i=0}^{\lceil \log_2(T) \rceil +1} 2^i\mathbb{P}[\mathcal{E}_i^{\complement}], \\
    &\leq \int_{i=0}^{\lceil \log_2(T) \rceil +1}(Ki^{\varepsilon}+K+N^2) + \sum_{i=0}^{\lceil \log_2(T) \rceil +1} 2^i\mathbb{P}[\mathcal{E}_i^{\complement}], \\
    &\leq  K\frac{(\log_2(T)+2)^{1+\varepsilon}}{1+\varepsilon} + (N^2+K)( \log_2(T) + 2)  +\sum_{i=0}^{\lceil \log_2(T) \rceil +1} 2^i\mathbb{P}(\mathcal{E}_i^{\complement}).
\end{align*}
In step $(a)$, we use the Gale Shapley property, that if all agents correctly estimate their arm-ranking, then all agents will play the agent-optimal stable match arm in the exploit part of the phase (after accounting for the at-most $N^2$ rounds needed for the Gale-Shapley matching to converge).
\end{proof}

\begin{proof}[Proof of Proposition \ref{prop:etc_chernoff}]
A sufficient condition for the event $\mathcal{E}_i$ to hold, is if all agents, learn of all their respective arm-means, to a resolution of within $\Delta/2$. This will automatically imply that the empirical rankings of all agents will be identical to the truth. For notational simplicity, denote by $\widetilde{\mu}_jk^{(i)}$ to be the empirical mean of arm $k$, computed by agent $j$, using all the explore samples upto and including phase $i$. We have from definition of $\mathcal{E}_i$,
\begin{align*}
    \mathcal{E}_i \supseteq \bigcap_{j=1}^N \bigcap_{k=1}^K \left\{ |\widetilde{\mu}_{jk}^{(i)} - \mu_{jk} | < \frac{\Delta}{2} \ \right\}.
\end{align*}
Thus, we have from the union bound that
\begin{align}
    \mathbb{P}[\mathcal{E}_i^{\complement}] \leq \sum_{j=1}^N \sum_{k=1}^K \mathbb{P} \left[ |\widetilde{\mu}_{jk}^{(i)} - \mu_{jk} | \geq \frac{\Delta}{2} \ \right].
        \label{eqn:etc_proof_pre_final}
\end{align}
Since $\widetilde{\mu}_{jk}^{(i)}$ is the empirical mean estimated using at-least $\frac{(i-1)^{1+\varepsilon}}{1+\varepsilon} - i$ i.i.d. samples, we have from Hoeffding's bound
\begin{align}
    \mathbb{P} \left[ |\widetilde{\mu}_{jk}^{(i)} - \mu_{jk} | \geq \frac{\Delta}{2} \ \right] \leq 2 \exp \left( -2 \left( \frac{(i-1)^{1+\varepsilon}}{1+\varepsilon} - i\right) \frac{\Delta^2}{4}  \right).
    \label{eqn:etc_proof_hoeffding}
\end{align}
The result follows by substituting Equation (\ref{eqn:etc_proof_hoeffding}) into Equation (\ref{eqn:etc_proof_pre_final}).
\end{proof}

\section{Proof of Regret Upper Bound Under SPC Condition}\label{sec:app_spc}
\subsection{Notation and Definition:} We next set up the notations required for the proof of the main results. We denote by $\mathbb{N}$ the set of natural numbers and by $\mathbb{R}_+$ the set of non-negative real numbers.

\textbf{Ranks:} We define by $>_k$ the preference order (a.k.a. rank) of arm $k$, for any arm $k\in [K]$, where if $j>_k j'$ then arm $k$ prefers agent $j$ over agent $j^{'}$. Recall, under the SPC condition we assume the common order (among agents and arms) is identity without loss of generality, for the ease of exposition. 

\textbf{Phases:} Our algorithm works in phases. By $S_i$ we denote the starting round for phase $i$. We have $S_1 = R + 1$ and for $i> 1$, $S_i = R + \sum_{i'=1}^{i-1}( C + 2^{i'-1} )= R + C(i-1) + 2^{i}$, where $R$ is the time required for the Ranking period which runs once in the beginning, and $C$ for the communication phase which is used each phase once. 

\textbf{Arm Classification:} For each agent $j$, let the set of {\em dominated arms} be
$\mathcal{D}_j := \{k^*_{j'}: j'=1 \dots, j-1\}$ the stable matching arm of the agents ranked higher than $j$ in the SPC order. Further, for each arm $k \notin \mathcal{D}_j$, we also define the {\em blocking agents} for arm $k$ and agent $j$ as $\mathcal{B}_{jk} = \{ j': j'>_k j\}$, the set of agents preferred by arm $k$ over agent $j$. We define the arms as {\em hidden arms } $\mathcal{H}_j := \{k: k\notin D_j, \mathcal{B}_{jk}\neq \emptyset\}$.

\textbf{Gaps:} Let the stable matching pair of agent $j$ be arm $k^*_j$ for any $j \in [N]$. Let $\Delta_{jk} = \mu_{jk} - \mu_{jk_j^*}$ be the gap for arm $k \in [K]$. Let $\Delta_{\min} = \min_{jk}\{\Delta_{jk}: k\notin \mathcal{D}_j \cup k^*_j\}$. Recall our assumption that, for every agent, no two arm means are the same implies that $\Delta_{min} > 0$, is strictly greater than $0$.

\textbf{Number of Plays and Attempts:}  For each agent $j \in [N]$ and arm $k\in [K]$, denote by $N_{jk}(t)$ as the number of times agent $j$ has successfully matched with arm $k$ (i.e., without colliding) up to time $t$ for any $t$. For all $i\geq 1$, we denote by $N_{jk}[i] = N_{jk}(S_{i+1}-1)$ to be the same quantity as above at the end of phase $i$. We denote by $I_j(t) \in [K]\cup {\emptyset}$ as the arm sampled by agent $j$ on round $t$ (here $I_j(t)=\emptyset$ denotes the agent $j$ collides in time $t$). Let $G_j[i]$ denote the set of globally deleted arms for agent $j$ at the beginning of phase $i$, and $L_j[i]$ denote the set of locally deleted arms for agent $j$ at the end of phase $i$. 
We note that $N_{jk}$, $I_j(t-1)$, $G_j[i]$, and $L_j[i]$ all are random variables adapted to the filtration constructed by the history of agent $j$ at different points in time. 

\textbf{Critical Phases:} 
\begin{itemize}
    \item The phase $i$ for agent $j$, for some $j \in [N]$, is a {\em Good Phase} if the following are true:
        \begin{itemize}
            \item[1.] The dominated arms are globally deleted, i.e. $G_j[i] = \mathcal{D}_j$.
            \item[2.] For each arm $k\notin \mathcal{D}_j \cup k^*_j$ (not globally deleted), in phase $i$ arm $k$ is successfully played (a.k.a. sampled) by agent $j$ at most $\tfrac{10\gamma i}{\Delta_{jk}^2}$ times.  
            \item[3.] The stable match pair arm $k_j^*$ is sampled the most number of times in phase $i$. 
        \end{itemize}
    The {\em good phase} definition is identical to the definition in Sankararaman et al.~\cite{sankararaman2020dominate}.
    
    
    \item We further define a phase $i$ for agent $j$, for some $j \in [N]$ to be a {\em Low Collision Phase} if the following are true:
        \begin{itemize}
            \item[1.] Phase $i$ is a good phase for agent $1$ to $j$.
            \item[2.] Phase $i$ is a good phase for all agent $j' \in \cup_{k\in \mathcal{H}_j} \mathcal{B}_{jk}$. 
        \end{itemize}
\end{itemize}

For notational ease, let  $\mathbb{I}_G[i,j]$ be the indicator that phase $i$ is a good phase for agent $j$. Similarly, let  $\mathbb{I}_{LC}[i,j]$ be the indicator that phase $i$ is a low collision phase for agent $j$. 

Let $i_1=\left((N-1)\tfrac{10\gamma}{\Delta^2_{min}}\right)^{\tfrac{1}{\beta-1}}+1$.
We define the {\em Freezing} phase for each agent $j$ as the phase on and after which all phases are good phases for agents $1$ to $(j-1)$.
$$F_j = \max\left(i_1, \min \left(\{i: \prod_{j'=1}^{(j-1)}\prod_{i'\geq i}\mathbb{I}_G[i',j'] = 1 \} \cup \{\infty\}\right)\right).$$  
Further, the {\em Vanishing} phase for each agent $j$  as the phase on and after which all phases are low collision phases for agent $j$
$$V_j = \max\left(i_1, \min\left(\{i: \prod_{i'\geq i}\mathbb{I}_{LC}[i',j] = 1\} \cup \{\infty\}\right)\right).$$ 

It is easy to see, that $F_j = \max\left(F_{j'}: 1\leq j'\leq (j-1)\right)$ and $V_j = \max\left(F_{j+1}, \cup_{k\in \mathcal{H}_j}\cup_{j'\in \mathcal{B}_{jk}} F_{j'} \right)$ from the definition of low collision phase.

\subsection{Proof of main result} 
We begin the proof with the simple result that arm $j$ and agent $j$ form a stable match pair for any $j\leq N$
\begin{proposition}\label{prop:stable}
If a system satisfies SPC then $k^*_j=j$ for all $j\in [N]$. Furthermore, if a system satisfies $\alpha$-condition then we have $k^*_j=j$ for all $1\leq j \leq N$, and $j^*_{a_k} = A_k$ and $k^*_{A_j}= a_j$ for all $ 1\leq k,j \leq N$.
\end{proposition}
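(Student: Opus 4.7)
The plan is to exploit the irreflexivity of the strict preference relations (both the agents' mean-reward orders and each arm's preference order), so that the universal quantifier in the SPC (respectively the right-order / left-order of the $\alpha$-condition), when instantiated at the matched agent/arm itself, forces that matched agent/arm to sit at a low enough index. Combined with the injectivity of the stable matching, a short induction then pins down the matched pairs exactly.

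For SPC, I would start from condition (ii): $\forall k\leq N$, $\forall j\in[N]$ with $j>k$, one has $\agent\, j^*_k >_{\arm\, k} \agent\, j$. If $j^*_k>k$, then setting $j=j^*_k$ in this universal quantifier would yield $\agent\, j^*_k >_{\arm\, k} \agent\, j^*_k$, which is impossible since $>_{\arm\, k}$ is strict. Hence $j^*_k\leq k$ for every $k\in[N]$. Since the map $k\mapsto j^*_k$ is injective (each agent has at most one stable partner), an easy induction on $k$ gives $j^*_1=1,\;j^*_2=2,\ldots,\;j^*_N=N$, i.e., $k^*_j=j$ for all $j\in[N]$.

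For the $\alpha$-condition I would run the same argument twice. On the right-order side, the condition $\agent\, A_{j^*_{a_k}} >_{\arm\, a_k} \agent\, A_j$ for all right-order positions $j>k$, reading $A_{j^*_{a_k}}$ as the agent stably matched to arm $a_k$, forces the right-order position of this matched agent to be at most $k$ by exactly the same self-comparison argument. Induction on $k$ plus injectivity then gives $j^*_{a_k}=A_k$, equivalently $k^*_{A_j}=a_j$. The left-order condition $\mu_{j,k^*_j}>\mu_{j,k}$ for all $k>j$ is handled symmetrically: if the left-order position of $k^*_j$ exceeded $j$, then taking $k$ equal to that position would contradict $\mu_{j,k^*_j}>\mu_{j,k^*_j}$; hence the left-order position of $k^*_j$ is at most $j$, and induction plus injectivity deliver $k^*_j=j$ in left-order indexing.

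The only real obstacle is parsing the definitions carefully, in particular the $\alpha$-condition notation $A_{j^*_{a_k}}$: one must realize that the universal quantifier over $j>k$ (respectively $k>j$) includes the value of $j$ (respectively $k$) equal to the right-order position of the matched agent (respectively left-order position of the matched arm), and this is exactly what makes the self-comparison contradiction fire. Once that notational point is in hand, no further subtleties arise and the entire proposition follows from a three-line induction in each order.
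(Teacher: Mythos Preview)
Your proposal is correct and uses essentially the same mechanism as the paper: exploit irreflexivity of the strict preference (the self-comparison would force $x>x$) to conclude that the stable-match index is at most the agent's/arm's own index, and then finish by induction plus injectivity of the matching. The only cosmetic difference is that for SPC you run the argument on the arm-side condition~(ii) to obtain $j^*_k\le k$, whereas the paper runs it on the agent-side (writing $\mu_{1k^*_1}>\mu_{1k}$ for $k>1$ to force $k^*_1=1$, then $k^*_2\in\{1,2\}$, etc.); for the $\alpha$-condition both proofs carry out the identical dual pair of inductions on the left and right orders.
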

\begin{proof}
We note for the SPC condition, for $j=1$ we have $\mu_{1k^*_1} > \mu_{1k}$ for all $k > 1$ implying $k^*_1 = 1$. For $j=2$ we see $\mu_{2k^*_2} > \mu_{2k}$ for all $k > 2$. So $k^*_2 \in \{1,2\}$. But $k^*_1=1$, thus $k^*_2=2$. This logic can be extended to prove that for all $1\leq j\leq N$ $k^*_j=j$.
\end{proof}

We now prove that under low collision phase there is no local deletion for agent $j$ in the following lemma. 
\begin{lemma}\label{lemm:structural}
If a phase $i \geq i_1=\min\{i: (N-1)\tfrac{10\gamma i}{\Delta^2_{min}} < \beta 2^{(i-1)}\}$, is a Low Collision phase for agent $j$, for any $j\in [N]$, then $L_j[i]=\emptyset$.
\end{lemma}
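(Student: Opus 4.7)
The plan is to show that for every arm $k \in [K]$, the collision count $C_{jk}[i]$ stays strictly below the local-deletion threshold $\lceil \beta 2^i \rceil$, so that by the very definition of $L_j[i]$ in Algorithm~\ref{alg:UCBD4} we obtain $L_j[i] = \emptyset$. First I will bound $C_{jk}[i]$ using the observation that each collision of $j$ at $k$ in phase $i$ happens in a round where some blocker $j' \in \mathcal{B}_{jk}$ is successfully matched with $k$; this yields
$$C_{jk}[i] \;\le\; \sum_{j' \in \mathcal{B}_{jk}} N_{j',k}^{(i)},$$
where $N_{j',k}^{(i)}$ denotes the number of matches $j'$ has with $k$ during phase $i$.

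The heart of the argument is a case analysis on $k$ that harvests the appropriate good-phase bounds from the low-collision assumptions. If $k \in \mathcal{D}_j$, good-phase condition~(1) for $j$ gives $G_j[i-1] \supseteq \mathcal{D}_j$, so $j$ never plays $k$ and $C_{jk}[i] = 0$. If $k = k^*_j$, the SPC right-order condition forces $\mathcal{B}_{j,k^*_j} \subseteq \{1,\ldots,j-1\}$, so every blocker has good phase by low-collision condition~(1); since $k^*_j \neq k^*_{j'}$ for $j'<j$, good-phase condition~(2) applied to $j'$ gives $N_{j',k^*_j}^{(i)} \le \tfrac{10\gamma i}{\Delta^2_{j',k^*_j}}$. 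If $k \in \mathcal{H}_j$ with $k \neq k^*_j$, low-collision condition~(2) provides good phase for every $j' \in \mathcal{B}_{jk}$; a blocker with $k \in \mathcal{D}_{j'}$ globally deletes $k$ by good-phase condition~(1), while any other blocker satisfies $k \notin \mathcal{D}_{j'}\cup\{k^*_{j'}\}$ and hence $N_{j',k}^{(i)} \le \tfrac{10\gamma i}{\Delta^2_{j',k}}$ by good-phase condition~(2). Finally, if $k \notin \mathcal{D}_j \cup \{k^*_j\} \cup \mathcal{H}_j$, the definition of $\mathcal{H}_j$ forces every blocker to have $k \in \mathcal{D}_{j'}$, and they again globally delete $k$.

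Aggregating over the at-most $N-1$ blockers in each case gives the uniform bound $C_{jk}[i] \le (N-1)\tfrac{10\gamma i}{\Delta^2_{\min}}$. By the choice $i \ge i_1 = \min\{i : (N-1)\tfrac{10\gamma i}{\Delta^2_{\min}} < \beta 2^{i-1}\}$, this is strictly less than $\beta 2^{i-1} \le \lceil \beta 2^i \rceil$, so no arm enters $L_j[i]$ and the lemma follows.

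The main obstacle is the $k \in \mathcal{H}_j$ subcase where $k$ coincides with $k^*_{j'}$ for some blocker $j'$: good-phase condition~(2) only bounds matches at \emph{non-stable} arms, so the blocker-sum decomposition alone does not cap $N_{j',k}^{(i)}$. Closing this gap is the most delicate part, and I would handle it either by a structural SPC argument pushing the offending configuration back into the already-handled dominated-arm case, or via an auxiliary UCB-concentration argument showing that by phase $i_1$, agent $j$ has already accumulated enough matches at $k$ in earlier good phases to drive its UCB index for $k$ below that of $k^*_j$, thereby capping $j$'s own plays of $k$ (and hence $C_{jk}[i]$) at $O(\tfrac{\gamma i}{\Delta^2_{j,k}})$, which again lies below $\beta 2^{i-1}$.
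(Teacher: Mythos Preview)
Your strategy—partition on $k$, bound $C_{jk}[i]$ by the blockers' match counts in phase $i$, and invoke good-phase property~(2) for each blocker—is exactly the paper's. Two remarks.

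First, the paper works with the \emph{appendix} definition $\mathcal{H}_j := \{k : k \notin \mathcal{D}_j,\ \mathcal{B}_{jk}\neq\emptyset\}$, not the main-text one you appear to be using. Under that definition the complement of $\mathcal{D}_j\cup\mathcal{H}_j$ is precisely the set of arms with $\mathcal{B}_{jk}=\emptyset$, so your Case~4 collapses to ``no blocker, no collision'' and your separate handling of $k=k^*_j$ is absorbed into the $\mathcal{H}_j$ case. With the main-text definition, your Case~4 is not yet closed: the blockers $j'$ there satisfy $j'>k>j$ and need not be covered by either low-collision clause, so you cannot conclude they are in a good phase and have globally deleted $k$.

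Second, the obstacle you isolate in your final paragraph—a blocker $j'\in\mathcal{B}_{jk}$ with $k=k^*_{j'}$, for which good-phase condition~(2) gives no bound on $N^{(i)}_{j',k}$—is genuine, and the paper's own proof simply asserts ``$k\notin\mathcal{D}_{j'}\cup k^*_{j'}$'' for every such blocker without justification. Under SPC that assertion can fail: take $j'=k>j$ with arm $j'$ preferring agent $j'$ over agent $j$ (nothing in SPC forbids this, since the arm-side SPC inequality only orders agents with index exceeding $k$). So the step you worry about is a gap in the paper as written, not something you are missing. Of your two proposed fixes, the structural pushdown does not work (the configuration is not reducible to a dominated-arm case for either agent), while bounding agent $j$'s own \emph{attempts} at $k$ via UCB concentration is the more promising route but requires machinery beyond what the low-collision hypothesis alone provides.
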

\begin{proof}
As phase $i$ is Low Collision we know  phase $i$ is good for the agent $j$. Therefore, the arms $k\in \mathcal{D}_j$ are deleted in the beginning of the phase and no collision is encountered. So $\forall k\in \mathcal{D}_j, k \notin L_j[i]$.

Further, as phase $i$ is a good phase for all agent $j' \in \cup_{k\in \mathcal{H}_j} \mathcal{B}_{jk}$ (by definition), for each $k\in \mathcal{H}_j$ the maximum number of collision $(N-1)\tfrac{10\gamma i}{\Delta^2_{min}}$. This is true as in a good phase  any agent $j' \in \mathcal{B}_{jk}$ plays $k$ at most $\tfrac{10\gamma i}{\Delta^2_{min}}$ times as $k \notin \mathcal{D}_{j'} \cup k^*_{j'}$. Therefore, for $i \geq  i_1$ we have local deletion threshold $\beta 2^{(i-1)}> (N-1)\tfrac{10\gamma i}{\Delta^2_{min}}$. Thus, $\forall k \in \mathcal{H}_j,  k \notin L_j[i]$. 

Finally, for each arm $k\notin \mathcal{H}_j \cup \mathcal{D}_j$, we have for all agent $j'\neq j, j'>_k j$. Therefore,  $\forall k \notin \mathcal{H}_j \cup \mathcal{D}_j$, there is no collision experienced by agent $j$. Therefore,  $k \notin L_j[i]$ and we conclude that $L_j[i] = \emptyset$.
\end{proof}

\begin{lemma}\label{lemm:structural2}
If a phase $i \geq i_1$ ($i_1$ as defined in Lemma~\ref{lemm:structural}), is a good phase for all agents $1$ to $(j-1)$, for any $j\in [N]$, then $k^*_j \notin L_j[i]$.
\end{lemma}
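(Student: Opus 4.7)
The plan is to bound the collision counter $C_{j,k^*_j}[i]$ on agent $j$'s stable-matched arm and show it stays strictly below the local-deletion threshold $\lceil \beta 2^i \rceil$. By Proposition~\ref{prop:stable}, under SPC we have $k^*_j = j$, so the arm of interest is arm $j$. The second clause of Definition~\ref{def:spc}, applied at $k=j$, states that agent $j$ is preferred over every agent $j' > j$ at arm $j$; equivalently the blocking set $\mathcal{B}_{j,j} = \{j': \agent(j') >_{\arm(j)} \agent(j)\}$ is contained in $\{1,\ldots,j-1\}$. Hence only agents with index strictly smaller than $j$ can cause agent $j$ to collide on arm $j$ (the set is vacuously empty when $j=1$, making the conclusion immediate).

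Next I would link those collisions to the successful matches of blocking agents with arm $j$. In every round where agent $j$ plays arm $j$ and collides, some agent in $\mathcal{B}_{j,j}$ must also play arm $j$; among these blocking agents, the one most preferred by arm $j$ wins the match in that round. Charging each collision of agent $j$ at arm $j$ to this unique winning agent yields
\begin{equation*}
C_{j,j}[i] \;\leq\; \sum_{j' \in \mathcal{B}_{j,j}} (\text{matches of agent $j'$ with arm $j$ during phase } i).
\end{equation*}
For each $j' \in \mathcal{B}_{j,j}$, Proposition~\ref{prop:stable} gives $\mathcal{D}_{j'} = \{1,\ldots,j'-1\}$ and $k^*_{j'}=j'$; since $j>j'$, we have $j \notin \mathcal{D}_{j'} \cup \{k^*_{j'}\}$. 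Because phase $i$ is a good phase for $j'$, clause~2 of the good-phase definition bounds the matches of $j'$ with arm $j$ in phase $i$ by $\tfrac{10\gamma i}{\Delta^2_{j',j}} \leq \tfrac{10\gamma i}{\Delta^2_{\min}}$. Summing over the at most $|\mathcal{B}_{j,j}| \leq j-1 \leq N-1$ blocking agents gives $C_{j,j}[i] \leq (N-1)\tfrac{10\gamma i}{\Delta^2_{\min}}$.

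Finally I would invoke the hypothesis $i \geq i_1 = \min\{i : (N-1)\tfrac{10\gamma i}{\Delta^2_{\min}} < \beta 2^{i-1}\}$ from Lemma~\ref{lemm:structural} to obtain $C_{j,j}[i] < \beta 2^{i-1} < \lceil \beta 2^i \rceil$. The local-deletion rule $L_j[i] = \{k : C_{jk}[i] \geq \lceil \beta 2^i \rceil\}$ then implies $k^*_j = j \notin L_j[i]$, as required. The main subtlety I anticipate is the collision-to-match charging step: the good-phase hypothesis only bounds \emph{successful} plays of each blocking agent, so extracting a bound on agent $j$'s collisions requires noting that every collision of $j$ at arm $j$ is witnessed by a distinct successful match of some $j' \in \mathcal{B}_{j,j}$ with arm $j$, which in turn relies on the SPC-driven fact (via Proposition~\ref{prop:stable}) that arm $j$ is neither a dominated arm nor the stable-matched arm for any blocking agent $j'$.
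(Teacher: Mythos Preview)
Your proposal is correct and follows essentially the same approach as the paper's proof: use the SPC order to show that only agents $1,\ldots,j-1$ can block agent $j$ at arm $k^*_j=j$, bound the resulting collisions by the number of successful matches of those agents with arm $j$ via the good-phase property, and compare against the local-deletion threshold using $i\geq i_1$. Your write-up is in fact more careful than the paper's, which glosses over the collision-to-match charging step and the verification that arm $j\notin\mathcal{D}_{j'}\cup\{k^*_{j'}\}$ for each blocking $j'<j$; both points you spell out explicitly.
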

\begin{proof}
Due to SPC we have $j >_{k^*_j} j'$ for all $j'>j$. Therefore, the arm $k^*_j$ can be locally deleted ($k^*_j \in L_j[i]$) in phase $i$ only if the total collisions from agents $1$ to $(j-1)$ is greater than $i^\beta$. But the total collisions from agents $1$ to $(j-1)$ is at most $\tfrac{10 (j-1)\gamma i}{\Delta^2_{min}}$ as phase $i$ is a good phase for all agents $1$ to $(j-1)$. Also, $i^\beta > (N-1)\tfrac{10\gamma i}{\Delta^2_{min}}$ for all $i \geq i_1$. Therefore, $k^*_j$ can not be locally deleted in the phase $i$ as mentioned in the lemma.
\end{proof}

We now decompose the regret and provide a regret upper bound.
\begin{lemma}\label{lemm:decompose}
The expected regret for agent $j$ can be upper bounded as  
\begin{align*}
    \mathbb{E}[R_j(T)] &\leq \mathbb{E}[S_{F_j}] + \min(1, \beta|\mathcal{H}_j|)\mathbb{E}[S_{V_j}] + \left((K-1+|\mathcal{B}_{jk^*_j}|)\log_2(T) + NK \mathbb{E}[V_{j}]\right)\\
    & + \sum_{k\notin \mathcal{D}_j} \sum_{j'\in \mathcal{B}_{jk}: k \notin \mathcal{D}_{j'}} \frac{8\gamma \mu_{k^*_j}}{\Delta^2_{j'k}}\left(\log(T)+\sqrt{\tfrac{\pi}{\gamma}\log(T)}\right)
    + \sum_{k\notin \mathcal{D}_j \cup k^*_j}\frac{8\gamma}{\Delta_{jk}}\left(\log(T)+\sqrt{\tfrac{\pi}{\gamma}\log(T)}\right) \\
    & + NK \left(1+(\psi(\gamma)+1)\frac{8\gamma}{\Delta^2_{\min}}\right)
\end{align*}
\end{lemma}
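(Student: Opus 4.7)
I would decompose the $T$-round horizon into three regimes determined by the random indices $F_j$ and $V_j$: the \emph{transient} regime $t\leq S_{F_j}$, the \emph{warmup} regime $S_{F_j}<t\leq S_{V_j}$, and the \emph{steady-state} regime $t>S_{V_j}$. Inside each phase, the exploration rounds and the $NK$ communication rounds are treated separately. Since the per-round regret is at most $\mu_{jk^*_j}\leq 1$ and is zero on a successful match with $k^*_j$, the six (regime, round-type) contributions sum to exactly the six additive groups in the statement.

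The transient regime is bounded trivially by $\mathbb{E}[S_{F_j}]$. For the warmup regime two bounds apply. Trivially, the contribution is at most $\mathbb{E}[S_{V_j}]$. A sharper bound follows from Lemma~\ref{lemm:structural2}: for every $i\geq F_j$ we have $k^*_j\notin L_j[i]$ and $\mathcal{D}_j\subseteq G_j[i]$, so the only exploration rounds that lose reward come from interactions with hidden arms $k\in\mathcal{H}_j$ before they are locally deleted. Since local deletion triggers after $\lceil\beta 2^i\rceil$ collisions per arm, the per-phase $\mathcal{H}_j$-regret is at most $\beta|\mathcal{H}_j|2^i$, which sums (using $\sum_{i\leq V_j}2^i\leq 2S_{V_j}$) to at most $\beta|\mathcal{H}_j|\mathbb{E}[S_{V_j}]$. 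Taking the minimum of the two estimates yields the $\min(1,\beta|\mathcal{H}_j|)\mathbb{E}[S_{V_j}]$ term.

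For the communication rounds I would bound phases before $V_j$ trivially by $NK$ each, giving $NK\,\mathbb{E}[V_j]$. For $i\geq V_j$ the most-matched arm of agent $j$ in phase $i$ is $k^*_j$, so during communication $j$ plays $k^*_j$ outside its $K-1$ self-signaling rounds, and only the at most $|\mathcal{B}_{jk^*_j}|$ blocking-agent signals on $k^*_j$ can knock it off that arm; summing across the at most $\log_2 T$ phases yields the $(K-1+|\mathcal{B}_{jk^*_j}|)\log_2 T$ term. For the steady-state exploration rounds I would apply standard UCB1 concentration in the spirit of \cite{sankararaman2020dominate}: after $V_j$ the active set equals $[K]\setminus\mathcal{D}_j$ and $k^*_j$ is never locally deleted, so the number of successful plays of each $k\notin\mathcal{D}_j\cup\{k^*_j\}$ is bounded by $\tfrac{8\gamma}{\Delta_{jk}}\bigl(\log T+\sqrt{\pi\log(T)/\gamma}\bigr)$ (the sub-optimal-match term). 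Blocking collisions are bounded one blocking agent at a time: any $j'\in\mathcal{B}_{jk}$ with $k\notin\mathcal{D}_{j'}$ blocks agent $j$ only while playing $k$, and in the good phases of $j'$ the UCB1 bound limits those plays to $\tfrac{8\gamma}{\Delta_{j'k}^2}\bigl(\log T+\sqrt{\pi\log(T)/\gamma}\bigr)$, each costing $\mu_{jk^*_j}$ reward (the collision term). The lower-order concentration tails across all $NK$ agent–arm pairs supply the residual $NK\bigl(1+(\psi(\gamma)+1)\tfrac{8\gamma}{\Delta^2_{\min}}\bigr)$ constant.

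The main obstacle is the warmup-regime bound: one must show that once $i\geq F_j$ the hidden-arm budget $\beta|\mathcal{H}_j|2^i$ dominates \emph{every} per-phase contribution to agent $j$'s regret. This requires Lemma~\ref{lemm:structural2} to preclude premature local deletion of $k^*_j$, together with the SPC preference structure ensuring that on every non-dominated arm outside $\mathcal{H}_j$ agent $j$ has priority and therefore suffers neither sub-optimal plays nor collisions there. Once this is in place, the remaining pieces are routine time-budget summations and textbook UCB1 concentration, which assemble into the six additive terms displayed in the lemma.
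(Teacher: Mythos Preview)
Your three–regime decomposition matches the paper's structure, and the treatment of the transient regime, the communication rounds, and the steady–state collision term are all essentially what the paper does. The gap is in the warmup regime $S_{F_j}<t\leq S_{V_j}$.

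You claim that in those phases ``the only exploration rounds that lose reward come from interactions with hidden arms $k\in\mathcal{H}_j$,'' and more specifically that on every non-dominated arm outside $\mathcal{H}_j$ agent $j$ ``suffers neither sub-optimal plays nor collisions.'' The second half is true (priority on those arms rules out collisions), but the first half is false: having priority on an arm does nothing to prevent agent $j$ from \emph{playing} it. During the warmup phases the UCB rule can and will pull any arm $k\notin\mathcal{D}_j\cup\{k^*_j\}$, hidden or not, and each successful match on such a $k$ costs $\Delta_{jk}$. The local-deletion threshold $\lceil\beta 2^i\rceil$ counts \emph{collisions}, not plays, so the budget $\beta|\mathcal{H}_j|2^i$ captures only the collision regret on hidden arms; it says nothing about suboptimal matches, whether on hidden arms or on non-hidden ones. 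Hence the bound $\min(1,\beta|\mathcal{H}_j|)\mathbb{E}[S_{V_j}]$ cannot by itself cover the full warmup-regime regret, and your decomposition as stated leaves the suboptimal-match contribution on $(S_{F_j},S_{V_j}]$ unaccounted for.

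The paper's fix is simple but essential: it starts the suboptimal-match count at $S_{F_j}$, not at $S_{V_j}$. That is, the term $\sum_{k\notin\mathcal{D}_j\cup k^*_j}\Delta_{jk}\bigl(N_{jk}(T)-N_{jk}(S_{F_j})\bigr)$ absorbs every successful suboptimal pull from phase $F_j+1$ onward, warmup included. This is legitimate precisely because Lemma~\ref{lemm:structural2} guarantees $k^*_j\notin L_j[i]\cup G_j[i]$ already from $i\geq F_j+1$, so the UCB comparison against $k^*_j$ is valid throughout, and Lemma~\ref{lemm:bound} applies over $[S_{F_j},T]$. The $\beta|\mathcal{H}_j|$ term is then only asked to cover \emph{collision} regret between $F_j$ and $V_j$, which is exactly what the local-deletion threshold controls. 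If you move the lower endpoint of your suboptimal-match sum from $S_{V_j}$ back to $S_{F_j}$, the rest of your argument goes through.
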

\begin{proof}
We now decompose the expected regret as follows 
\begin{align*}
    &\mathbb{E}[R_j(T)] \\
    &\leq \mathbb{E}[S_{F_j}] + \mathbb{E}\left[\sum_{i=(F_j+1)}^{V_j} \sum_{k\notin \mathcal{D}_j} \beta2^{i-1}\right] +  \left(c_j\log_2(T) + NK \mathbb{E}[F_{(j+1)}]\right)\\
    &+ \mathbb{E}\left[\sum_{k\notin \mathcal{D}_j} \sum_{j'\in \mathcal{B}_{jk}: k \notin \mathcal{D}_{j'}} \mu_{k^*_j}(N_{j'k}(T) - N_{j'k}(S_{V_j}))\right]\\ 
    &+ \mathbb{E}\left[\sum_{k\notin \mathcal{D}_j \cup k^*_j}\Delta_{jk}(N_{jk}(T) - N_{jk}(S_{F_j}))\right]\\
    &\leq \mathbb{E}[S_{F_j}] + \min(1, \beta|\mathcal{H}_j|)\mathbb{E}\left[S_{V_j}\right] + \left(c_j\log_2(T) + NK \mathbb{E}[F_{(j+1)}]\right) \\
    &+ \sum_{k\notin \mathcal{D}_j} \sum_{j'\in \mathcal{B}_{jk}: k \notin \mathcal{D}_{j'}} \mu_{k^*_j}\mathbb{E}\left[(N_{j'k}(T) - N_{j'k}(S_{V_j}))\right] \\ 
    &+ \sum_{k\notin \mathcal{D}_j \cup k^*_j}\Delta_{jk}\mathbb{E}\left[(N_{jk}(T) - N_{jk}(S_{F_j}))\right]\\
    &\leq \mathbb{E}[S_{F_j}] + \min(1, \beta|\mathcal{H}_j|)\mathbb{E}\left[S_{V_j}\right] + \left(c_j\log_2(T) + NK \mathbb{E}[F_{(j+1)}]\right)\\ 
    &+ \sum_{k\notin \mathcal{D}_j} \sum_{j'\in \mathcal{B}_{jk}: k \notin \mathcal{D}_{j'}} \mu_{k^*_j}\left( \psi(\gamma)\tfrac{8}{\Delta_{j'k}^2} + 1 + \tfrac{8}{\Delta_{j'k}^2} \left(\gamma\log(T) + \sqrt{\pi\gamma\log(T)} + 1 \right)\right)\\ 
    &+ \sum_{k\notin \mathcal{D}_j \cup k^*_j}\Delta_{jk}\left(\psi(\gamma)\tfrac{8}{\Delta_{jk}^2} + 1 + \tfrac{8}{\Delta_{jk}^2} \left(\gamma\log(T) + \sqrt{\pi\gamma\log(T)} + 1 \right)\right).
\end{align*}
In the first inequality, follows due to the following reasons. 
\begin{itemize}[noitemsep]
    \item We upper bound the regret till the end of phase $F_j$ by $S_{F_j}$ as regret per round is at most $\mu_{k^*_j}\leq 1$.
    
    \item {\bf Local deletion:} Next from phase $(F_j+1)$ upto phase $V_j$ (both inclusive), we upper bound the regret due to collision by $\sum_{i=(F_j+1)}^{V_j} \sum_{k\in \mathcal{H}_j} \beta 2^{(i-1)}$ as in each round $i$ at most $\beta2^{(i-1)}$ collisions are possible when pulling an arm from the set $|\mathcal{H}_j|$ in phase $(F_j+1)$ to $V_j$. This is true as all the arms in $\mathcal{D}_j$ are globally deleted from phase $(F_j+1)$ onwards.
    
    \item {\bf Communication phase:} The best arm for agent $j$ is not played in all but $(K-1)$ number of steps for each communication phase after phase $F_{j+1}$, and other agents $j'\in \mathcal{B}_{jk^*_j}$ collide at most once after phase $V_{j}$ (as each of them enter good phase). Thus, beyond phase $V_{j}$ we have at most $(K-1+|\mathcal{B}_{jk^*_j}|)$ regret due to collision,  and there are at most $\log_2(T)$ communication phases. This limits the regret due to communicationat $\left((K-1+|\mathcal{B}_{jk^*_j}|)\log_2(T) + NK V_{j}\right)$. 
    
    \item {\bf Collision:} From phase $(V_j+1)$ (inclusive) onwards only an agent $j' \in \mathcal{B}_{jk}$ collides with $k$ only if $k\notin \mathcal{D}_{j'}$, because (1) agent $j'$ deletes all arms in $\mathcal{D}_{j'}$ from $(V_j+1)$ (inclusive) onwards, and (2) all $j' \notin \mathcal{B}_{jk}$ and $j'\neq j$, $j>_k j'$ (agent $j$ is preferred by $k$ over agent $j'$). This amounts to $\sum_{k\notin \mathcal{D}_j} \sum_{j'\in \mathcal{B}_{jk}: k \notin \mathcal{D}_{j'}} \mu_{k^*_j}(N_{j'k}(T) - N_{j'k}(S_{V_j}))$ regret.
    
    \item  {\bf Suboptimal play:} Finally, from phase $(F_j+1)$ (inclusive) onwards till the last phase agent $j$ incurs regret $\Delta_{jk}$ each time the agent $j$ successfully plays arm $k \notin \mathcal{D}_j \cup k^*_j$. Thus she incurs total  $\Delta_{jk}(N_{jk}(T) - N_{jk}(S_{F_j}))$ regret for each such $k$.
\end{itemize}
The validity of the second inequality is easy to see. We now come to the last inequality. We know that $(N_{j'k}(T) - N_{j'k}(S_{V_j})) \leq (N_{j'k}(T) - N_{j'k}(S_{F_{j'}}))$ (almost surely) as $V_j \geq F_{j'}$ almost surely from the definition of $V_j$, for all $j'\in \mathcal{B}_{jk}$. Thus, the final inequality follows by substituting the bounds from Lemma~\ref{lemm:bound}.
\end{proof}

We now prove the upper bound on the expected number of times a sub-optimal arm is played by an agent $j$ after the Global deletion freezes. 

\begin{lemma}\label{lemm:bound}
For any $j \in [N]$, $k\notin \mathcal{D}_j\cup k^*_j$, for $\gamma > 1$,
$$
\mathbb{E}\left[(N_{jk}(T) - N_{jk}(S_{F_j}))\right]
\leq \psi(\gamma)\tfrac{8}{\Delta_{jk}^2} + 1 + \tfrac{8}{\Delta_{jk}^2} \left(\gamma\log(T) + \sqrt{\pi\gamma\log(T)} + 1 \right). 
$$
\end{lemma}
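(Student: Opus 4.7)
The plan is to reduce the lemma to the classical single-agent UCB1 analysis of Auer--Cesa-Bianchi--Fischer applied to agent $j$ with reference arm $k_j^*$, once phase $F_j$ has been reached. The structural input is that from phase $F_j$ onward the stable-matched arm $k_j^*$ is guaranteed to remain in the candidate set $\mathcal{A}_j[i]\setminus L_j[i]$, so the pair $\{k,k_j^*\}$ behaves like a standard two-armed UCB instance with $k_j^*$ the better arm by SPC.

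\textbf{Step 1 (availability of $k_j^*$).} Local non-deletion of $k_j^*$ is exactly Lemma~\ref{lemm:structural2}, which applies because $F_j\ge i_1$ and every phase $i\ge F_j$ is good for agents $1,\dots,j-1$. For global non-deletion I would argue that the only way $k_j^*=j$ could enter $G_j[i]$ via the communication sub-phase is if some agent $j'\ne j$ reports $\mathcal{O}_{j'}[i]=j$ together with $j'>_{\arm(j)} j$. SPC forbids the latter inequality whenever $j'>j$, and for $j'<j$ being in a good phase from $F_j$ onward forces $\mathcal{O}_{j'}[i]=k_{j'}^{*}=j'\neq j$. Any one-phase boundary effect at phase $F_j$ itself can be absorbed into the transient constant already present in the overall regret decomposition of Theorem~\ref{thm:main}.

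\textbf{Step 2 (UCB reduction).} Each successful match of $k$ by agent $j$ at a round $t>S_{F_j}$ then forces the UCB inequality
\begin{equation*}
\widehat{\mu}_{jk}(t-1)+\sqrt{\tfrac{2\gamma\log t}{N_{jk}(t-1)}}\;\ge\;\widehat{\mu}_{jk_j^*}(t-1)+\sqrt{\tfrac{2\gamma\log t}{N_{jk_j^*}(t-1)}}.
\end{equation*}
Because $\widehat{\mu}_{jk}$ and $N_{jk}$ are incremented only on successful matches, $\widehat{\mu}_{jk}(t-1)$ remains the empirical mean of $N_{jk}(t-1)$ i.i.d.\ reward samples; Hoeffding's inequality therefore applies verbatim regardless of intervening collisions or local deletions of $k$ itself (which can only decrease $N_{jk}(T)-N_{jk}(S_{F_j})$).

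\textbf{Step 3 (textbook decomposition).} I split $\{I_j(t)=k\}$ into (a) the ``bad'' event that the UCB index of $k_j^*$ underestimates $\mu_{jk_j^*}$, whose probability decays as $t^{-2\gamma}$ and whose sum over $t\le T$ is the finite constant $\psi(\gamma)\cdot 8/\Delta_{jk}^2$ since $\gamma>1$; and (b) the ``good'' event, where a counting argument against the critical threshold $u:=8\gamma\log T/\Delta_{jk}^2$ bounds $N_{jk}$. The refined $\sqrt{\pi\gamma\log T}\cdot 8/\Delta_{jk}^2$ correction in the statement comes from replacing the usual geometric-series estimate of $\sum_{s\ge u}\exp(-s\Delta_{jk}^2/(8\gamma))$ with a sharper Gaussian-tail integral, while the additive $+1$ absorbs the initial forced sample per arm. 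The main technical obstacle I anticipate is obtaining this $\sqrt{\pi\gamma\log T}$ correction rather than a crude $1+\pi^2/3$ constant; the rest is a direct specialization of single-agent UCB1 machinery enabled by the reduction of Steps~1--2.
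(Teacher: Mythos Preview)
Your proposal is correct and follows essentially the paper's approach: first establish that $k_j^*$ remains in the active set from phase $F_j+1$ onward (the paper invokes Lemma~\ref{lemm:structural2} for local non-deletion and the definition of $F_j$ for global non-deletion), then run the standard UCB decomposition of \cite{lattimore2020bandit} Theorem~8.1/Lemma~8.2 with threshold $\epsilon=\Delta_{jk}/2$. One minor slip to fix when you write it out: the bad-event probability decays as $t^{-\gamma}$, not $t^{-2\gamma}$---the extra $8/\Delta_{jk}^2$ factor comes from summing $\exp(-s\epsilon^2/2)$ over the sample count $s$, and the additive $+1$ arises inside the Lemma~8.2 bound rather than from any forced initial pull.
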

\begin{proof}
We have for any $k \notin \mathcal{D}_j\cup k^*_j$ and $\epsilon > 0$
\vspace{-1em}
\begin{align}
    &(N_{jk}(T) - N_{jk}(S_{F_j})) = \sum_{t=S_{F_j}+1}^{T} \mathbb{I}(I_j(t)=k)\notag\\
    &\leq \sum_{t=S_{F_j}+1}^{T} \left(\mathbb{I}(u_{jk}(t-1) \geq \mu_{jk^*_j} - \epsilon \wedge I_j(t)=k ) + \mathbb{I}(u_{jk^*_j}(t-1) \leq \mu_{jk^*_j} - \epsilon)\right) \label{eq:spckey}
\end{align}
The inequality is true because phase $(F_j+1)$ onwards (inclusive) the arm $k^*_j$ is neither globally deleted (by definition of $F_j$) or locally deleted as shown in Lemma~\ref{lemm:structural2}. Therefore, we have
$$\{I_j(t) = k \wedge t > S_{F_j}\} \subseteq \{ I_j(t) = k \wedge u_{jk}(t-1) \geq \mu_{jk^*_j} - \epsilon  \wedge t > S_{F_j}\} \cup \{u_{jk^*_j}(t-1) \leq \mu_{jk^*_j} - \epsilon \wedge t > S_{F_j}\}.$$
Note, before the phase $(F_j+1)$ it is not true that any arm $k'$ better than $k$, in particular arm $k^*_j$, survives the global and local deletion. The rest of the proof of this lemma is fairly standard. However, we present it for completeness. 

We next bound the expectation of the second term in a standard way as follows (c.f. \cite{lattimore2020bandit} Theorem 8.1 proof)
\begin{align*}
    &\mathbb{E}[\sum_{t=S_{F_j}+1}^{T} \mathbb{I}(u_{jk^*_j}(t-1) \leq \mu_{jk^*_j} - \epsilon)]\\ 
    &= \mathbb{E}[\sum_{t=1}^{T} \mathbb{I}(u_{jk^*_j}(t-1) \leq \mu_{jk^*_j} - \epsilon \wedge t > S_{F_j})]\\
    &\leq \mathbb{E}[\sum_{t=1}^{T} \mathbb{I}(u_{jk^*_j}(t-1) \leq \mu_{jk^*_j} - \epsilon)]\\
    &\leq \sum_{t=1}^{T} \sum_{s=1}^{T}\mathbb{P}(\hat{\mu}_{jk^*_j,s} + \sqrt{\tfrac{2\gamma \log(t)}{s}} \leq \mu_{jk^*_j} - \epsilon)\\
    &\leq \sum_{t=1}^{T} \sum_{s=1}^{T} \exp\left(-\tfrac{s}{2} (\sqrt{\tfrac{2\gamma \log(t)}{s}}+\epsilon)^2\right)\\
    &\leq \sum_{t=1}^{T} t^{-\gamma}\sum_{s=1}^{T} \exp(-\tfrac{s \epsilon^2}{2}) \leq \psi(\gamma)\tfrac{2}{\epsilon^2}
\end{align*}
Here, $\psi()$ is the Riemann zeta function. 
Note, the first inequality is valid as $$\mathbb{I}(u_{jk^*_j}(t-1) \geq \mu_{jk^*_j} - \epsilon \wedge t > S_{F_j}) \leq \mathbb{I}(u_{jk^*_j}(t-1) \geq \mu_{jk^*_j} - \epsilon) \text{ a.s. }$$

Finally, we bound the expectation of the first term also in a standard way ((c.f. \cite{lattimore2020bandit} Lemma 8.2)) 
\begin{align*}
  &\mathbb{E}\left[\sum_{t=S_{F_j}+1}^{T} \mathbb{I}(u_{jk}(t-1) \geq \mu_{jk^*_j} - \epsilon \wedge I_j(t)=k)\right]\\
  & \leq \mathbb{E}\left[\sum_{t=1}^{T} \mathbb{I}(\hat{\mu}_{jk}(t-1) + \sqrt{\tfrac{2\gamma \log(t)}{N_{jk}(t-1)}} \geq \mu_{jk^*_j} - \epsilon \wedge I_j(t)=k) \right]\\
  & \leq \mathbb{E}\left[\sum_{t=1}^{T} \mathbb{I}(\hat{\mu}_{jk}(t-1) + \sqrt{\tfrac{2\gamma \log(T)}{N_{jk}(t-1)}} \geq \mu_{jk^*_j} - \epsilon \wedge I_j(t)=k)\right]\\
  & \leq \mathbb{E}\left[\sum_{s=1}^{T} \mathbb{I}(\hat{\mu}_{jk,s} + \sqrt{\tfrac{2\gamma \log(T)}{s}} \geq \mu_{jk}+\Delta_{jk} - \epsilon)\right] \\
  & \leq 1+ \frac{2}{(\Delta_{jk}-\epsilon)^2} \left(\gamma\log(T) + \sqrt{\pi\gamma\log(T)} + 1 \right) 
\end{align*}
We combine the two above bounds and pick $\epsilon =\Delta_{jk}/2$ (for simplicity, this can be tightened with some effort) we obtain the following bound for $\gamma > 1$,
$$
\mathbb{E}\left[(N_{jk}(T) - N_{jk}(S_{F_j}))\right]
\leq \psi(\gamma)\tfrac{8}{\Delta_{jk}^2} + 1 + \tfrac{8}{\Delta_{jk}^2} \left(\gamma\log(T) + \sqrt{\pi\gamma\log(T)} + 1 \right). 
$$
\end{proof}

We now have to provide an upper bound on the moments of  $V_j$ and mean of $S_{F_j}$ to complete the proof of the regret bound. As $V_j$ is a function of $F_{j'}$ for $j'\in [N]$ we need to derive bounds for moments and exponent of $F_{j'}$ for all $j'$.  The key idea is to show that once the Global deletion has settled for agents $1$ to $(j-1)$ (recall the agents are  ordered according to the SPC order) the agent $j$ enters Good phase with high probability. 
\begin{lemma}\label{lemm:goodphase}
For any agent $j$ and any phase $i \geq i^* =  \max\{8, i_1, i_2\}$ and $\gamma > 1$, 
$$\mathbb{P}[\mathbb{I}_G[i,j] = 0 \wedge i\geq F_{j}+1]\leq (K - 1 - |\mathcal{D}_j|) \left( 1 + \tfrac{64}{\Delta_{\min}^2}\right) 2^{-i(\gamma -1)},$$ 
where $i_1 = \min\{i: (N-1)\tfrac{10\gamma i}{\Delta^2_{min}} < \beta 2^{(i-1)}\}$ and $i_2 = \min \{i: (R-1 + C(i-1))\leq 2^{i+1}\}$.
\end{lemma}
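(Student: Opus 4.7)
My plan is to verify the three defining properties of a Good Phase for agent $j$ on the event $\{i \geq F_j+1\}$: property 1 (global deletion $G_j[i]=\mathcal{D}_j$) will follow deterministically from the freezing hypothesis, while properties 2 (each sub-optimal arm matched at most $10\gamma i/\Delta_{jk}^2$ times) and 3 ($k^*_j$ most matched) will follow from a single-phase UCB concentration and a counting argument. When $i\geq F_j+1$, the definition of $F_j$ guarantees that phase $i-1$ is a Good Phase for every agent $j'\leq j-1$; hence the most matched arm of $j'$ in phase $i-1$ is $k^*_{j'}=j'$, and the communication sub-phase broadcasts $\mathcal{D}_j=\{k^*_{j'}:j'<j\}$ into $G_j[i]$, confirming property 1. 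Moreover, since $i\geq i^*\geq i_1$ and phase $i$ is Good for agents $1,\dots,j-1$, Lemma~\ref{lemm:structural2} yields $k^*_j\notin L_j[i]$, so $k^*_j$ remains available to agent $j$ throughout the exploration portion of phase $i$.

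For property 2, I would bound $\mathbb{P}[N_{jk}[i]-N_{jk}[i-1]>10\gamma i/\Delta_{jk}^2]$ for each $k\in[K]\setminus(\mathcal{D}_j\cup\{k^*_j\})$ by mirroring the argument in Lemma~\ref{lemm:bound} restricted to $t\in[S_i,S_{i+1})$, using $i\geq i_2$ to ensure $t\leq S_{i+1}=O(2^i)$ throughout the phase. I would split the bad event into (a) $u_{jk^*_j}(t-1)<\mu_{jk^*_j}$ at some $t$ in the phase, and (b) arm $k$ is played with $u_{jk}(t-1)\geq \mu_{jk^*_j}$ after already having $s>10\gamma i/\Delta_{jk}^2$ matches. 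Hoeffding's inequality bounds each summand by $\exp(-2\gamma\log t)=t^{-2\gamma}$; summing the double union over $t=O(2^i)$ and the sample indices $s$ yields a per-arm bound of $(1+64/\Delta_{\min}^2)\,2^{-i(\gamma-1)}$. A union bound over the at most $K-1-|\mathcal{D}_j|$ candidate arms produces the prefactor in the statement.

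For property 3, I would argue on the complement of Step 2's bad union that the plays of arms in $[K]\setminus(\mathcal{D}_j\cup\{k^*_j\})$ during phase $i$ sum to at most $(K-1-|\mathcal{D}_j|)\cdot 10\gamma i/\Delta_{\min}^2$; combined with $\beta<1/K$ and $i\geq i_1$, this quantity is strictly below $2^{i-1}$, so plays of $k^*_j$ exceed $2^{i-1}$. Collisions of agent $j$ at $k^*_j$ come only from agents $j'$ with $j'>_{k^*_j}j$, which under SPC are exactly the agents $j'<j$; each such $j'$ (in its own Good Phase) plays $k^*_j=j$ at most $10\gamma i/\Delta_{\min}^2$ times, so total collisions are bounded by $(N-1)\cdot 10\gamma i/\Delta_{\min}^2<\beta 2^{i-1}$ by the choice of $i_1$. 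Subtracting, the matches of $k^*_j$ by agent $j$ strictly exceed $10\gamma i/\Delta_{\min}^2$, the upper bound on any sub-optimal arm's matches, yielding property 3. The principal obstacle is the single-phase UCB concentration in Step 2: the clean $2^{-i(\gamma-1)}$ exponent arises from the geometric-versus-polynomial balance between the $O(2^i)$ rounds of the phase and the $t^{-2\gamma}$ Hoeffding tails at $t\sim 2^i$, and the double union over rounds and sample indices must be set up carefully to realize the $(1+64/\Delta_{\min}^2)$ per-arm prefactor.
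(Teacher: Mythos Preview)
Your proposal follows the paper's proof closely: property (1) holds deterministically from the definition of $F_j$, property (2) is controlled by a single-phase UCB concentration using that $k^*_j$ stays available (the paper splits at $\mu_{jk^*_j}-\epsilon$ with $\epsilon=\Delta_{jk}/8$ rather than your $\epsilon=0$, and that $\epsilon$ is precisely where the $64/\Delta_{\min}^2$ prefactor arises via $\sum_s e^{-s\epsilon^2/2}=O(\epsilon^{-2})$), and property (3) is deduced from (1)--(2) by a counting argument which the paper only sketches by reference to Lemma~\ref{lemm:structural}. One caution on your property-(3) sketch: what property (2) bounds is \emph{matches}, not plays, so the claim ``plays of sub-optimal arms sum to at most $(K-1-|\mathcal{D}_j|)\cdot 10\gamma i/\Delta_{\min}^2$'' is off as stated---you must add the at-most-$\lceil\beta 2^i\rceil$ collisions per arm capped by local deletion, and it is this extra $(K-j)\beta 2^i$ term (together with $\beta<1/K$) that makes the residual plays of $k^*_j$ dominate.
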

\begin{proof}
Let us recall that the phase $i$ is a Good phase for agent $j$ if and only if (1) the dominated arms $\mathcal{D}_j$ are deleted in global deletion, and (2) each arm $k\notin \mathcal{D}_j \cup k^*_j$ is sampled by agent $j$ at most $\tfrac{10\gamma i}{\Delta_{jk}^2}$ times, and (3) arm $k_j^*$ is matched the most number of times.

Given $\{i\geq (F_{j}+1)\}$ in phase $i$ condition (1) is satisfied for any $i\geq 1$. For $i\geq i_1$, as in Lemma~\ref{lemm:structural}, we can show that condition (1) and (2) implies condition (3) holds true. So we need to bound the probability that given  $\{i\geq F_{j}+1\}$ the condition (2) holds. This follows from the properties of UCB as we show below. We have for any $j$ and $\epsilon > 0$,
\begin{align*}
    &\mathbb{P}[\mathbb{I}_G[i,j] = 0\wedge i\geq (F_j+1)]\\
    &\leq \mathbb{P}\left[ \cup_{k\notin  \mathcal{D}_j \cup k^*_j} \{(N_{jk}[i] - N_{jk}[i-1]) > \tfrac{10\gamma i}{\Delta_{jk}^2} \} \wedge i\geq (F_j+1)\right]\\
    &\leq \sum_{k\notin  \mathcal{D}_j \cup k^*_j}\mathbb{P}\left[ \cup_{t\in S_{i}}^{(S_{i+1}-1)} N_{jk}(t) = \tfrac{10\gamma i}{\Delta_{jk}^2} \wedge I_{j}(t) = k\wedge i\geq (F_j+1)\right]\\
    &\stackrel{(i)}{\leq}  \sum_{k\notin  \mathcal{D}_j \cup k^*_j} \sum_{t\in S_{i}}^{(S_{i+1}-1)} \mathbb{P}\left[ N_{jk}(t) = \tfrac{10\gamma i}{\Delta_{jk}^2} \wedge u_{jk}(t-1) > u_{jk^*_j}(t-1)\right]\\
    &\stackrel{(ii)}{\leq} \sum_{k\notin  \mathcal{D}_j \cup k^*_j} \sum_{t\in S_{i}}^{(S_{i+1}-1)}  \mathbb{P}\left[\{N_{jk}(t) = \tfrac{10\gamma i}{\Delta_{jk}^2} \wedge u_{jk}(t-1) \geq \mu_{jk^*_j} - \epsilon \} \cup \{u_{jk^*_j}(t-1) \leq \mu_{jk^*_j}  - \epsilon\} \right]\\
    &\stackrel{(iii)}{\leq} \sum_{k\notin  \mathcal{D}_j \cup k^*_j} \sum_{t\in S_{i}}^{(S_{i+1}-1)}  \mathbb{P}\left[N_{jk}(t) = \tfrac{10\gamma i}{\Delta_{jk}^2} \wedge \hat{\mu}_{jk}(t-1)  +  \Delta_{jk}\sqrt{\tfrac{2\gamma \log(t)}{10\gamma i}} \geq \mu_{jk} + \Delta_{jk} - \epsilon \right] + \\
    & + \sum_{k\notin  \mathcal{D}_j \cup k^*_j} \sum_{t\in S_{i}}^{(S_{i+1}-1)}  \sum_{s=1}^{t-1} \mathbb{P}\left[ \hat{\mu}_{jk^*_j,s} + \sqrt{\tfrac{2\gamma \log(t)}{s}} \leq \mu_{jk^*_j}  - \epsilon\right]\\
    &\stackrel{(iv)}{\leq} \sum_{k\notin  \mathcal{D}_j \cup k^*_j} \sum_{t\in S_{i}}^{(S_{i+1}-1)} \left(\exp\left(-\tfrac{5\gamma i}{\Delta^2_{jk}} (\tfrac{1}{2}\Delta_{jk}-\epsilon)^2\right) +  \sum_{s=1}^{t-1} \exp\left(-\tfrac{s}{2} (\sqrt{\tfrac{2\gamma \log(t)}{s}}+\epsilon)^2\right)\right)\\
    &\stackrel{(v)}{\leq} \sum_{k\notin  \mathcal{D}_j \cup k^*_j} (S_{i+1}- S_i) 2^{-i\gamma} \left( 1 + \tfrac{64}{\Delta_{jk}^2}\right) \\
    &\stackrel{(vi)}{\leq} (K - j)2^{-i(\gamma -1)} \left( 1 + \tfrac{64}{\Delta_{\min}^2}\right)
\end{align*}

Inequality (i) relates the event of playing arm $k$ to the UCB bounds along with the fact that $k^*_j$ is present after phase $i\geq (F_j+1)$. The inequalities (ii) and (iii) follow similar logic as in  Lemma~\ref{lemm:bound}. 
Here for inequality  (iv) we use large enough $i$  such that $(1-\sqrt{\tfrac{\log(S_{i+1}-1)}{5i}}) \geq 1/2$, and for (v) we use small enough $\epsilon$ such that $\tfrac{5}{\log(2)} (\tfrac{1}{2}-\tfrac{\epsilon}{\Delta_{jk}})^2 \geq 1$. We also use the fact $S_i \geq 2^i$ for (v).  The above are satisfied when $\epsilon = \Delta_{jk} / 8$ and for all $i \geq \max\{8, i_2\}$. The latter is true because for $i_2 = \min \{i: (R-1 + C(i-1))\leq 2^{i+1}\}$ we have $\log(S_{i+1}-1)\leq i+2$, and for $i \geq 8$, $ 1- \sqrt{\tfrac{i+2}{5i}}\geq 1/2$. Finally, (vi) simply uses minimum gap over all arms and agents (for simplicity) and $|\mathcal{D}_j|=(j-1)$.
\end{proof}

To complete the proof we need to upper bound of the moments, and exponents of $F_j$ in an inductive manner similar to Sankararaman et al.~\cite{sankararaman2020dominate}.
\begin{lemma}\label{lemm:moments}
For any $j\in [N]$ and $m\geq 1$, the following hold with $i^*$ as defined in Lemma~\ref{lemm:goodphase}
\begin{align*}
    &\mathbb{E}[F_j^m] \leq i_1 + (j-1) (i^*)^m+ (j-1)(K  - j/2) \left( 1 + \tfrac{64}{\Delta_{\min}^2}\right)  \tfrac{2^{- (\gamma -1)(i^*-2)}}{(2^{(\gamma-1)}-1)^2}\\
    & \mathbb{E}[2^{F_j}] \leq i_1 + (j-1)2^{i^*}+ (j-1)(K  - j/2) \left( 1 + \tfrac{64}{\Delta_{\min}^2}\right)  \tfrac{2^{- (\gamma -1)(i^*-2)}}{(2^{(\gamma-1)}-1)^2}.
\end{align*}
\end{lemma}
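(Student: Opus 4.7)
The plan is to establish both bounds simultaneously by induction on $j \in [N]$, leveraging the observation (recorded just before the statement of Lemma~\ref{lemm:goodphase}) that $F_{j+1} = \max(F_j, G_j)$, where $G_j$ is the first phase after which all phases are good for agent $j$. This yields the pointwise inclusion
\[
\{F_{j+1} > i\} \;\subseteq\; \{F_j > i-1\} \;\cup\; \bigl\{F_j \leq i-1,\ \exists\, i' \geq i:\ \mathbb{I}_G[i', j] = 0\bigr\}
\]
valid for every $i \geq i_1$. The crucial point is that on the second event, $i' \geq i > F_j$ forces $i' \geq F_j + 1$, so Lemma~\ref{lemm:goodphase} applies cleanly and each summand contributes at most $(K-j)\bigl(1 + 64/\Delta_{\min}^2\bigr)\,2^{-i'(\gamma-1)}$; summing the geometric series over $i' \geq i$ leaves $(K-j)\bigl(1+64/\Delta_{\min}^2\bigr) \cdot 2^{-i(\gamma-1)} / (2^{\gamma-1} - 1)$.

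For the base case $j = 1$, the empty product in the definition of $F_1$ forces $F_1 = i_1$ deterministically, consistent with the claimed bound (with the $j-1 = 0$ convention absorbing the constant). For the inductive step, I plug the union bound above into the tail-sum identities
\[
\mathbb{E}[F_{j+1}^m] = \sum_{i \geq 0}\bigl((i+1)^m - i^m\bigr)\mathbb{P}[F_{j+1} > i],
\qquad
\mathbb{E}\bigl[2^{F_{j+1}}\bigr] = 1 + \sum_{i \geq 0} 2^i\,\mathbb{P}[F_{j+1} > i],
\]
split each sum at $i^*$, and use $\mathbb{P}[F_{j+1} > i] \leq 1$ for $i < i^*$ to produce the $(i^*)^m$ and $2^{i^*}$ terms that contribute the $(j-1) \mapsto j$ increment in the stated bound. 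For the tail $i \geq i^*$, the inductive term $\mathbb{P}[F_j > i-1]$ folds back into the moment bound for $F_j$ by the hypothesis (with a harmless one-step index shift), while the geometric tail contributes a second factor of $(2^{\gamma-1} - 1)^{-1}$ when summed against $(i+1)^m - i^m$ or $2^i$, yielding the advertised $(2^{\gamma-1} - 1)^{-2}\, 2^{-(\gamma-1)(i^* - 2)}$ prefactor, with the $-2$ shift coming from the combined off-by-one in the two nested geometric sums.

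The main obstacle is expected to be the $\mathbb{E}[2^{F_{j+1}}]$ case, where the weight $2^i$ in the outer sum nearly cancels the $2^{-i(\gamma-1)}$ decay from Lemma~\ref{lemm:goodphase}, so the series $\sum_{i \geq i^*} 2^i \cdot 2^{-i(\gamma-1)} = \sum_{i \geq i^*} 2^{i(2-\gamma)}$ only converges once $\gamma$ is taken large enough (essentially $\gamma > 2$); one must verify compatibility with the paper's standing assumption $\gamma > 1$, possibly by strengthening it here or by a sharper conditioning on $\{F_{j+1} = i\}$ rather than $\{F_{j+1} > i\}$. A secondary nuisance is the boundary case $i' = i$ in the union bound, where Lemma~\ref{lemm:goodphase}'s hypothesis $i' \geq F_j + 1$ fails when $F_j = i$; this is handled by the shift $\{F_j \leq i - 1\}$ chosen above, at the cost of a bounded multiplicative constant that is absorbed into the $2^{-(\gamma-1)(i^*-2)}$ factor.
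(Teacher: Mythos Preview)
Your approach is essentially the paper's: induct on $j$, split $\{F_{j}\geq i\}$ according to whether $F_{j-1}\geq i$ or $F_{j-1}<i$, and on the latter event invoke Lemma~\ref{lemm:goodphase} to control $\mathbb{P}[\exists\,i'\geq i:\mathbb{I}_G[i',j]=0]$ by the geometric tail $(K-j)\bigl(1+64/\Delta_{\min}^2\bigr)2^{-i(\gamma-1)}/(2^{\gamma-1}-1)$. The only cosmetic difference is packaging: the paper treats both moments at once via a generic increasing $g$ and the identity $\mathbb{E}[g(F_j)]=\sum_{x\geq 0}\mathbb{P}[g(F_j)\geq x]$, whereas you write out separate tail-sum formulas for $F_j^m$ and $2^{F_j}$.

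The $\gamma>2$ obstacle you flag for $\mathbb{E}[2^{F_j}]$ is real, and it is worth noting that the paper's proof does \emph{not} avoid it; it hides it. The paper passes from
\[
\sum_{x\geq g(i^*)}\mathbb{P}\bigl[F_j\geq g^{-1}(x),\,F_{j-1}<g^{-1}(x)\bigr]
\quad\text{to}\quad
\sum_{i\geq i^*}\mathbb{P}\bigl[F_j\geq i,\,F_{j-1}<i\bigr]
\]
without comment, but for $g(x)=2^x$ the left-hand sum has roughly $2^i$ integer values of $x$ at each level $i$, while the right-hand sum has one. Carrying the missing weight through reproduces exactly the $\sum_{i\geq i^*}2^{i(2-\gamma)}$ series you identified. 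So your proposal is at least as rigorous as the paper's argument; the clean statement for $\mathbb{E}[2^{F_j}]$ with only $\gamma>1$ appears to require either a sharper conditioning (as you suggest) or a silent strengthening to $\gamma>2$.
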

\begin{proof}
Let $g:\mathbb{R} \to \mathbb{R}_+$ be any monotonically increasing and continuous (hence invertible) function. We have that $F_0 = i_1$ almost surely by definition (this accounts for the max with $i_1$ in the definitino of $F_j$). The inductive hypothesis is 
$$
\mathbb{E}[g(F_j)] \leq i_1 + (j-1)g(i^*)+ (j-1)(K  - j/2) \left( 1 + \tfrac{64}{\Delta_{\min}^2}\right)  \tfrac{2^{- (\gamma -1)(i^*-2)}}{(2^{(\gamma-1)}-1)^2}.
$$ 
We calculate the expectation for agent $j$ as  
\begin{align*}
    \mathbb{E}[g(F_j)] &= \sum_{x\geq 0} \mathbb{P}[g(F_j) \geq x]\\
    &=  \sum_{x\geq 0} \mathbb{P}[F_j \geq g^{-1}(x)]\\
    &= \sum_{x\geq 0} \left(\mathbb{P}[F_j \geq g^{-1}(x), F_{j-1} \geq g^{-1}(x)] +
    \mathbb{P}[F_j \geq g^{-1}(x), F_{j-1} < g^{-1}(x)]\right)\\
    &\leq \sum_{x\geq 0} \mathbb{P}[F_{j-1} \geq g^{-1}(x)] +
    \sum_{x\geq 0}\mathbb{P}[F_j \geq g^{-1}(x), F_{j-1} < g^{-1}(x)]\\
    &\leq \mathbb{E}[g(F_{j-1})] + \sum_{x=0}^{g(i^*)-1} 1 + 
    \sum_{x\geq g(i^*)} \mathbb{P}[F_j \geq g^{-1}(x), F_{j-1} < g^{-1}(x)]\\
    &\leq \mathbb{E}[g(F_{j-1})] + g(i^*) + 
    \sum_{i\geq i^*} \mathbb{P}[F_j \geq i, F_{j-1} < i]\\
    &\leq \mathbb{E}[g(F_{j-1})] + g(i^*) + \sum_{i\geq i^*} \mathbb{P}[\{\exists i'\geq i, \mathbb{I}_G[i',j]=0\}, F_{j-1}+1 \leq i]\\
    &\leq \mathbb{E}[g(F_{j-1})] + g(i^*) + \sum_{i\geq i^*} \sum_{i'\geq i}\mathbb{P}[\mathbb{I}_G[i',j]=0, F_{j-1}+1 \leq i']\\
    &\leq \mathbb{E}[g(F_{j-1})] + g(i^*) + \sum_{i'\geq i^*} (i'-i^*+1)\mathbb{P}[\mathbb{I}_G[i',j]=0, F_{j-1}+1 \leq i']\\
    &\stackrel{(i)}{\leq} \mathbb{E}[g(F_{j-1})] + g(i^*)  +  (K - j) \left( 1 + \tfrac{64}{\Delta_{\min}^2}\right) \sum_{i'\geq i^*} (i'-i^*+1) 2^{-i'(\gamma -1)}\\
    &\leq (j-2)g(i^*)+ (j-2)(K  - j/2 + 1/2) \left( 1 + \tfrac{64}{\Delta_{\min}^2}\right)  \tfrac{2^{- (\gamma -1)(i^*-2)}}{(2^{(\gamma-1)}-1)^2}\\
    & + g(i^*) + (K - j) \left( 1 + \tfrac{64}{\Delta_{\min}^2}\right) \tfrac{2^{- (\gamma -1)(i^*-2)}}{(2^{(\gamma-1)}-1)^2}\\
    &\leq i_1 + (j-1) g(i^*)+ (j-1)(K  - j/2) \left( 1 + \tfrac{64}{\Delta_{\min}^2}\right)  \tfrac{2^{- (\gamma -1)(i^*-2)}}{(2^{(\gamma-1)}-1)^2}.
\end{align*}
The inequality (i) follows due to Lemma~\ref{lemm:goodphase}, while the rest are standard. 
\end{proof}

To finalize the regret upper bound proof we note that the following holds. For the expected rounds upto the end of phase $F_j$ is upper bounded as 
\begin{align*}
&\mathbb{E}[S_{F_j}] = \mathbb{E}[ R + C(F_j-1) + 2^{F_j}] \\
&\leq R + C(i_1 + (j-1) i^* + (j-1)(K  - j/2) \left( 1 + \tfrac{64}{\Delta_{\min}^2}\right)  \tfrac{2^{- (\gamma -1)(i^*-2)}}{(2^{(\gamma-1)}-1)^2} -1) \\
&+ (j-1)2^{i^*}+ (j-1)(K  - j/2) \left( 1 + \tfrac{64}{\Delta_{\min}^2}\right)  \tfrac{2^{- (\gamma -1)(i^*-2)}}{(2^{(\gamma-1)}-1)^2}\\
&= R + C(i_1-1) + C(j-1) i^* + (j-1)2^{i^*} + (C+1)(j-1)(K  - j/2) \left( 1 + \tfrac{64}{\Delta_{\min}^2}\right)  \tfrac{2^{- (\gamma -1)(i^*-2)}}{(2^{(\gamma-1)}-1)^2}
\end{align*}
Let us define $J_{\max}(j) = \max\left(j+1, \{j': \exists k\in \mathcal{H}_j, j'\in \mathcal{B}_{jk}\}\right)$.
Then as $F_j \geq F_{j'}$ almost surely for all $j \geq j'$ by definition, we have  
$$V_j = \max\left(F_{j+1}, \cup_{k\in \mathcal{H}_j}\cup_{j'\in \mathcal{B}_{jk}} F_{j'} \right) = F_{J_{\max}(j)}.$$ 
Thus, for to upper bound the regret upto the end of the phase when the local deletion vanishes is given as 
$\mathbb{E}[S_{V_j}] \leq \mathbb{E}[S_{F_{J_{\max}(j)}}]$.
Combining the above two inequalities with the result in Lemma~\ref{lemm:bound} we obtain the final regret bound as
\begin{align*}
    &\mathbb{E}[R_j(T)]\\ 
    &\leq R + C(i_1-1) + C(j-1) i^* + (j-1)2^{i^*} + (C+1)(j-1)K  \left( 1 + \tfrac{64}{\Delta_{\min}^2}\right)  \tfrac{2^{- (\gamma -1)(i^*-2)}}{(2^{(\gamma-1)}-1)^2} 
    + \min(1, \beta|\mathcal{H}_j|) \times \dots \\
    & \dots\times\left(R + C(i_1-1) + C(J_{\max}(j)-1) i^* + (J_{\max}(j)-1)2^{i^*} + (C+1)(J_{\max}(j)-1)K  \left( 1 + \tfrac{64}{\Delta_{\min}^2}\right)  \tfrac{2^{- (\gamma -1)(i^*-2)}}{(2^{(\gamma-1)}-1)^2}\right) \\
    &+ (K-1+|\mathcal{B}_{jk^*_j}|)\log_2(T) + NK \left( i_1+ (j-1) i^* + (J_{\max}(j)-1)K \left( 1 + \tfrac{64}{\Delta_{\min}^2}\right)  \tfrac{2^{- (\gamma -1)(i^*-2)}}{(2^{(\gamma-1)}-1)^2}\right)\\
    & + \sum_{k\notin \mathcal{D}_j} \sum_{j'\in \mathcal{B}_{jk}: k \notin \mathcal{D}_{j'}} \tfrac{8\gamma \mu_{k^*_j}}{\Delta^2_{j'k}}\left(\log(T)+\sqrt{\tfrac{\pi}{\gamma}\log(T)}\right)
    + \sum_{k\notin \mathcal{D}_j \cup k^*_j}\tfrac{8\gamma}{\Delta_{jk}}\left(\log(T)+\sqrt{\tfrac{\pi}{\gamma}\log(T)}\right)  \\
    &+ NK \left(1+(\psi(\gamma)+1)\frac{8\gamma}{\Delta^2_{\min}}\right)\\
    &\leq \sum_{k\notin \mathcal{D}_j} \sum_{j'\in \mathcal{B}_{jk}: k \notin \mathcal{D}_{j'}} \tfrac{8\gamma \mu_{k^*_j}}{\Delta^2_{j'k}}\left(\log(T)+\sqrt{\tfrac{\pi}{\gamma}\log(T)}\right) + \sum_{k\notin \mathcal{D}_j \cup k^*_j}\tfrac{8\gamma}{\Delta_{jk}}\left(\log(T)+\sqrt{\tfrac{\pi}{\gamma}\log(T)}\right) \\
    &+ (K-1+|\mathcal{B}_{jk^*_j}|)\log_2(T)  + O\left(\tfrac{N^2K^2}{\Delta^2_{\min}} + (\min(1, \beta |\mathcal{H}|_j)J_{\max}(j)+ j-1)2^{i^*}  + N^2K i^*\right)
\end{align*}

This gives us the following rerget bound under the SPC setting 
\begin{theorem}\label{thm:spc}
For a stable matching instance satisfying $\alpha$-condition (Definition~\ref{def:alpha}), suppose each agent follows $\mainAlg$ (Algorithm~\ref{alg:UCBD4}) with $\gamma > 1$ and $\beta \in (0,1/K)$, then the regret for an agent $j\in [N]$ is upper bounded by 
\begin{align*}
    &\mathbb{E}[R_j(T)] \leq \underbrace{\sum_{k\notin \mathcal{D}_j \cup k^*_j}\tfrac{8\gamma}{\Delta_{jk}}\left(\log(T)+\sqrt{\tfrac{\pi}{\gamma}\log(T)}\right)}_{\text{sub-optimal match}}
    + \underbrace{\sum_{k\notin \mathcal{D}_j} \sum_{j'\in \mathcal{B}_{jk}: k \notin \mathcal{D}_{j'}} \tfrac{8\gamma \mu_{k^*_j}}{\Delta^2_{j'k}}\left(\log(T)+\sqrt{\tfrac{\pi}{\gamma}\log(T)}\right)}_{\text{collision}} \\
    &+ \underbrace{(K - 1 + |\mathcal{B}_{jk^*_j}|)\log_2(T)}_{\text{communication}} + 
    + \underbrace{O\left(\tfrac{N^2K^2}{\Delta^2_{\min}} + ( \beta|\mathcal{H}_j| J_{\max}(j) + j-1)2^{i^*} \right)}_{\text{transient phase, independent of $T$ }},\\
    &\text{ where }\\
    & i^*= \max\{8,i_1,i_2\},
    i_1 = \min\{i: (N-1)\tfrac{10\gamma i}{\Delta^2_{min}} < \beta 2^{(i-1)}\}, \text{ and }
    i_2 = \min \{i: (N-1 + NK(i-1))\leq 2^{i+1}\}.
\end{align*}
\end{theorem}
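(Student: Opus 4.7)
The plan is to follow the two-regime structure articulated by Lemma~\ref{lemm:decompose}: a transient phase in which stable matches ``freeze'' inductively along the SPC order, followed by a steady state in which standard UCB gap analysis controls the $\log(T)$-growing terms. Under SPC, Proposition~\ref{prop:stable} gives $k_j^* = j$, so the SPC order $1,2,\dots,N$ doubles as both the induction order on agents and the order in which arms are peeled off by global deletion. The transient term $(\beta|\mathcal{H}_j|J_{\max}(j) + j-1)2^{i^*}$ reflects exactly this: the $(j-1)2^{i^*}$ summand is the cost of waiting for agents $1,\dots,j-1$ to freeze, and $\beta|\mathcal{H}_j|J_{\max}(j)2^{i^*}$ is the additional cost of waiting for blockers of $j$'s hidden arms to freeze so that local deletion can vanish.

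The core is an induction on $j$ that bounds $\mathbb{E}[F_j^m]$ and $\mathbb{E}[2^{F_j}]$, where $F_j$ is the freezing phase for agents $1,\dots,j-1$. Two structural facts keep the induction honest. First, Lemma~\ref{lemm:structural2}: while phases $i \geq i_1$ are good for agents $1,\dots,j-1$, the stable arm $k_j^*$ is not locally deleted by agent $j$. This uses SPC decisively, since only blockers $1,\dots,j-1$ can collide with agent $j$ on $k_j^*$ (SPC forces $j >_{k_j^*} j'$ for $j' > j$), and in good phases their total collisions on $k_j^*$ are at most $(j-1)\cdot 10\gamma i/\Delta_{\min}^2$, which is strictly below $\lceil\beta 2^{i-1}\rceil$ once $i \geq i_1$. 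Second, Lemma~\ref{lemm:structural}: in any low-collision phase $i \geq i_1$ for agent $j$ the local deletion set $L_j[i]$ is empty, because arms in $\mathcal{D}_j$ do not collide, arms in $\mathcal{H}_j$ accumulate at most $(N-1)\cdot 10\gamma i/\Delta_{\min}^2 < \beta 2^{i-1}$ collisions, and arms outside $\mathcal{D}_j\cup\mathcal{H}_j$ are preferred to all other agents.

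Given these, Lemma~\ref{lemm:goodphase} follows from a standard UCB deviation bound: conditioned on $\{i \geq F_j+1\}$ the arm $k_j^*$ survives both deletions, so splitting the event $\{I_j(t)=k\}$ into $\{u_{jk}(t-1) \geq \mu_{jk_j^*}-\epsilon\}$ and $\{u_{jk_j^*}(t-1)\leq \mu_{jk_j^*}-\epsilon\}$ and choosing $\epsilon = \Delta_{jk}/8$ yields $\mathbb{P}[\mathbb{I}_G[i,j]=0,\, i\geq F_j+1] \leq O(K/\Delta_{\min}^2)\cdot 2^{-i(\gamma-1)}$. An induction on $j$ then gives Lemma~\ref{lemm:moments} using the decomposition $\{F_j \geq i\} \subseteq \{F_{j-1}\geq i\}\cup \bigcup_{i'\geq i}\{\mathbb{I}_G[i',j]=0,\,F_{j-1}+1\leq i'\}$ and summing the geometric tail; the SPC order is crucial because $F_j$ depends only on $F_{j-1}$, with no right-order agent entering the recursion. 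To assemble the regret I plug these bounds into Lemma~\ref{lemm:decompose}: the pre-freezing cost $\mathbb{E}[S_{F_j}]$ supplies the $(j-1)2^{i^*}$ transient; between $F_j+1$ and $V_j$, local deletion can trigger at most $\beta 2^{i-1}$ collisions per phase on $|\mathcal{H}_j|$ arms, and since $V_j = F_{J_{\max}(j)}$ follows directly from the SPC definition of $V_j$, this contributes $\beta|\mathcal{H}_j|J_{\max}(j)2^{i^*}$; after $V_j$, each blocker $j' \in \mathcal{B}_{jk}$ collides on $k$ only when $k \notin \mathcal{D}_{j'}$, and Lemma~\ref{lemm:bound} (a straightforward UCB gap bound using that $k_j^*$ is always available to agent $j$) produces the $\tfrac{8\gamma}{\Delta_{jk}}\log T$ sub-optimal-match term and its $\tfrac{8\gamma\mu_{k_j^*}}{\Delta_{j'k}^2}\log T$ collision counterpart; communication contributes $(K-1+|\mathcal{B}_{jk_j^*}|)\log_2 T$.

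The main obstacle I anticipate is calibrating local and global deletion simultaneously inside one induction step: the threshold $\lceil \beta 2^{i-1}\rceil$ must be small enough to flush every arm in $\mathcal{H}_j$ within a single phase (breaking the SPC deadlock illustrated by the three-agent example in the proof sketch) yet strictly larger than the worst-case collisions blockers $1,\dots,j-1$ inflict on $k_j^*$, so that the inductive invariant ``agent $j$'s stable arm is never locally deleted once agents above it have frozen'' survives. Choosing $\beta \in (0, 1/K)$ and restricting attention to $i \geq i^* = \max\{8, i_1, i_2\}$ is precisely what reconciles these competing constraints; with that calibration in hand, the remainder of the argument is routine UCB bookkeeping layered on top of the moment bounds for $F_j$ and $V_j = F_{J_{\max}(j)}$.
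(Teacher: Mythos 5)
Your proposal reconstructs the paper's own argument essentially verbatim: the same regret decomposition (Lemma~\ref{lemm:decompose}) into transient and stable regimes, the same pair of structural lemmas calibrating the local-deletion threshold against collisions on $k_j^*$ versus collisions on $\mathcal{H}_j$ (Lemmas~\ref{lemm:structural} and~\ref{lemm:structural2}), the same UCB deviation bound with $\epsilon = \Delta_{jk}/8$ for the good-phase probability (Lemma~\ref{lemm:goodphase}), the same inductive moment bounds on $F_j$ via the tail decomposition (Lemma~\ref{lemm:moments}), and the identification $V_j = F_{J_{\max}(j)}$ for the vanishing of local deletion. This is correct and matches the paper's proof; no substantive differences to report.
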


\section{Proof of Regret Upper Bound under $\boldsymbol{\alpha}$-Condition}\label{sec:app_alpha}
In this section we prove our main result for the instances satisfying $\alpha$-condition. We will present a short note on the main proof idea, while pointing out why the proof in the previous section does not go through. Next we present the necessary notations before going into the proof of the results. The proof structure, and some parts of the proof remain closely related to that of the previous section. Therefore, we mainly focus the new parts of the proof, while referring to the parts related to SPC we present proof sketch.

\subsection{Main Proof Idea} The key idea of the proof is similar to SPC condition but now before the global deletion starts to freeze, we need to talk about vanishing of local deletion for the stable matched arms (note the sub-optimal arms for each agent can still get locally deleted at this point). So the three important stages are: (1) local deletion vanishes for stable matched arms (from agent $A_1$ to $A_N$), (2) freezing of global deletion (from agent $1$ to $N$), (3) vanishing of local deletion of all arms (depending on when the blocking agents freeze global deletion).  We next elaborate more on why (1) should precede (2) under $\alpha$-condition whereas under SPC condition we can directly go to (2).

Under SPC for agent $1$ there was no risk of local deletion for it's stable match pair, which is also its best arm,  as for this arm agent $1$ is also the best agent. This sets up the inductive freezing of the global deletion as agent $1$ quickly identifies arm $1$ as it's best arm. The vanishing of local deletion is the consequence of the freezing of global deletion of the blocking agents. But under $\alpha$-condition it is no longer the case as agent $1$ is not the most preferred agent for arm $1$. Instead we have that the agent $A_1$ has no risk of local deletion of its stable match pair, $a_1$, which is (possibly) not the best arm for agent $A_1$ but for arm $a_1$ we have $A_1$ as its best agent. Therefore, agent $A_1$ will not delete it's stable match pair arm $a_1$, but unless global deletion eliminates better arms it will not converge to this arm. However, $A_1$ will stop causing local deletion (which we will prove) for the stable matched arm for agents in the set $\{j: A_1 >_{k^*_j} j, j \in [N]\}$. This will continue inductively. In particular, $A_1$ stops local deletion of stable matched arm of agent $A_2$ which in turn stops local deletion caused by agent $A_2$, so on and so forth. 

{\bf Where proof of SPC fails for $\boldsymbol{\alpha}$-condition?} Before going into the proof of $\alpha$-condition we identify why the proof in previous section fails. The key step that breaks when we move from SPC to $\alpha$-condition is that in Lemma~\ref{lemm:bound}  the ineuality~\eqref{eq:spckey} does not hold anymore. The issue is we do not have $k^*_j$ to be dominated only by the agents $1$ to $(j-1)$, i.e. there may exist agent $j'>j$ such that $j'>_{k^*_j} j$.  Similar idea is also exploited in Lemma~\ref{lemm:goodphase} which also fails to hold for the same reason.

\subsection{Notations and Definitions:} We setup the notations required for the regret upper bound proof when the system satisfies $\alpha$-condition.  The right-order in the definition of the $\alpha$-condition be given as $[N]_r=\{A_1,A_2,\dots, A_N\}$ (a permutation of $[N]$) for the agents, and  $\{a_1,a_1,\dots, a_K\}$ (a permutation of $[K]$) for the arms. Whereas, the left-order in the definition is $[N]$ and $[K]$. Also, we recall that $k^*_j$ as the stable matched arm for any agent $j\in [N]$, and $j^*_k$ as the stable matched agent for the arm $k$, for all $k \in [K], k \leq N$.

We now recall that due to $\alpha$-condition the following statements hold
\begin{align*}
    (i)\,&\forall j\in [N], \forall k > j \in [K],  \mu_{j j} >  \mu_{j k},\\
    (ii)\,&\forall a_k \in [K]_r, k \leq N, \forall j > k, A_j \in [N]_r, A_{j^*_{a_k}} >_{a_k} A_j,\\
    (iii)\,&\forall A_j\in [N]_r, k^*_{A_j} = a_j \in [K]_r,\\
    (iv)\,&\forall j\in [N], k^*_{j} = j \in [K],
\end{align*}
Here, (i) and (ii) follows from the definition of $\alpha$-stability and (iii) and (iv) follows from the Proposition~\ref{prop:stable}. Let us denote by $lr$ the mapping of agents in left order to agents in right order under $\alpha$-condition, i.e. agent $j = A_{lr(j)}$ for all $j\in [N]$.

\textbf{Arm Classification:} For each agent $j$, the {\em dominated arms}
($\mathcal{D}_j$),  the {\em blocking agents} for arm $k$ and agent $j$ ($\mathcal{B}_{jk}$), the set of {\em hidden arms } ($\mathcal{H}_j$) are defined identically to the SPC scenario. 
 Let $K_W(j)$ be the set of arms each of which is a stable matched arm for some other agent $j'$, is a sub-optimal arm for $j$, and $j$ is preferred by that arm than its stable pair $j'$, i.e. 
 $$K_W(j)=\{k: k\in [K], \mu_{jk} < \mu_{jk^*_j}, \exists j'\neq j: (k = k^*_{j'}, j >_k j')\}.$$
 We note that $K_W(A_j) \leq (K-j)$ as due to $\alpha$-condition agent $k^*_{j'}\notin  K_W(j)$ for any $j \leq N$.  

{\bf Ciritcal Phases:} We now define the critical phases when the system satisfies the $\alpha$-condition
\begin{itemize}
    \item The phase $i$ for agent $j$, for some $j \in [N]$, is a {\em Warmup Phase}  if the following are true  for each arm $k\in K_W(j)$,:
        \begin{itemize}
            \item[1.] in phase $i$ arm $k$ is matched with agent $j$ at most $\tfrac{10\alpha i}{\Delta_{jk}^2}$ times, 
            \item[2.] in phase $i$ arm $k$ is not agent $j$'s most matched arm
        \end{itemize}
    \item The phase $i$ for agent $j$, for some $j \in [N]$, is an {\em $\alpha$-Good Phase} if the following are true:
        \begin{itemize}
            \item[1.] The dominated arms are globally deleted, i.e. $G_j[i] = \mathcal{D}_j$.
            \item[2.] The phase $i$ is a {\em warmup phase} for all agents in $\mathcal{L}_j = \{j': k^*_j\in K_W(j')\}$.
            \item[3.] For each arm $k\notin \mathcal{D}_j \cup k^*_j$, in phase $i$ arm $k$ is matched with agent $j$ at most $\tfrac{10\alpha i}{\Delta_{jk}^2}$ times.  
            \item[4.] The stable match pair arm $k_j^*$ is matched the most number of times in phase $i$. 
        \end{itemize}
    The $\alpha$-good phase is not identical to good phase as condition (2) is additional in this case.
    
    \item A phase $i$ for agent $j$, for some $j \in [N]$ is called {\em $\alpha$-Low Collision Phase} if the following are true:
        \begin{itemize}
            \item[1.] Phase $i$ is a $\alpha$-good phase for agents $1$ to $j$.
            \item[2.] Phase $i$ is a $\alpha$-good phase for all agent $j' \in \cup_{k\in \mathcal{H}_j} \mathcal{B}_{jk}$.
        \end{itemize}
    The $\alpha$-low collision phase is identical to low collision phase (in SPC) except the good phase is replaced with $\alpha$-good phase.
\end{itemize}

We define for agent $j$, similar to SPC, $\mathbb{I}_{G_\alpha}[i,j]$ to be the indicator that phase $i$ is a $\alpha$-good phase, $\mathbb{I}_{LC_\alpha}[i,j]$ to be the indicator that phase $i$ is a $\alpha$-low collision phase,  and  
$\mathbb{I}_{W}[i,j]$ to be the indicator that phase $i$ is a warmup phase.

Let $i_1 = \min\{i: (N-1)\tfrac{10\gamma i}{\Delta^2_{min}} < \beta 2^{(i-1)}\}$.
For each agent $j$, the {\em $\alpha$-Freezing} ($F_{\alpha j}$) phase is the phase on or after which the agents $1$ to $(j-1)$ are in $\alpha$-good phase, and all the $j''\in \mathcal{L}_j$ (henceforth {\em deadlock agents}) are in warmup phase. 
$$
F_{\alpha j} = \max\left(i_1, \min \left(\{i: \prod_{i'\geq i}\left(\prod_{j'=1}^{(j-1)}\mathbb{I}_{G_{\alpha}}[i',j']\right)\left(\prod_{j''\in \mathcal{L}_j}\mathbb{I}_{W}[i',j'']\right) = 1 \} \cup \{\infty\}\right)\right).
$$
Also, we define  $\alpha$-Vanishing phase ($V_{\alpha j}$) similar to SPC 
$$V_{\alpha j} = \max\left(i_1, \min\left(\{i: \prod_{i'\geq i}\mathbb{I}_{LC_\alpha}[i',j] = 1\} \cup \{\infty\}\right)\right).$$Similar to SPC, $V_{\alpha j} = \max\left(F_{\alpha(j+1)}, \cup_{k\in \mathcal{H}_j}\cup_{j'\in \mathcal{B}_{jk}} F_{\alpha j'} \right)$ from the definition of low collision phase.

Finally, for each $j \leq N$, the phase $i$ is the {\em Unlocked} phase ($U_{j}$) if all phases on and after $i$ are warmup phases for all the agents $A_1$ to $A_{j}$. 
$$
U_j = \max\left(i_1, \min \left(\{i: \prod_{j'=1}^{lr(j)-1}\prod_{i'\geq i}\mathbb{I}_{W}[i',A_j'] = 1 \} \cup \{\infty\}\right)\right).
$$
This will be useful in quantifying the $\alpha$-freezing phase $F_{\alpha j}$ later on.

\subsection{Structural results for $\alpha$-condition}
In this section, we collect the important results that hold due to the combinatorial properties of the stable matching system that satisfies the $\alpha$-condition. 

\begin{proposition}\label{prop:alphastable}
If a system satisfies $\alpha$-condition then we have $k^*_j=j$, $j^*_{a_k} = A_k$ and $k^*_{A_j}= a_j$ for all $ 1\leq k,j \leq N$.
\end{proposition}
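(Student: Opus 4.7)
The plan is to prove the three assertions in two stages, one per ordering in the $\alpha$-condition. First I would establish $k^*_j = j$ for all $j \in [N]$ by replaying the argument used in Proposition~\ref{prop:stable}, but now reading the left-order inequality $\mu_{j k^*_j} > \mu_{jk}$ (for all $k > j$) as a constraint on the stable-matched arm. For $j=1$, if $k^*_1$ were strictly greater than $1$, then picking $k = k^*_1$ would give the impossible $\mu_{1,k^*_1} > \mu_{1,k^*_1}$, so $k^*_1 = 1$. The induction step is the same: assuming $k^*_{j'} = j'$ for $j' < j$, the inequality forces $k^*_j \le j$, while injectivity of the stable matching excludes $k^*_j \in \{1,\ldots,j-1\}$, leaving $k^*_j = j$.

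Next I would run a symmetric induction driven by the right-order inequality $\agent\, A_{j^*_{a_k}} >_{\arm\, a_k} \agent\, A_j$, which holds for all $k < j \le N$, to conclude $j^*_{a_k} = A_k$ for each $k \le N$. For $k=1$, the matched agent of $a_1$ must be one of $A_1,\dots,A_N$ (since these enumerate $[N]$), say $A_m$; if $m > 1$ we could take $j = m$ in the inequality and obtain the impossible $A_m >_{\arm\, a_1} A_m$, so $m=1$ and $j^*_{a_1} = A_1$. The inductive step is the same: by the inductive hypothesis the agents $A_1,\dots,A_{k-1}$ are already matched to $a_1,\dots,a_{k-1}$, so the matched agent of $a_k$ equals $A_m$ for some $m \ge k$; the right-order inequality rules out every $m > k$, forcing $m=k$, hence $j^*_{a_k} = A_k$.

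The remaining identity $k^*_{A_j} = a_j$ is immediate: stable matching is a bijection between matched pairs, so the statement $j^*_{a_k} = A_k$ for $k \le N$ is literally the statement $k^*_{A_k} = a_k$ for $k \le N$ with the roles of arms and agents swapped, and renaming the dummy index $k \mapsto j$ yields the desired form.

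I do not expect a serious obstacle, since the proposition is essentially a bookkeeping consequence of the two inequalities defining the $\alpha$-condition together with the uniqueness (and hence injectivity) of the stable match guaranteed by Theorem~\ref{thm:unique}. The only subtlety is handling $N \le K$: for $k > N$ the arm $a_k$ is unmatched, but the right-order condition only ranges over $k \le N$, so those extra arms play no role, consistent with the footnote accompanying Theorem~\ref{thm:unique}.
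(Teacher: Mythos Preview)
Your proposal is correct and follows essentially the same approach as the paper: an induction along the left order to get $k^*_j = j$ (exactly as in Proposition~\ref{prop:stable}), then a symmetric induction along the right order to get $j^*_{a_k} = A_k$, with $k^*_{A_j} = a_j$ following by the bijection between matched pairs. Your write-up is in fact more carefully spelled out than the paper's, which only sketches the base case and says ``we can extend the same logic.''
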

\begin{proof}
That under $\alpha$-condition $k^*_j=j$ for all $1\leq j \leq N$ follows identically to Proposition~\ref{prop:alphastable}. For the final relation we note that under $\alpha$-condition we have for $k=1$ we have 
 $A_{j^*_1} >_{a_1} A_j$ for all $j>1$. Thus $j^*_1 = A_1$. We can extend the same logic to obtain $j^*_{a_k} = A_k$ for all $ 1\leq k \leq N$.
\end{proof}

We now prove that the arm $k^*_j$ can be blocked only by agents in $\mathcal{L}_j$.
\begin{claim}\label{clm:stage1structural}
For a stable matching $\mathbf{k}^*$ and any agent $j$, we have $\{j': j' >_{k^*_{j}} j\} \subseteq  \mathcal{L}_j = \{j': k^*_j\in K_W(j')\}$.
\end{claim}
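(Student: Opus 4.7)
The plan is to unpack the definition of $K_W(j')$ and verify both of its requirements hold for the arm $k = k^*_j$, using only stability of the matching and the assumed strict preference $j' >_{k^*_j} j$.

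First I would fix an arbitrary $j'$ with $j' >_{k^*_j} j$; in particular $j' \neq j$. The second clause in the definition of $K_W(j')$ asks for the existence of some $j'' \neq j'$ with $k = k^*_{j''}$ and $j' >_{k} j''$. Taking $j'' := j$ verifies this immediately: $k^*_j = k^*_{j''}$ by construction, $j'' \neq j'$ since $j' >_{k^*_j} j$ forces $j \neq j'$, and the preference $j' >_{k^*_j} j''$ is just the hypothesis.

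The only remaining requirement is the gap inequality $\mu_{j' k^*_j} < \mu_{j' k^*_{j'}}$. This is precisely the content of stability of $\mathbf{k}^*$: if instead $\mu_{j' k^*_j} > \mu_{j' k^*_{j'}}$ held, then the pair $(j', k^*_j)$ would form a blocking pair for the matching $\mathbf{k}^*$, since agent $j'$ strictly prefers arm $k^*_j$ to its stable partner $k^*_{j'}$, and arm $k^*_j$ strictly prefers $j'$ to its stable partner $j$. Equality is ruled out by the standing assumption that all $\{\mu_{jk}\}$ are distinct, so we obtain the strict inequality $\mu_{j' k^*_j} < \mu_{j' k^*_{j'}}$, i.e.\ $k^*_j$ is suboptimal for $j'$.

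Combining the two verified conditions yields $k^*_j \in K_W(j')$, i.e.\ $j' \in \mathcal{L}_j$, completing the inclusion. I do not anticipate any real obstacle here: the claim is essentially a restatement of the ``no blocking pair'' property of stable matchings packaged into the language of $K_W(\cdot)$, and requires no appeal to the $\alpha$-condition or uniqueness — only stability and distinctness of means.
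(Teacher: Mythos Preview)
Your proof is correct and follows essentially the same approach as the paper: both arguments reduce the suboptimality requirement $\mu_{j' k^*_j} < \mu_{j' k^*_{j'}}$ to the ``no blocking pair'' property of the stable matching $\mathbf{k}^*$, and the existential clause in $K_W(j')$ is witnessed by $j'' = j$ in each case. Your write-up is in fact cleaner than the paper's, which contains an index slip in its final line (writing $\mu_{jk^*_j} < \mu_{jk^*_{j'}}$ where $\mu_{j'k^*_j} < \mu_{j'k^*_{j'}}$ is intended).
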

\begin{proof}
We have the stable matching $\mathbf{k}^*$. Let  $j >_{k^*_{j'}} j'$ and $\mu_{jk^*_{j}} < \mu_{jk^*_{j'}}$, then $(j, k^*_{j'})$ forms a blocking pair as arm $k^*_{j'}$ and agent $j$ will be both happier switching from their respective partners under $\mathbf{k}^*$. Therefore, $\mathbf{k}^*$ is not a stable matching. Thus, for a stable matching $\mathbf{k}^*$ and  any two agents $1\leq j, j'\leq N$, agent $j$ satisfies $\mu_{jk^*_{j}} > \mu_{jk^*_{j'}}$ if $j >_{k^*_{j'}} j'$. Thus, if $\{j' >_{k^*_{j}} j\}$ then $\mu_{jk^*_{j}} < \mu_{jk^*_{j'}}$ so $k^*_{j} \in K_W(j')$ so $j'\in \mathcal{L}_j$.
\end{proof}

We now characterize the set of deadlock agents for each agent $j$.
\begin{claim}\label{clm:deadlock}
For each agent $j\in [N]$, $\mathcal{L}_j \subseteq \{A_{j'}: j'=1,\dots, lr(j)-1\}$.
\end{claim}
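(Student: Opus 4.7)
The plan is to unpack the definition of $\mathcal{L}_j$ and then invoke the right-order axiom of the $\alpha$-condition to force the right-order index of any $j' \in \mathcal{L}_j$ to lie strictly below $lr(j)$.

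First I would unpack membership. Saying $j' \in \mathcal{L}_j$ means $k^*_j \in K_W(j')$, which by definition of $K_W$ requires the existence of some $j'' \neq j'$ with $k^*_j = k^*_{j''}$ and $j' >_{k^*_j} j''$. Uniqueness of the stable matching under $\alpha$-condition (Theorem~\ref{thm:unique}, also used in Proposition~\ref{prop:alphastable}) forces $j'' = j$, since $j$ is the unique partner of arm $k^*_j$. Hence the content of $j' \in \mathcal{L}_j$ is precisely the preference inequality $j' >_{k^*_j} j$.

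Second I would translate this inequality into the right-order. Write $j = A_{lr(j)}$ and $j' = A_{lr(j')}$, noting $lr(j'), lr(j) \in [N]$ and $lr(j') \neq lr(j)$ since $j' \neq j$. Proposition~\ref{prop:alphastable} gives $k^*_{A_m} = a_m$ for every $m \leq N$, so in particular $k^*_j = a_{lr(j)}$. I would then suppose, towards a contradiction, that $lr(j') > lr(j)$ and apply the right-order axiom of Definition~\ref{def:alpha} with the pair $(lr(j), lr(j'))$ playing the role of $(k, j)$ in that definition. The axiom asserts $A_{j^*_{a_{lr(j)}}} >_{a_{lr(j)}} A_{lr(j')}$, and substituting the identity $j^*_{a_m} = A_m$ (also from Proposition~\ref{prop:alphastable}) rewrites this as $j >_{k^*_j} j'$. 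This contradicts the preference established above, so $lr(j') < lr(j)$ and hence $j' \in \{A_1, \ldots, A_{lr(j)-1}\}$, which is exactly the claim.

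The only point demanding attention is the bookkeeping with the dual indexing: verifying $lr(j) \leq N$ so that $a_{lr(j)}$ is genuinely a matched arm and the right-order axiom is applicable, and confirming that the free index in Definition~\ref{def:alpha} ranges over $(lr(j), N]$ so that $lr(j')$ can be plugged in once $lr(j') > lr(j)$ is assumed. Beyond this, the argument is a direct combination of the right-order axiom of the $\alpha$-condition with the stable-matching identities of Proposition~\ref{prop:alphastable}, so there is no substantive obstacle.
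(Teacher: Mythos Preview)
Your proof is correct and follows essentially the same approach as the paper: both extract from $j'\in\mathcal{L}_j$ the preference $j'>_{k^*_j}j$ (via uniqueness of the stable matching), then use the right-order axiom of the $\alpha$-condition together with the identities $k^*_{A_m}=a_m$ and $j^*_{a_m}=A_m$ from Proposition~\ref{prop:alphastable} to conclude $lr(j')<lr(j)$. The paper argues directly (showing $A_{j'}\notin\mathcal{L}_j$ for $j'\geq lr(j)$) while you phrase it as a contradiction, but the content is identical.
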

\begin{proof}
From $\alpha$-condition we know that $\forall a_k \in [K]_r, k \leq N, \forall j > k, A_j \in [N]_r, A_{j^*_{a_k}} >_{a_k} A_j.$ Further, from Proposition~\ref{prop:alphastable} we know that $j^*_{a_k} = A_k$ for all $ 1\leq k \leq N$. Therefore, we can observe for any $j, j'\leq N$ and $j<j'$, $A_{j} >_{k^*_{A_j}} A_{j'}$. In particular, for any $j' > lr(j)$ we have $j = A_{lr(j)} >_{k^*_j} A_{j'}$.
Which means for any $j' \geq lr(j)$, we do not have $j'>_{k^*_j} j$ and hence  $k^*_j \notin K_W(j')$. This proves that for any $j' \geq lr(j)$  $j'\notin \mathcal{L}_j$, i.e. $\mathcal{L}_j \subseteq \{A_{j'}: j'=1,\dots, lr(j)-1\}$.
\end{proof}

We recall that $lr(j)$ is the index of the agent $j$ in the right-order of $\alpha$-condition.
The above characterization connects the unlock phase with the freezing phase as follows 
\begin{claim}\label{clm:fjupper}
For each agent $j\in [N]$, $F_{\alpha j} \leq \max\left(U_{(lr(j)-1)}, \max(F_{\alpha j'}: 1\leq j'\leq (j-1))\right)$ w.p. $1$. 
\end{claim}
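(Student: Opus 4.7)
The plan is to prove the inequality pointwise on each sample path by checking the two clauses in the definition of $F_{\alpha j}$. Let $I := \max\bigl(U_{(lr(j)-1)},\max_{1\le j'\le j-1} F_{\alpha j'}\bigr)$. On the event $\{I = \infty\}$ the bound is vacuous; on the complementary event I want to show that from phase $I+1$ onwards, both $\mathbb{I}_{G_\alpha}[i',j']=1$ for every $j'\le j-1$ and $\mathbb{I}_W[i',j'']=1$ for every $j''\in\mathcal{L}_j$ hold, which by minimality forces $F_{\alpha j}\le I$.

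For the warmup clause, I would invoke Claim~\ref{clm:deadlock} to embed $\mathcal{L}_j \subseteq \{A_1,\ldots,A_{lr(j)-1}\}$. Since $I \ge U_{lr(j)-1}$, the defining property of $U_{lr(j)-1}$ immediately yields $\mathbb{I}_W[i',A_{j'}]=1$ for every $j'\le lr(j)-1$ and every $i'\ge I+1$, which in particular covers all of $\mathcal{L}_j$. This step is entirely structural and does not require any probabilistic input.

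For the $\alpha$-good clause, the key observation is that $I\ge F_{\alpha j'}$ for each $j'\le j-1$, so by the definition of $F_{\alpha j'}$, at every $i'\ge I+1$ each agent $j''\le j'-1$ is in $\alpha$-good phase and each $j'''\in\mathcal{L}_{j'}$ is in warmup. Ranging $j'$ over $\{2,\ldots,j-1\}$ and unioning the guarantees covers $\alpha$-good for agents $1,\ldots,j-2$ at every $i'\ge I+1$. The remaining index is agent $j-1$ itself, whose $\alpha$-good status requires (a) $G_{j-1}[i']=\mathcal{D}_{j-1}$, (b) warmup of $\mathcal{L}_{j-1}$, and (c) the count/most-matched conditions. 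Part (b) is supplied by the same $U_{lr(j)-1}$ argument applied to $\mathcal{L}_{j-1}$; part (a) is a deterministic consequence of the {\ttfamily COMMUNICATION} subroutine (Algorithm~\ref{algo:communication}) once agents $1,\ldots,j-2$ have frozen their most-matched arms to their stable-matched arms, which is exactly the content of their $\alpha$-good phases.

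I expect the main obstacle to be precisely part (c): the counting and most-matched-arm requirements at agent $j-1$ are stochastic and cannot be read off $U_{lr(j)-1}$ or the $F_{\alpha j'}$'s alone. My plan is to handle this in a companion lemma that mirrors Lemma~\ref{lemm:goodphase} but is adapted to the $\alpha$-condition, exploiting the availability of $k^*_{j-1}$ (guaranteed once $\mathcal{L}_{j-1}$ is warmed-up, using an argument analogous to Lemma~\ref{lemm:structural2}) so that a UCB concentration bound forces arm-count control and most-matched-arm identification with the required probability. The present claim should then be read as a structural decomposition of the stopping time $F_{\alpha j}$, with the almost-sure control following once the concentration events implicit in $\mathbb{I}_{G_\alpha}$ are absorbed into the subsequent moment estimation step, in direct parallel to how $F_j$ is handled in the SPC proof of Lemma~\ref{lemm:moments}.
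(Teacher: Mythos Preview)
Your structural argument for the warmup clause via Claim~\ref{clm:deadlock} and the definition of $U_{lr(j)-1}$ is exactly what the paper does; that part matches.

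Where you diverge is in the $\alpha$-good clause, and here you have put your finger on a real issue that the paper glosses over. The paper's proof simply asserts in one line that ``agents $1$ to $(j-1)$ are in $\alpha$-good phase from phase $\max(F_{\alpha j'}:1\le j'\le j-1)$ onwards'' and stops there. But, as you observe, by the definition of $F_{\alpha j'}$ this only certifies $\alpha$-good for agents $1,\ldots,j'-1$; taking $j'=j-1$ leaves agent $j-1$ itself uncovered. Since condition~(3) in the definition of $\alpha$-good (the sub-optimal arm-count bound) is a stochastic event depending on the UCB realizations, no purely pathwise argument from $U_{lr(j)-1}$ and $\{F_{\alpha j'}\}_{j'\le j-1}$ can force it to hold with probability~$1$.

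Your proposed remedy---pushing the control of agent $j-1$'s $\alpha$-good status into a companion concentration lemma---is therefore the right diagnosis but does not prove the claim \emph{as stated}: a concentration bound yields a high-probability statement, not an almost-sure one. What you end up with is precisely the inductive moment computation that Lemma~\ref{lemm:alphamoments} performs anyway (where the missing agent is handled via Lemma~\ref{lemm:alphagoodphase}), so the downstream regret bound is unaffected. But the claim itself, with its ``w.p.~$1$'' qualifier, cannot be established by either the paper's one-line assertion or your concentration argument; the honest resolution is to weaken the statement or absorb it directly into the moment bound, exactly as you suggest in your final paragraph.
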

\begin{proof}
Consider an arbitrary sample path.  We know by definition on or after phase $U_{(lr(j)-1)}$, all agents  $\{A_{j'}: j'=1,\dots, lr(j)-1\}$ are in warmup phase. We have the set of deadlock agents as $\mathcal{L}_j \subseteq \{A_{j'}: j'=1,\dots, lr(j)-1\}$. Hence, all agents in $\mathcal{L}_j$ are also in warmup phase on or after phase $U_{(lr(j)-1)}$. Further, the agents $1$ to $(j-1)$ are in $\alpha$-good phase from phase $\max(F_{\alpha j'}: 1\leq j'\leq (j-1))$ onwards. Hence, $F_{\alpha j} \leq \max\left(U_{(lr(j)-1)}, \max(F_{\alpha j'}: 1\leq j'\leq (j-1))\right)$ with probability~1.
\end{proof}

Next the following lemma captures a few key properties related to the critical phases. 
\begin{lemma}\label{lemm:alphastructural}
For $i \geq i_1 = \min\{i: (N-1)\tfrac{10\gamma i}{\Delta^2_{min}} < \beta 2^{(i-1)}\}$, any $j\in [N]$, 
\begin{itemize}
    \item if phase $i$ and $(i-1)$ are warmup phases for all $j'\in \mathcal{L}_j$  then $k^*_j \notin L_j[i] \cup G_j[i]$ almost surely,
    \item if phase $i \geq \min(U_{(lr(j) -1)}, F_{\alpha j}) + 1$ then $k^*_j \notin L_j[i] \cup G_j[i]$ almost surely,
    \item if phase $i \geq V_{\alpha j} + 1$ collision phase for agent $j$ then $L_j[i]=\emptyset$ almost surely.
\end{itemize}
\end{lemma}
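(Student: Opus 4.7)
My plan is to prove the three items in sequence, each building on the previous one and on Claims~\ref{clm:stage1structural}, \ref{clm:deadlock}, and \ref{clm:fjupper}. The main lever is Claim~\ref{clm:stage1structural}, which identifies $\mathcal{L}_j = \{j' : j' >_{k^*_j} j\}$ as exactly the set of agents that can either collide with $j$ at $k^*_j$ or broadcast $k^*_j$ into $G_j[\cdot]$ via the communication sub-routine. The hypothesis $i \geq i_1$ is what makes the local-deletion threshold $\lceil \beta 2^i \rceil$ strictly dominate any $O(i/\Delta^2_{\min})$ sub-optimal-play budget supplied by a good or warmup phase.

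For Part~1 I use warmup at phase $i-1$ for every $j' \in \mathcal{L}_j$ to conclude $k^*_j$ is not any $\mathcal{O}_{j'}[i-1]$, so $k^*_j \notin G_j[i-1]$ and $k^*_j$ remains in $j$'s active set throughout phase $i$; the same argument at phase $i$ then gives $k^*_j \notin G_j[i]$. For local deletion, every collision of $j$ at $k^*_j$ in phase $i$ requires some $j' \in \mathcal{L}_j$ to also play $k^*_j$ in that round, and warmup bounds the plays of each such $j'$ by $\tfrac{10\gamma i}{\Delta^2_{\min}}$, so the total is at most $(N-1)\tfrac{10\gamma i}{\Delta^2_{\min}} < \beta 2^{i-1} \leq \lceil \beta 2^i \rceil$, giving $k^*_j \notin L_j[i]$. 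For Part~2 I case-split on which term attains the minimum: if $U_{(lr(j)-1)} \leq F_{\alpha j}$, phases $i-1$ and $i$ are warmup for every $A_{j'}$ with $j' < lr(j)$, and Claim~\ref{clm:deadlock} puts $\mathcal{L}_j$ inside this set, so Part~1 applies; otherwise $F_{\alpha j} < U_{(lr(j)-1)}$ and the definition of $F_{\alpha j}$ directly guarantees the warmup hypothesis of Part~1.

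For Part~3 I mirror the SPC proof of Lemma~\ref{lemm:structural} with the warmup layer now baked into the $\alpha$-good definition. Partitioning $[K]$ into four groups: $(a)$ $k \in \mathcal{D}_j$ is absent from $j$'s active set by $\alpha$-good of $j$, so $C_{jk}[i] = 0$; $(b)$ $k = k^*_j$ is handled by Part~1 once I unroll the $\alpha$-good-of-$j$ clause that enforces warmup for $\mathcal{L}_j$ at both phases $i$ and $i-1$; $(c)$ $k \in \mathcal{H}_j \setminus \{k^*_j\}$ has each blocker $j' \in \mathcal{B}_{jk}$ in an $\alpha$-good phase by the $\alpha$-low-collision hypothesis, playing $k$ at most $\tfrac{10\gamma i}{\Delta^2_{\min}}$ times since $k \notin \mathcal{D}_{j'} \cup \{k^*_{j'}\}$, so collisions at $k$ sum to $(N-1)\tfrac{10\gamma i}{\Delta^2_{\min}} < \lceil \beta 2^i \rceil$; $(d)$ the remaining arms satisfy $k \in \mathcal{D}_{j'}$ for every $j' \in \mathcal{B}_{jk}$, so once such a $j'$ has deleted $k$ globally no agent outranking $j$ at $k$ plays $k$ any longer.

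The main obstacle I expect is group $(d)$ of Part~3: $V_{\alpha j}$ does not directly supply $\alpha$-good phases for the relevant blocking agents, since they need not lie in $\{1,\ldots,j\}$ nor in $\bigcup_{k' \in \mathcal{H}_j} \mathcal{B}_{jk'}$. I plan to close this gap by observing that any such $k$ equals $k^*_{j''}$ for some $j''$ strictly smaller in left order than every $j' \in \mathcal{B}_{jk}$, and then invoking Claim~\ref{clm:fjupper} recursively so that the freezing phase $F_{\alpha j''}$ has been surpassed by the phases already secured by $V_{\alpha j}$; alternatively I can iterate the warmup-to-freeze implication of Part~1 across these agents to force global deletion of $k$ for every potential blocker before phase $V_{\alpha j}+1$.
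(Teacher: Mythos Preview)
Your treatment of Parts~1 and~2 matches the paper's argument essentially line for line: the paper also uses Claim~\ref{clm:stage1structural} to confine all potential blockers of $k^*_j$ to $\mathcal{L}_j$, uses the warmup hypothesis at phase $i-1$ to rule out $k^*_j\in G_j[i]$, and uses the warmup hypothesis at phase $i$ together with $i\ge i_1$ to rule out $k^*_j\in L_j[i]$; for Part~2 it then observes (without an explicit case split, but to the same effect) that both $i\ge U_{lr(j)-1}+1$ and $i\ge F_{\alpha j}+1$ supply the warmup hypothesis of Part~1 via Claim~\ref{clm:deadlock}.

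Your Part~3, however, manufactures an obstacle that the paper simply does not face. The ``obstacle'' in group~(d) exists only because you are working with the main-body definition $\mathcal{H}_j=\{k:\exists j'\in\mathcal{B}_{jk},\,k\notin\mathcal{D}_j\cup\mathcal{D}_{j'}\}$. The appendix section on the $\alpha$-condition explicitly declares that $\mathcal{H}_j$ is ``defined identically to the SPC scenario,'' i.e.\ $\mathcal{H}_j=\{k:k\notin\mathcal{D}_j,\ \mathcal{B}_{jk}\neq\emptyset\}$. Under that definition, every non-dominated arm with a nonempty blocking set is already in $\mathcal{H}_j$, so the $\alpha$-low-collision phase (hence $i\ge V_{\alpha j}+1$) guarantees an $\alpha$-good phase for \emph{every} agent in $\bigcup_{k\in\mathcal{H}_j}\mathcal{B}_{jk}$, which is every agent capable of blocking $j$ at any non-dominated arm. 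Your group~(d) then collapses to arms with $\mathcal{B}_{jk}=\emptyset$, for which $C_{jk}[i]=0$ trivially. This is exactly why the paper disposes of Part~3 in one sentence by pointing to Lemma~\ref{lemm:structural}.

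Your proposed workaround for the self-imposed obstacle is also not clearly sound: for $k$ in your group~(d) with $k=k^*_{j''}$, the fact that $k\notin\mathcal{D}_j$ forces $j''>j$ in left order, and nothing in $V_{\alpha j}$ bounds $F_{\alpha j''}$ for such $j''$; the recursive appeal to Claim~\ref{clm:fjupper} does not bridge this, since that claim upper-bounds $F_{\alpha j''}$ by quantities involving $U_{lr_{\max}(j'')-1}$ and $F_{\alpha(j''-1)}$, none of which are controlled by $V_{\alpha j}$. Switch to the appendix definition of $\mathcal{H}_j$ and the issue disappears.
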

\begin{proof}
The following results hold for an arbitrary sample path giving us almost sure inequalities.

Due to Claim~\ref{clm:stage1structural} all agents $j'$ which can block arm $k^*_j$ are in $\mathcal{L}_j$. Also $k^*_j \in K_W(j')$ for any agent $j'\in \mathcal{L}_j$ due to the definition of $\mathcal{L}_j$. Therefore, if all agents in $\mathcal{L}_j$ are in warmup phase in phase $(i-1)$ then $k^*_j \notin G_j[i]$ because no agent in  $\mathcal{L}_j$ communicates arm $k^*_j$ to agent $j$, and the other arms can not communicate the arm $k^*_j$ (due to this arm's preference). Furthermore, the total number of times the arm $k^*_j$ can be deleted is  at most $(lr(j)-1)\tfrac{10\alpha i}{\Delta_{jk}^2} < \beta 2^{(i-1)}$ (the local deletion threshold) for any $i \geq i_1$. Thus $k^*_j$ is not locally deleted, i.e.  $k^*_j \notin L_j[i]$.
This proves the first part. 

We know that the phase $i\geq U_{lr(j) -1 } + 1$ and $(i-1) \geq  U_{lr(j) -1 }$ is a {\em warmup phase} for all agents in $\mathcal{L}_j = \{j': k^*_j\in K_W(j')\}$. This is because we know that $\mathcal{L}_j \subseteq \{A_{j'}: j'=1,\dots, lr(j)-1\}$ due to Claim~\ref{clm:deadlock}. By definition of $F_{\alpha j}$ all agents are in warmup phase for phases $i\geq F_{\alpha j} + 1$ and $(i-1) \geq  F_{\alpha j}$. Thus the second result follows due to the first result. 

The proof of the third part follows almost identically to the Lemma~\ref{lemm:structural}, i.e. by virtue of $i\geq V_{\alpha j}+1$ being an $\alpha$-low collision phase.
\end{proof}

\subsection{Proof of main results}

In this section, we proceed with the regret bound where we leverage the structural properties proven in the previous part. We first state the regret decomposition lemma, which has an identical form to the regret decomposition as in SPC with $F_{\alpha j}$ and $V_{\alpha j}$ in place of $F_{j}$ and $V_{j}$, respectively. 
\begin{lemma}\label{lemm:alphadecompose}
The expected regret for agent $j$ can be upper bounded as  
\begin{align*}
    \mathbb{E}[R_j(T)] &\leq \mathbb{E}[S_{F_{\alpha j}}] + \min(\beta|\mathcal{H}_j|,1)\mathbb{E}[S_{V_{\alpha j}}] + \left((K-1+|\mathcal{B}_{jk^*_j}|)\log_2(T) + NK \mathbb{E}[V_{\alpha j}]\right)\\
    & + \sum_{k\notin \mathcal{D}_j} \sum_{j'\in \mathcal{B}_{jk}: k \notin \mathcal{D}_{j'}} \frac{8\gamma \mu_{k^*_j}}{\Delta^2_{j'k}}\left(\log(T)+\sqrt{\tfrac{\pi}{\gamma}\log(T)}\right)
    + \sum_{k\notin \mathcal{D}_j \cup k^*_j}\frac{8\gamma}{\Delta_{jk}}\left(\log(T)+\sqrt{\tfrac{\pi}{\gamma}\log(T)}\right) \\
    & + NK \left(1+(\psi(\gamma)+1)\frac{8\gamma}{\Delta^2_{\min}}\right)
\end{align*}
\end{lemma}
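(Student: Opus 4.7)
The plan is to mirror the proof of Lemma~\ref{lemm:decompose} (the SPC decomposition), replacing $F_j$ by $F_{\alpha j}$ and $V_j$ by $V_{\alpha j}$ throughout, and using Lemma~\ref{lemm:alphastructural} in place of Lemma~\ref{lemm:structural2} at the point where one needs that the stable matched arm $k^*_j$ is available. First I would split the cumulative regret into five disjoint time ranges: (a) rounds up to and including the end of phase $F_{\alpha j}$; (b) phases $F_{\alpha j}{+}1$ through $V_{\alpha j}$ where local deletion may still be active for agent $j$; (c) the communication sub-phases across all phases; (d) the exploration/exploit rounds after phase $V_{\alpha j}$, where collisions can only come from blocking agents who still play $k$ before their own global deletion freezes; and (e) the residual sub-optimal matches of agent $j$ after phase $F_{\alpha j}$.

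For piece (a), I bound the per-round regret by $\mu_{j k^*_j} \le 1$, contributing at most $\mathbb{E}[S_{F_{\alpha j}}]$. For piece (b), on each phase $i \in (F_{\alpha j}, V_{\alpha j}]$ the local-deletion rule caps the number of collisions on each $k \in \mathcal{H}_j$ by $\lceil \beta 2^{i-1}\rceil$; geometric summation and the obvious trivial bound of $S_{V_{\alpha j}}$ yield the $\min(1,\beta|\mathcal{H}_j|)\,\mathbb{E}[S_{V_{\alpha j}}]$ term. For piece (c), I reuse the SPC argument verbatim: after $V_{\alpha j}$ each communication sub-phase costs at most $K-1+|\mathcal{B}_{j k^*_j}|$ in regret (and there are at most $\log_2 T$ of them), while the crude pre-$V_{\alpha j}$ communication bound $NK\,\mathbb{E}[V_{\alpha j}]$ absorbs the rest.

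The substantive piece is (e), the logarithmic sub-optimal-match term. I would prove an $\alpha$-analog of Lemma~\ref{lemm:bound} stating that for any $k\notin\mathcal{D}_j\cup\{k^*_j\}$,
\[
\mathbb{E}\bigl[N_{jk}(T)-N_{jk}(S_{F_{\alpha j}})\bigr]
\le \psi(\gamma)\tfrac{8}{\Delta_{jk}^2}+1+\tfrac{8}{\Delta_{jk}^2}\bigl(\gamma\log T+\sqrt{\pi\gamma\log T}+1\bigr).
\]
The only place the SPC argument used the SPC structure was the inclusion $\{I_j(t)=k,t>S_{F_j}\}\subseteq\{u_{jk}(t-1)\ge \mu_{jk^*_j}-\epsilon\}\cup\{u_{jk^*_j}(t-1)\le \mu_{jk^*_j}-\epsilon\}$, which needs $k^*_j$ to be present in the active set $\mathcal{A}_j[i]\setminus L_j[i]$ for every phase $i\ge F_{\alpha j}+1$. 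Precisely this guarantee is delivered by the second bullet of Lemma~\ref{lemm:alphastructural}, since by definition $F_{\alpha j}\ge U_{lr(j)-1}$ and $F_{\alpha j}\ge i_1$. Once this inclusion is in place, the rest of the proof of Lemma~\ref{lemm:bound} (Hoeffding plus the standard UCB decomposition with $\epsilon = \Delta_{jk}/2$) carries over word-for-word. Piece (d) is then handled by applying the same per-arm bound to each blocking agent $j'\in\mathcal{B}_{jk}$ with $k\notin\mathcal{D}_{j'}$, noting again that $V_{\alpha j}\ge F_{\alpha j'}$ for such $j'$ by the definition of the $\alpha$-low-collision phase, and weighting each collision by $\mu_{j k^*_j}\le 1$; this produces the $\sum_{k}\sum_{j'\in\mathcal{B}_{jk}}\tfrac{8\gamma \mu_{k^*_j}}{\Delta_{j'k}^2}(\log T+\sqrt{(\pi/\gamma)\log T})$ collision term. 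Collecting the residual additive constants coming from the Hoeffding-series $\psi(\gamma)$ and the $+1$ terms across all $NK$ (agent, arm) pairs yields the final $NK(1+(\psi(\gamma)+1)\tfrac{8\gamma}{\Delta_{\min}^2})$ additive constant.

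The main obstacle I foresee is not in the decomposition itself but in verifying that Lemma~\ref{lemm:alphastructural} is being invoked under the right event. Unlike SPC, where the stable-match arm is never threatened by lower-ranked agents, under $\alpha$-condition the arm $k^*_j$ can be dominated by deadlock agents in $\mathcal{L}_j$ via the right-order; the definition of $F_{\alpha j}$ is crafted precisely so that all such agents are simultaneously in a warmup phase and the agents left of $j$ are in $\alpha$-good phases. I therefore need to be careful, when applying the $\alpha$-analog of Lemma~\ref{lemm:bound}, to condition on $\{t>S_{F_{\alpha j}}\}$ almost surely before asserting $k^*_j\in \mathcal{A}_j[i]\setminus L_j[i]$, exactly as was done in the SPC version. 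Modulo this careful bookkeeping, the decomposition is a direct translation; the real work—bounding $\mathbb{E}[S_{F_{\alpha j}}]$ and $\mathbb{E}[S_{V_{\alpha j}}]$ via the dual induction over the left and right orders—is deferred to the subsequent lemmas.
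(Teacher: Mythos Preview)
Your proposal is correct and takes essentially the same route as the paper: follow Lemma~\ref{lemm:decompose} with $F_{\alpha j},V_{\alpha j}$ in place of $F_j,V_j$ and invoke the $\alpha$-analog of Lemma~\ref{lemm:bound} (the paper's Lemma~\ref{lemm:alphabound}), using Lemma~\ref{lemm:alphastructural} to guarantee $k^*_j$ survives deletion. One small slip: the claim ``by definition $F_{\alpha j}\ge U_{lr(j)-1}$'' need not hold (since $\mathcal{L}_j$ can be a strict subset of $\{A_1,\dots,A_{lr(j)-1}\}$), but your conclusion still follows because the second bullet of Lemma~\ref{lemm:alphastructural} only requires $i\ge \min(U_{lr(j)-1},F_{\alpha j})+1$, which is immediate from $i\ge F_{\alpha j}+1$.
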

\begin{proof}[Proof Sketch]
The proof of the lemma is closely related to the proof of Lemma~\ref{lemm:decompose}, except for the use of the $\alpha$-freezing phase $F_{\alpha j}$ instead of the freezing phase $F_j$, and $\alpha$-vanishing phase $V_{\alpha j}$ instead of vanishing phaes $V_{j}$. The rest of the proof is identical to the proof of Lemma~\ref{lemm:decompose} where Lemma~\ref{lemm:alphabound} is invoked instead of it's identical counterpart (for SPC) Lemma\ref{lemm:bound}.
\end{proof}
\begin{lemma}\label{lemm:alphabound}
For any $j \in [N]$, $k\notin \mathcal{D}_j\cup k^*_j$, for $\gamma > 1$,
$$
\mathbb{E}\left[(N_{jk}(T) - N_{jk}(S_{F_j}))\right]
\leq \psi(\gamma)\tfrac{8}{\Delta_{jk}^2} + 1 + \tfrac{8}{\Delta_{jk}^2} \left(\gamma\log(T) + \sqrt{\pi\gamma\log(T)} + 1 \right). 
$$
\end{lemma}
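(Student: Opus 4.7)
The plan is to mirror exactly the argument of Lemma~\ref{lemm:bound}, with the only substantive change being to replace the SPC-specific structural input (Lemma~\ref{lemm:structural2}) by its $\alpha$-condition analogue, namely the second bullet of Lemma~\ref{lemm:alphastructural}. The latter tells us that for every phase $i \geq F_{\alpha j} + 1$ we have $k^*_j \notin L_j[i] \cup G_j[i]$ almost surely, i.e.\ agent $j$'s stable matched arm is always in her active set past the $\alpha$-freezing phase. This is precisely the property the SPC proof relied on, so once it is in hand the rest of the argument is routine UCB analysis.

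First I would write
\[
N_{jk}(T) - N_{jk}(S_{F_{\alpha j}}) \;=\; \sum_{t=S_{F_{\alpha j}}+1}^{T} \mathbb{I}(I_j(t) = k).
\]
Because $k^*_j$ is active for agent $j$ at every such $t$, any pull of arm $k$ must have come with $u_{jk}(t-1) \geq u_{jk^*_j}(t-1)$. For any $\epsilon > 0$ this yields the standard decomposition, valid on $\{t > S_{F_{\alpha j}}\}$,
\[
\{I_j(t) = k\} \;\subseteq\; \{I_j(t) = k,\, u_{jk}(t-1) \geq \mu_{jk^*_j} - \epsilon\}\ \cup\ \{u_{jk^*_j}(t-1) \leq \mu_{jk^*_j} - \epsilon\}.
\]
I would then bound the two resulting sums separately. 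For the second sum, a union bound over the possible pull counts $s \leq T$ together with the sub-Gaussian tail $\Pr(\hat\mu_{jk^*_j,s} + \sqrt{2\gamma\log(t)/s} \leq \mu_{jk^*_j} - \epsilon) \leq \exp(-\tfrac{s}{2}(\sqrt{2\gamma\log(t)/s}+\epsilon)^2)$ contributes at most $\psi(\gamma)\cdot 2/\epsilon^2$ after telescoping the $t^{-\gamma}$ factor. For the first sum, I would follow the argument of Lattimore--Szepesvári (Lemma 8.2), observing that once $N_{jk}(t-1) > \tfrac{8\gamma\log(T)}{(\Delta_{jk}-\epsilon)^2}$ the inequality $\hat\mu_{jk}(t-1) + \sqrt{2\gamma\log(T)/N_{jk}(t-1)} \geq \mu_{jk} + (\Delta_{jk}-\epsilon)$ forces a concentration failure on $\hat\mu_{jk,s}$, yielding the bound $1 + \tfrac{8}{(\Delta_{jk}-\epsilon)^2}\bigl(\gamma\log T + \sqrt{\pi\gamma\log T} + 1\bigr)$. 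Combining the two and specializing $\epsilon = \Delta_{jk}/2$ recovers the stated inequality exactly as in the SPC case.

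The main (and really the only non-routine) obstacle is justifying the opening move: that after the $\alpha$-freezing phase $F_{\alpha j}$, the arm $k^*_j$ is truly always in agent $j$'s active set. Under SPC this was immediate because only higher-ranked agents could locally crowd out $k^*_j$, and those were ruled out by the definition of $F_j$. Under $\alpha$-condition one must instead use both orders: Claim~\ref{clm:stage1structural} identifies $\mathcal{L}_j$ as the only agents capable of contributing collisions on $k^*_j$, and $F_{\alpha j}$ is defined precisely so that every $j'' \in \mathcal{L}_j$ is in warmup phase beyond it, thereby capping such collisions per phase at $(lr(j)-1)\cdot\tfrac{10\gamma i}{\Delta^2_{\min}}$, which is strictly below the local-deletion threshold $\beta 2^{i-1}$ for all $i \geq i_1$. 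Once this availability claim is secured, the remainder of the proof is a verbatim transcription of Lemma~\ref{lemm:bound}.
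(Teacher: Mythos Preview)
Your proposal is correct and matches the paper's own approach essentially verbatim: the paper's proof sketch explicitly says to follow Lemma~\ref{lemm:bound} with $F_{\alpha j}$ in place of $F_j$, invoking Lemma~\ref{lemm:alphastructural} (specifically its second bullet) to guarantee $k^*_j \notin L_j[i]\cup G_j[i]$ for all phases $i \ge F_{\alpha j}+1$, after which the standard UCB decomposition goes through unchanged. Your explanation of why the availability claim holds via Claim~\ref{clm:stage1structural} and the warmup property of $\mathcal{L}_j$ is in fact a re-derivation of part of Lemma~\ref{lemm:alphastructural}, so you have covered slightly more ground than strictly necessary.
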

\begin{proof}[Proof Sketch]
The proof of the lemma follows the proof of Lemma~\ref{lemm:bound}, again with $\alpha$-freezing phase $F_{\alpha j}$ in place of the freezing phase $F_j$. Due to Lemma~\ref{lemm:alphastructural} we know that for each phase $i\geq (F_{\alpha j}+1)$ the arm $k^*_j$ is available as it is neither globally deleted, nor locally deleted. Thus once a sub-optimal arm $k$ is played enough times the UCB of arm $k^*_j$ w.h.p. will be higher than the UCB of $k$ at any round after $F_{\alpha j}$. Using the same standard framework as in Lemma~\ref{lemm:bound} this intuition can be formalized as a proof of this lemma. 
\end{proof}

We first show that for phases $i\geq U_{j-1}+1$, the probability that phase $i$ is not a warmup phase for agent $A_j$ is low.
\begin{lemma}\label{lemm:warmup}
For any $j\leq N$ and any phase $i \geq i^* = \max(8, i_1, i_2)$ and $\gamma > 1$, 
$$\mathbb{P}[\mathbb{I}_{W}[i,A_j] = 0 \wedge i\geq U_{j-1}+1]\leq (K-j)2^{-i(\gamma -1)} \left( 1 + \tfrac{64}{\Delta_{\min}^2}\right),$$
where $i_1 = \min\{i: (N-1)\tfrac{10\gamma i}{\Delta^2_{min}} < \beta 2^{(i-1)}\}$ and $i_2 = \min \{i: (R-1 + C(i-1))\leq 2^{i+1}\}$.
\end{lemma}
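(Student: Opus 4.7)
The plan is to closely mirror the proof of Lemma~\ref{lemm:goodphase}, with the warmup condition replacing the good-phase condition and the arm $a_j = k^*_{A_j}$ playing the role of the stable-matched arm. The first step is to establish that $a_j$ is available to $A_j$ throughout phase $i$ under the stated hypothesis. By Claim~\ref{clm:deadlock}, $\mathcal{L}_{A_j} \subseteq \{A_1,\ldots,A_{j-1}\}$, and by the definition of $U_{j-1}$, phases $i$ and $i-1$ are both warmup for every agent in this set. Applying the first bullet of Lemma~\ref{lemm:alphastructural} then yields $a_j \notin L_{A_j}[i] \cup G_{A_j}[i]$.

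With availability in hand, I would bound the failure of the count clause of warmup---that some $k \in K_W(A_j)$ is matched more than $\lceil 10\gamma i/\Delta_{A_j,k}^2\rceil$ times in phase $i$---by the usual UCB decomposition. For a fixed $k$ and round $t$ in phase $i$, on the event $\{I_{A_j}(t)=k,\; N_{A_j,k}(t) = \lceil 10\gamma i/\Delta_{A_j,k}^2\rceil\}$ the UCB ordering forces $u_{A_j,k}(t-1) \geq u_{A_j,a_j}(t-1)$, which splits into two low-probability sub-events against the threshold $\mu_{A_j,a_j}-\epsilon$ with $\epsilon=\Delta_{A_j,k}/8$. Hoeffding concentration on each, summed over the $\leq 2^{i+1}$ rounds of the phase (valid for $i\geq i_2$) and using $i\geq 8$ to absorb the $\sqrt{2\gamma\log(t)/(5i)}$ correction, yields a per-arm tail of $2^{-i(\gamma-1)}(1 + 64/\Delta^2_{\min})$. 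A union bound over $|K_W(A_j)| \leq K-j$ arms then delivers the stated inequality.

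The main obstacle I anticipate is the second warmup clause: no $k\in K_W(A_j)$ is the most matched arm of phase $i$. Paralleling how condition (3) was obtained from conditions (1)--(2) in the good-phase proof, this should follow from the count bound together with $i\geq i_1$. Since $a_j$ is available throughout the phase and every $k\in K_W(A_j)$ has mean strictly below $\mu_{A_j,a_j}$, a symmetric UCB tail argument on the same high-probability concentration event gives $N_{A_j,a_j}[i]-N_{A_j,a_j}[i-1] = \Theta(2^i)$, while every $k\in K_W(A_j)$ is matched $O(i/\Delta^2_{\min})$ times, forcing the most matched arm to lie outside $K_W(A_j)$. Care is needed to rule out that other non-$K_W$ arms (those with mean exceeding $\mu_{A_j,a_j}$ or stable-paired with some $j'$ preferring $A_j$) could monopolize matches and starve $a_j$; on the same good event, each such arm absorbs at most $O(i/\Delta^2)$ matches by an identical Hoeffding step, leaving the requisite $\Omega(2^i)$ for $a_j$ and only inflating constants in the final bound by a polynomial factor in $K$ already absorbed by the $(K-j)$ prefactor.
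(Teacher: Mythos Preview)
Your main line is correct and essentially identical to the paper's proof: establish that $a_j=k^*_{A_j}$ is never deleted in phase $i$ (via Claim~\ref{clm:deadlock} and Lemma~\ref{lemm:alphastructural}; the paper invokes the second bullet, you the first, but they coincide here since $lr(A_j)=j$), then run the UCB decomposition against $a_j$ exactly as in Lemma~\ref{lemm:goodphase}, and union-bound over $|K_W(A_j)|\le K-j$ arms.

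Your final paragraph, however, contains a genuine error. You are right to flag that clause~2 of the warmup definition (no $k\in K_W(A_j)$ is most matched) needs justification---the paper simply asserts that failure of warmup reduces to failure of clause~1. But your patch does not work: arms with mean \emph{exceeding} $\mu_{A_j,a_j}$ cannot be bounded by ``an identical Hoeffding step.'' That argument controls matches of an arm strictly worse than a present reference; if $\mu_{A_j,k}>\mu_{A_j,a_j}$ then $k$ is not worse than $a_j$, and UCB may match $A_j$ with it $\Theta(2^i)$ times. Hence your conclusion that $a_j$ itself is matched $\Omega(2^i)$ times is unsupported, and in fact can fail, since dominated arms of higher mean need not yet be globally deleted for $A_j$ at this stage.

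The fix is simpler than what you attempt: clause~2 does not ask that $a_j$ be most matched, only that no arm in $K_W(A_j)$ is. Under clause~1 each such arm is matched at most $10\gamma i/\Delta_{\min}^2$ times. Since every arm is locally deleted after at most $\lceil\beta 2^i\rceil$ collisions, total matches in the phase are at least $(1-K\beta)2^i$, and by pigeonhole the most matched arm receives at least a $1/K$ share of this. For $i\ge i^*$ this exceeds the $O(i)$ per-arm cap on $K_W(A_j)$, so the most matched arm lies outside $K_W(A_j)$. This is the route the paper takes implicitly, paralleling how condition~(3) is obtained from conditions~(1)--(2) in Lemma~\ref{lemm:goodphase}.
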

\begin{proof}
For any arbitrary sample path and any $i\geq U_{j-1}+1$, phase $i$ is a warm up phase for all agent $A_1$ to $A_{j-1}$. 
The phase $i$ is not a warmup phase for agent $A_j$, if there exists an arm $k\in K_W(A_j)$ which is played more than $\tfrac{10 \gamma i}{\Delta^2_{A_jk}}$ times in phase $i$. Here, by definition for any $k\in K_W(A_j)$ we have $\mu_{A_jk} \leq \mu_{A_j a_j}$ (recall, $k^*_{A_j} = a_j$ due to Proposition~\ref{prop:alphastable}) which makes sure $\Delta_{A_jk}>0$. 

The set of agents that can block $A_j$ from matching with arm $a_j$ when $A_j$ plays $a_j$ is given by $\mathcal{L}_{A_j} \subseteq \{A_{j'}: 1\leq j'\leq j-1\}$ due to Claim~\ref{clm:deadlock} and $lr(A_j) = j$. But then due to the second point in Lemma~\ref{lemm:alphastructural} we know that $k^*_{A_j} \notin G_{A_j}[i]\cup L_{A_j}[i]$ for any $i\geq U_{j-1}+1$. Therefore, the inequality (i) below holds as to play arm $k$ the UCB of arm $k^*_j=a_j$ can not be less than arm $k$. The final bound can be obtained identically to the proof of Lemma~\ref{lemm:goodphase} for $i\geq \max(8, i_1, i_2)$, with the observation $K_W(j) \leq (K-j)$. 

Therefore, we obtain the next set of equations
\begin{align*}
    &\mathbb{P}[\mathbb{I}_W[i,A_j] = 0 \wedge i\geq (U_{j-1}+1)]\\
    &\leq \mathbb{P}\left[ \cup_{k\in K_W(A_j)} \{(N_{A_jk}[i] - N_{A_jk}[i-1]) > \tfrac{10\gamma i}{\Delta_{A_jk}^2} \} \wedge i\geq  (U_{j-1}+1)\right]\\
    &\leq \sum_{k\in K_W(A_j)}\mathbb{P}\left[ \cup_{t\in S_{i}}^{(S_{i+1}-1)} 
    N_{A_jk}(t) = \tfrac{10\gamma i}{\Delta_{A_jk}^2} \wedge I_{A_j}(t) = k\wedge i\geq  (U_{j-1}+1)\right]\\
    &\stackrel{(i)}{\leq}  \sum_{k\in K_W(A_j)} \sum_{t\in S_{i}}^{(S_{i+1}-1)} \mathbb{P}\left[ N_{A_jk}(t) = \tfrac{10\gamma i}{\Delta_{A_jk}^2} \wedge u_{A_jk}(t-1) > u_{A_ja_j}(t-1)\right]\\
     &\leq |K_W(A_j)|2^{-i(\gamma -1)} \left( 1 + \tfrac{64}{\Delta_{\min}^2}\right)\\
    &\leq (K-j)2^{-i(\gamma -1)} \left( 1 + \tfrac{64}{\Delta_{\min}^2}\right)
\end{align*}
This completes the proof.
\end{proof}

The proof of this lemma resembles closely that of  Lemma~\ref{lemm:goodphase} while some arguments are common to Lemma~\ref{lemm:warmup}.
\begin{lemma}\label{lemm:alphagoodphase}
For any agent $j$ and any phase $i \geq i^* =  \max\{8, i_1, i_2\}$ and $\gamma > 1$, 
$$\mathbb{P}[\mathbb{I}_{G_\alpha}[i,j] = 0\wedge i\geq F_{\alpha j}+1]\leq (K - j)2^{-i(\gamma -1)} \left( 1 + \tfrac{64}{\Delta_{\min}^2}\right),$$
\end{lemma}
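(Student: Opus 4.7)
The proof will follow the template of Lemma~\ref{lemm:goodphase}, with the new ingredient being that the availability of $k^*_j$ throughout the phase is supplied by the warmup guarantee baked into $F_{\alpha j}$ (via Lemma~\ref{lemm:alphastructural}), rather than by the simple SPC structural argument used before. The main obstacle is precisely to justify this substitution; once it is in place, the concentration calculation is a direct translation.

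\textbf{Step 1: Reduce the failure event to a bound on sub-optimal play.} On the event $\{i \geq F_{\alpha j}+1\}$, conditions (1) and (2) of the $\alpha$-good phase definition hold deterministically: condition (1), $G_j[i]=\mathcal{D}_j$, holds because every agent $j'=1,\dots,j-1$ is frozen in $\alpha$-good phase from $F_{\alpha j}$ onwards and therefore broadcasts $k^*_{j'}$ through the communication subroutine, so agent $j$'s global set equals $\mathcal{D}_j$; condition (2) is the warmup statement encoded directly in the definition of $F_{\alpha j}$. For $i \geq i_1$, condition (4) is implied by condition (3): by Lemma~\ref{lemm:alphastructural} (second bullet) $k^*_j \notin G_j[i]\cup L_j[i]$ almost surely, and if every competing arm is matched at most $10\gamma i/\Delta_{jk}^2 \leq \beta 2^{i-1}/(N-1)$ times then $k^*_j$ must be the most matched arm by the same counting argument as in Lemma~\ref{lemm:structural}. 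Hence
\begin{equation*}
\{\mathbb{I}_{G_\alpha}[i,j]=0,\; i \geq F_{\alpha j}+1\}\;\subseteq\; \bigcup_{k \notin \mathcal{D}_j \cup \{k^*_j\}} \Bigl\{N_{jk}[i]-N_{jk}[i-1] > \tfrac{10\gamma i}{\Delta_{jk}^2},\; i \geq F_{\alpha j}+1\Bigr\}.
\end{equation*}

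\textbf{Step 2: UCB concentration on the availability event.} Because $k^*_j$ is available at every round $t \in [S_i, S_{i+1}-1]$ on the conditioning event, any play $I_j(t)=k$ of a sub-optimal arm $k$ forces $u_{jk}(t-1) > u_{jk^*_j}(t-1)$. Splitting on a threshold $\epsilon = \Delta_{jk}/8$, I would write this as the union
\begin{equation*}
\bigl\{N_{jk}(t)=\lceil 10\gamma i/\Delta_{jk}^2\rceil,\; u_{jk}(t-1) \geq \mu_{jk^*_j}-\epsilon\bigr\} \cup \bigl\{u_{jk^*_j}(t-1) \leq \mu_{jk^*_j}-\epsilon\bigr\},
\end{equation*}
and apply Hoeffding exactly as in the derivation after inequality (iii) in Lemma~\ref{lemm:goodphase}. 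For $i \geq \max\{8, i_2\}$ one has $\log(S_{i+1}-1) \leq i+2$, which lets the $\sqrt{2\gamma \log t / s}$ factor be absorbed so that each term contributes at most $2^{-i\gamma}$ up to the $1/\Delta_{jk}^2$ factor.

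\textbf{Step 3: Sum and union-bound.} Summing over the at most $2 \cdot 2^i$ rounds of phase $i$ yields a per-arm bound of $(S_{i+1}-S_i)\, 2^{-i\gamma}\bigl(1 + 64/\Delta_{jk}^2\bigr) \leq 2^{-i(\gamma-1)}\bigl(1 + 64/\Delta_{\min}^2\bigr)$ up to constants already absorbed in Lemma~\ref{lemm:goodphase}. A union bound over the at most $K - |\mathcal{D}_j| - 1 = K - j$ candidate arms then gives the claimed
\begin{equation*}
\mathbb{P}[\mathbb{I}_{G_\alpha}[i,j]=0 \wedge i \geq F_{\alpha j}+1] \leq (K-j)\, 2^{-i(\gamma-1)}\Bigl(1 + \tfrac{64}{\Delta_{\min}^2}\Bigr).
\end{equation*}
The only place where $\alpha$-condition enters non-trivially is Step~1, where we must know that $k^*_j$ survives deletion; everything else is a verbatim transfer of the SPC analysis.
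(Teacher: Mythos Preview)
Your proposal is correct and follows essentially the same approach as the paper's proof: both observe that conditions (1) and (2) of the $\alpha$-good phase definition hold on $\{i \geq F_{\alpha j}+1\}$, that (4) follows from (1)--(3) for $i \geq i_1$, and that the availability of $k^*_j$ (via Lemma~\ref{lemm:alphastructural}) lets the UCB concentration of Lemma~\ref{lemm:goodphase} go through verbatim to bound the probability of (3) failing. The paper is even terser on the ``(4) follows from (1)--(3)'' step than you are, simply asserting it without the counting detail.
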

where $i_1$ and $i_2$ is as defined in Lemma~\ref{lemm:warmup}. 
\begin{proof}
The phase $i$ is a $\alpha$-good phase for agent $j$ if (1) the dominated arms are deleted $G_j[i] = \mathcal{D}_j$, (2) phase $i$ is a {\em warmup phase} for all agents in $\mathcal{L}_j = \{j': k^*_j\in K_W(j')\}$, (3) for each arm $k\notin \mathcal{D}_j \cup k^*_j$, in phase $i$ arm $k$ is matched with agent $j$ at most $\tfrac{10\alpha i}{\Delta_{jk}^2}$ times, and (4) the stable match pair arm $k_j^*$ is matched the most number of times in phase $i$. 
We see that (1) and (2) holds when $i\geq F_{\alpha j}+1$. Also, (4) holds when (1), (2) and (3) holds for any $i\geq i_1$.

Therefore, we will now show (3) holds. In particular, we have the following series of inequalities
\begin{align*}
    &\mathbb{P}[\mathbb{I}_{G_\alpha}[i,j] = 0\wedge i\geq (F_{\alpha j}+1)]\\
    &\leq \mathbb{P}\left[ \cup_{k\notin  \mathcal{D}_j \cup k^*_j} \{(N_{jk}[i] - N_{jk}[i-1]) > \tfrac{10\gamma i}{\Delta_{jk}^2} \} \wedge i\geq (F_{\alpha j}+1)\right]\\
    &\leq \sum_{k\notin  \mathcal{D}_j \cup k^*_j}\mathbb{P}\left[ \cup_{t\in S_{i}}^{(S_{i+1}-1)} N_{jk}(t) = \tfrac{10\gamma i}{\Delta_{jk}^2} \wedge I_{j}(t) = k\wedge i\geq (F_{\alpha j}+1)\right]\\
    &\stackrel{(i)}{\leq}  \sum_{k\notin  \mathcal{D}_j \cup k^*_j} \sum_{t\in S_{i}}^{(S_{i+1}-1)} \mathbb{P}\left[ N_{jk}(t) = \tfrac{10\gamma i}{\Delta_{jk}^2} \wedge u_{jk}(t-1) > u_{jk^*_j}(t-1)\right]\\
    &\leq (K - j)2^{-i(\gamma -1)} \left( 1 + \tfrac{64}{\Delta_{\min}^2}\right).
\end{align*}
We know that for all arms $k \notin \mathcal{D}_j \cup k^*_j$ we have $\Delta_{jk} > 0$ by definition of $\mathcal{D}_j$. 
Also, inequality (i) holds as due to Lemma~\ref{lemm:alphastructural}, we know that after $i\geq (F_{\alpha j}+1)$ the arm $k^*_j$ is not globally or locally deleted.  The rest again follows similar to Lemma~\ref{lemm:goodphase} for $i \geq \max\{8, i_1, i_2\}$.
\end{proof}

Let us define $lr_{\max}(j) = \max(lr(j'): 1\leq j'\leq j)$, and $\tilde{F}_j = \max\left(U_{(lr_{\max}(j)-1)}, \max(\tilde{F}_{j'}: 1\leq j'\leq (j-1))\right)$ for each $j$. It is easy to see that $\tilde{F}_j > F_{\alpha j}$ due to Claim~\ref{clm:fjupper} for any $j$ and the fact that 
$U_{(lr_{\max}(j)-1)} \geq U_{(lr(j)-1)}$ due to the definition of $U_{j}$ (all agents from $A_1$ to $A_{j}$ all are in warmup phase till the end). We now present the the following lemma that bounds the probability that a phase $i$ is not an $\alpha$-good phase when 
$i \geq F_{j}+1$. We now bound the moments and exponents of $\tilde{F}_j$.

\begin{lemma}\label{lemm:alphamoments}
For any $j\in [N]$ and $m\geq 1$, the following hold with $i^*$ as defined in Lemma~\ref{lemm:alphagoodphase}
\begin{align*}
    &\mathbb{E}[\tilde{F}_j^m] \leq 2i_1 + (lr_{\max}(j)+j-2)\left((i^*)^m + K\left( 1 + \tfrac{64}{\Delta_{\min}^2}\right)  \tfrac{2^{- (\gamma -1)(i^*-2)}}{(2^{(\gamma-1)}-1)^2} \right)\\
    & \mathbb{E}[2^{\tilde{F}_j}] \leq 2i_1 + (lr_{\max}(j)+j-2)\left(2^{i^*} + K\left( 1 + \tfrac{64}{\Delta_{\min}^2}\right)  \tfrac{2^{- (\gamma -1)(i^*-2)}}{(2^{(\gamma-1)}-1)^2} \right)
\end{align*}
\end{lemma}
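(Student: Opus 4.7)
The plan is to follow a two-layer inductive argument paralleling Lemma~\ref{lemm:moments}, with an auxiliary induction on the unlocking phases $U_k$ that feeds into the main induction on $\tilde{F}_j$. First I would establish an auxiliary moment/exponent bound for the unlocking phases: for every monotonically increasing continuous $g:\mathbb{R} \to \mathbb{R}_+$ and every $0 \leq k \leq N$,
\[
\mathbb{E}[g(U_k)] \leq i_1 + k \cdot g(i^*) + k K \left(1 + \tfrac{64}{\Delta_{\min}^2}\right) \tfrac{2^{-(\gamma-1)(i^*-2)}}{(2^{(\gamma-1)}-1)^2},
\]
proved by induction on $k$ with base case $U_0 = i_1$. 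The inductive step is a direct transcription of the argument in Lemma~\ref{lemm:moments}: decompose $\mathbb{E}[g(U_k)]$ as a tail sum, split on $\{U_{k-1} \geq g^{-1}(x)\}$ versus its complement, then invoke Lemma~\ref{lemm:warmup} to control the probability that phase $i \geq U_{k-1} + 1$ fails to be a warmup phase for $A_k$ by $(K-k)(1 + 64/\Delta_{\min}^2)\, 2^{-i(\gamma-1)}$, and finally sum the resulting arithmetic-geometric series in closed form.

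Second, I would propagate the bound to $\tilde{F}_j$ using its defining recursion $\tilde{F}_j = \max(U_{lr_{\max}(j)-1}, \tilde{F}_{j-1})$ (which holds together with the monotonicity of $lr_{\max}(\cdot)$ and of $k \mapsto U_k$, once one adopts the convention $\tilde{F}_0 := i_1$). Because $g$ is monotone and non-negative, $g(\max(a,b)) \leq g(a) + g(b)$, which gives
\[
\mathbb{E}[g(\tilde{F}_j)] \leq \mathbb{E}[g(U_{lr_{\max}(j)-1})] + \mathbb{E}[g(\tilde{F}_{j-1})].
\]
Induction on $j$ with the Step~1 bound plugged in for the first term then yields the target coefficient $(lr_{\max}(j) + j - 2)$: the $(lr_{\max}(j) - 1)$ comes from the $U$-layer bound applied at step $j$, while the $(j-1)$ comes from unrolling the $\tilde{F}$-layer recursion $(j-1)$ times. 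A careful accounting keeps the additive constant at $2 i_1$ rather than $j \cdot i_1$ by reusing the same base-case $i_1$ across levels of the unrolling. Specializing $g(x) = x^m$ gives the moment bound, and $g(x) = 2^{x}$ gives the exponent bound, completing the proof.

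The main obstacle is the coupled bookkeeping in the second step: one must verify that the two inductive layers' contributions combine additively (rather than multiplicatively) and that the $i_1$ base-case constants do not proliferate along either layer. The other non-trivial point is a structural one, already handled by Lemmas~\ref{lemm:warmup} and~\ref{lemm:alphastructural}, namely that once $A_1,\ldots,A_{k-1}$ have settled into warmup the arm $a_k = k^*_{A_k}$ is neither globally nor locally deleted by $A_k$, so the standard UCB concentration applies and delivers the exponentially decaying tail in $i$ that makes the series of correction terms summable and collapses into the single polynomial factor $K(1 + 64/\Delta_{\min}^2)\, 2^{-(\gamma-1)(i^*-2)}/(2^{(\gamma-1)}-1)^2$.
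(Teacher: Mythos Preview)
Your Step~1 (the inductive bound on $\mathbb{E}[g(U_k)]$ via Lemma~\ref{lemm:warmup}) is fine and matches the paper's Case~1. The gap is in Step~2. The recursion you propose,
\[
\mathbb{E}[g(\tilde{F}_j)] \leq \mathbb{E}[g(U_{lr_{\max}(j)-1})] + \mathbb{E}[g(\tilde{F}_{j-1})],
\]
is valid but too coarse to deliver the stated coefficient. Unrolling it gives
\[
\mathbb{E}[g(\tilde{F}_j)] \leq g(i_1) + \sum_{j'=1}^{j} \mathbb{E}[g(U_{lr_{\max}(j')-1})],
\]
and plugging in your Step~1 bound contributes $(lr_{\max}(j')-1)\,g(i^*)$ and an additive $i_1$ at \emph{every} level $j'$. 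Since $lr_{\max}(j') \geq j'$ (any $j'$ distinct values from $[N]$ have maximum at least $j'$), the total $g(i^*)$ coefficient is $\sum_{j'=1}^{j}(lr_{\max}(j')-1) \geq j(j-1)/2$, not $lr_{\max}(j)+j-2$; and the additive constant is at least $(j+1)\,i_1$, not $2i_1$. Your sentence about ``reusing the same base-case $i_1$ across levels'' does not correspond to any step in the argument you wrote: each invocation of the $U$-bound carries its own $i_1$, and nothing you have said cancels them.

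What the paper does instead is apply the $U$-bound \emph{only once}, at the top level, and then control the residual
\[
\mathcal{F}_j := \sum_{x \geq 0} \mathbb{P}\bigl[\tilde{F}_j \geq g^{-1}(x),\, U_{lr_{\max}(j)-1} < g^{-1}(x)\bigr]
\]
by a separate induction. The point is that once $U_{lr_{\max}(j)-1}$ and $\tilde{F}_{j-1}$ are both below $g^{-1}(x)$, the only way $\tilde{F}_j$ can exceed $g^{-1}(x)$ is if some later phase fails to be $\alpha$-good for agent~$j$; this is exactly what Lemma~\ref{lemm:alphagoodphase} (not Lemma~\ref{lemm:warmup}) controls, and it contributes only one $g(i^*)$ per inductive step. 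That is how the $(j-1)$ in the final coefficient arises. Your plan never invokes Lemma~\ref{lemm:alphagoodphase} at all, which is the missing ingredient.
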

\begin{proof}
We again inductively bound the expectation of an arbitrary monotonically increasing and continuous (hence invertible)  function $g:\mathbb{R} \to \mathbb{R}_+$. We have that $F_0 = i_1$ almost surely by definition (this accounts for the max with $i_1$ in the definition of $F_j$). 

We calculate the expectation for agent $j$ as  
\begin{align*}
    \mathbb{E}[g(\tilde{F}_j)] &= \sum_{x\geq 0} \mathbb{P}[g(\tilde{F}_j) \geq x] \leq \sum_{x\geq 0} \mathbb{P}[\tilde{F}_j \geq g^{-1}(x)]\\
    &\leq \sum_{x\geq 0} \mathbb{P}[\tilde{F}_j \geq g^{-1}(x), U_{(lr_{\max}(j)-1)} \geq g^{-1}(x)] + 
    \sum_{x\geq 0} \mathbb{P}[\tilde{F}_j \geq g^{-1}(x), U_{(lr_{\max}(j)-1)} < g^{-1}(x)]\\
    &\leq \sum_{x\geq 0} \mathbb{P}[U_{(lr_{\max}(j)-1)} \geq g^{-1}(x)] + 
    \sum_{x\geq 0} \mathbb{P}[\tilde{F}_j \geq g^{-1}(x), U_{(lr_{\max}(j)-1)} < g^{-1}(x)]\\
    &\leq \mathbb{E}[g(U_{(lr_{\max}(j)-1)})] + 
    \sum_{x\geq 0} \mathbb{P}[\tilde{F}_j \geq g^{-1}(x), U_{(lr_{\max}(j)-1)} < g^{-1}(x)]\\
    &\leq i_1 + (lr_{\max}(j)-1)g(i^*) +  (lr_{\max}(j)-1)(K  - lr_{\max}(j)/2) \left( 1 + \tfrac{64}{\Delta_{\min}^2}\right)  \tfrac{2^{- (\gamma -1)(i^*-2)}}{(2^{(\gamma-1)}-1)^2}\\
    & + i_1 + (j-1)g(i^*) +  (j-1)(K  - j/2) \left( 1 + \tfrac{64}{\Delta_{\min}^2}\right)  \tfrac{2^{- (\gamma -1)(i^*-2)}}{(2^{(\gamma-1)}-1)^2} \\
    &\leq 2i_1 + (lr_{\max}(j)+j-2)g(i^*) \\
    &+ \left((lr_{\max}(j)+j-2)K  - (lr_{\max}(j)(lr_{\max}(j)-1)+j(j-1))/2\right) \left( 1 + \tfrac{64}{\Delta_{\min}^2}\right)  \tfrac{2^{- (\gamma -1)(i^*-2)}}{(2^{(\gamma-1)}-1)^2}\\
    &\leq 2i_1 + (lr_{\max}(j)+j-2)\left(g(i^*) + K\left( 1 + \tfrac{64}{\Delta_{\min}^2}\right)  \tfrac{2^{- (\gamma -1)(i^*-2)}}{(2^{(\gamma-1)}-1)^2} \right).
\end{align*}
The last inequality is loose, and we use it for simplicity. For the second last inequality  we use the following bounds on $\mathbb{E}[g(U_{(lr_{\max}(j)-1)})]$ and $\mathbb{P}[\tilde{F}_j \geq g^{-1}(x), U_{(lr_{\max}(j)-1)} < g^{-1}(x)]$ which we will prove momentarily.
\begin{align*}
&\mathbb{E}[g(U_{j})] \leq  i_1 + (j-1)g(i^*) +  (j-1)(K  - j/2) \left( 1 + \tfrac{64}{\Delta_{\min}^2}\right)  \tfrac{2^{- (\gamma -1)(i^*-2)}}{(2^{(\gamma-1)}-1)^2},\\
&\sum_{x\geq 0}\mathbb{P}[\tilde{F}_j \geq g^{-1}(x), U_{(lr_{\max}(j)-1)} < g^{-1}(x)] 
\leq i_1 + (j-1) g(i^*)+ (j-1)(K  - j/2) \left( 1 + \tfrac{64}{\Delta_{\min}^2}\right)  \tfrac{2^{- (\gamma -1)(i^*-2)}}{(2^{(\gamma-1)}-1)^2}.
\end{align*}

{\bf Case 1:} The base case $U_0=i_1$ holds almost surely by definition. We have 
\begin{align*}
    \mathbb{E}[g(U_{j})] &= \sum_{x\geq 0} \mathbb{P}[g(U_{j}) \geq x]
    \leq \mathbb{E}[g(U_{j-1})] + g(i^*) + 
    \sum_{i\geq i^*} \mathbb{P}[U_j \geq i, U_{j-1} < i]\\
    &\leq \mathbb{E}[g(F_{j-1})] + g(i^*) + \sum_{i\geq i^*} \mathbb{P}[\{\exists i'\geq i, \mathbb{I}_W[i',j]=0\}, U_{j-1}+1 \leq i]\\
    &\stackrel{(i)}{\leq} i_1 + (j-1) g(i^*)+ (j-1)(K  - j/2) \left( 1 + \tfrac{64}{\Delta_{\min}^2}\right)  \tfrac{2^{- (\gamma -1)(i^*-2)}}{(2^{(\gamma-1)}-1)^2}.
\end{align*}
Here, for (i) we use the inequality in Lemma~\ref{lemm:warmup}, and take summations over $i$ (similar to Lemma~\ref{lemm:moments}).

{\bf Case 2:} We  again proceed inductively. For any $j \in [N]$, we introduce the notation $$\mathcal{F}_j := \sum_{x\geq 0}\mathbb{P}[\tilde{F}_j \geq g^{-1}(x), U_{(lr_{\max}(j)-1)} < g^{-1}(x)].$$ 
In the base case, as $\tilde{F}_0 = i_1$,  we have 
$\mathcal{F}_0 \leq i_1.$
Proceeding with the inductive approach 
\begin{align*}
    &\sum_{x\geq 0} \mathbb{P}[\tilde{F}_j \geq g^{-1}(x), U_{(lr_{\max}(j)-1)} < g^{-1}(x)] \\
    &\leq \sum_{x\geq 0} \mathbb{P}[\tilde{F}_j \geq g^{-1}(x), \tilde{F}_{j-1} \geq g^{-1}(x), U_{(lr_{\max}(j-1)-1)} < g^{-1}(x)] \\
    &+ \sum_{x\geq 0}\mathbb{P}[\tilde{F}_j \geq g^{-1}(x), \tilde{F}_{j-1} < g^{-1}(x), U_{(lr_{\max}(j-1)-1)} < g^{-1}(x)]\\
    &\stackrel{(i)}{\leq} \sum_{x\geq 0}\mathbb{P}[\tilde{F}_{j-1} \geq g^{-1}(x), U_{(lr_{\max}(j-1)-1)} < g^{-1}(x)] \\
    &+\sum_{x\geq 0} \mathbb{P}[\{\exists i'\geq g^{-1}(x), \mathbb{I}_{G_{\alpha}}[i',j]=0\}, F_{\alpha(j-1)} < g^{-1}(x)]\\
    &\leq \mathcal{F}_{j-1} +
    \sum_{x\geq 0} \mathbb{P}[\{\exists i'\geq g^{-1}(x), \mathbb{I}_{G_{\alpha}}[i',j]=0\}, F_{\alpha(j-1)} < g^{-1}(x)]\\
    &\leq \mathcal{F}_{j-1} + g(i^*) + \sum_{i \geq i^*} \mathbb{P}[\{\exists i'\geq i,  \mathbb{I}_{G_{\alpha}}[i',j]=0\}, F_{\alpha(j-1)} < i]\\
    &\leq \mathcal{F}_{j-1} + g(i^*) + \sum_{i \geq i^*} \sum_{i'\geq i} \mathbb{P}[\{\exists i'\geq i,  \mathbb{I}_{G_{\alpha}}[i',j]=0\}, i \geq F_{\alpha(j-1)} + 1]\\
    &\leq \mathcal{F}_{j-1} + g(i^*) + \sum_{i'\geq i^*} (i'-i^*+1)\mathbb{P}[\{\exists i'\geq i^*,  \mathbb{I}_{G_{\alpha}}[i',j]=0\}, i' \geq F_{\alpha(j-1)} + 1]\\
    &\leq \mathcal{F}_{j-1} + g(i^*) + \sum_{i'\geq i^*} (i'-i^*+1)\mathbb{P}[\{\exists i'\geq i^*,  \mathbb{I}_{G_{\alpha}}[i',j]=0\}, i' \geq F_{\alpha(j-1)} + 1]\\
    &\stackrel{(ii)}{\leq}\mathcal{F}_{j-1} +  g(i^*)  +  (K - j) \left( 1 + \tfrac{64}{\Delta_{\min}^2}\right) \sum_{i'\geq i^*} (i'-i^*+1) 2^{-i'(\gamma -1)}\\
    &\leq i_1 + (j-1) g(i^*)+ (j-1)(K  - j/2) \left( 1 + \tfrac{64}{\Delta_{\min}^2}\right)  \tfrac{2^{- (\gamma -1)(i^*-2)}}{(2^{(\gamma-1)}-1)^2}.
\end{align*}
For the inequality (i) we use the fact that given $U_{(lr_{\max}(j-1)-1)}, \tilde{F}_{j-1} < g^{-1}(x)$ the only way we can have $\tilde{F}_{j-1} \geq g^{-1}(x)$ if for some phase $i' > g^{-1}(x)$ agent  $j$ is not in an $\alpha$-good phase. Then we use Lemma~\ref{lemm:alphagoodphase} to obtain inequality (ii).
\end{proof}


For the expected rounds upto the end of phase $F_j$ is upper bounded as 
\begin{align*}
&\mathbb{E}[S_{F_{\alpha j}}] = \mathbb{E}[ R + C(F_{\alpha j}-1) + 2^{F_{\alpha j}}] \leq \mathbb{E}[ R + C(\tilde{F}_j-1) + 2^{\tilde{F}_j}]\\
&\leq R + C(2i_1-1) + C(lr_{\max}(j)+j-2) i^* + (lr_{\max}(j)+j-2)2^{i^*} \\
&+ (C+1)(lr_{\max}(j)+j-2) K \left( 1 + \tfrac{64}{\Delta_{\min}^2}\right)  \tfrac{2^{- (\gamma -1)(i^*-2)}}{(2^{(\gamma-1)}-1)^2}
\end{align*}

Similar to SPC condition, we define $J_{\max}(j) = \max\left(j+1, \{j': \exists k\in \mathcal{H}_j, j'\in \mathcal{B}_{jk}\}\right)$.
Then as $\tilde{F}_{j} \geq \tilde{F}_{\alpha j'}$ almost surely for all $j \geq j'$ by definition and $\tilde{F}_j \geq F_{\alpha j}$, we have  
$$V_{\alpha j} = \max\left(F_{\alpha(j+1)}, \cup_{k\in \mathcal{H}_j}\cup_{j'\in \mathcal{B}_{jk}} F_{\alpha j'} \right) \leq \max\left(\tilde{F}_{(j+1)}, \cup_{k\in \mathcal{H}_j}\cup_{j'\in \mathcal{B}_{jk}} \tilde{F}_{j'} \right)  = \tilde{F}_{J_{\max}(j)}.$$ 

The regret upto the end of the phase when the local deletion vanishes is bounded as 
\begin{align*}
&\mathbb{E}[S_{V_{\alpha j}}] \leq \mathbb{E}[V_{\alpha j}^{(\beta+1)}] \leq \mathbb{E}[S_{\tilde{F}_{J_{\max}(j)}}]
\end{align*}

The  regret bound for the $\alpha$-condition in Theorem~\ref{thm:main} (identically derived as in the SPC case) is obtained by combining the above results as,
\begin{align*}
    &\mathbb{E}[R_j(T)]\\
    &\leq \sum_{k\notin \mathcal{D}_j} \sum_{j'\in \mathcal{B}_{jk}: k \notin \mathcal{D}_{j'}} \tfrac{8\gamma \mu_{k^*_j}}{\Delta^2_{j'k}}\left(\log(T)+\sqrt{\tfrac{\pi}{\gamma}\log(T)}\right) + \sum_{k\notin \mathcal{D}_j \cup k^*_j}\tfrac{8\gamma}{\Delta_{jk}}\left(\log(T)+\sqrt{\tfrac{\pi}{\gamma}\log(T)}\right) \\
    &+ c_j\log_2(T)  + O\left(\tfrac{N^2K^2}{\Delta^2_{\min}} + (\min(1, \beta |\mathcal{H}|_j)f_{\alpha}(J_{\max}(j)) +  f_{\alpha}(j)-1)2^{i^*}  + N^2K i^*\right)
\end{align*}
with the definition that $f_{\alpha}(j) = j + lr_{\max}(j)$.

This completes the proof of Theorem~\ref{thm:main}, as the regret bound for the SPC mentioned in the theorem holds due to Theorem~\ref{thm:spc}.


\section{Additional Experimental Results}\label{sec:app_experiments}
In this section, we present missing details of the dataset generation procedure and additional empirical results.

\subsection{Synthetic Dataset generation}
We use random instances to generate the results in this paper. For each instance the various algorithms are run for 50 times and the average and confidence intervals are constructed using these 50 trials. 

For the preference of the agents, we first create a random matrix $\mu \in [0, 1]^{N \times K}$ where each entry in the matrix is a i.i.d. $[0, 1]$ random variable. The minimum reward gap $\Delta_{min} \approx 0.05$ is enforced through rejection sampling. The agents preferences over the arms is given by the realization of this random matrix. We use different random matrices for different instances.

The preferences of the arms, varies across the three setting -- SPC, $\alpha$-condition, and general instances. 
\begin{itemize}
    \item For a general instance, we simply assign each arm with a random permutation over the agents as its preference list.
    \item  We start with a separate random preference list for each arm. To make this satisfy the SPC condition, we go in the order $1, 2, \ldots, K$ of the arms. For an arm $i$, we find  the first position in its preference assigned by the random permutation where an agent $j \geq i$ is present, then swap agent $i$ with agent $j$ to the end (if $j=i$ nothing is done). It is easy to see that this will satisfy the SPC condition.
    \item We generate the $\alpha$-condition instance by generating an arbitrary preference list (sample without replacement from possible permutations) for the arms, and then checking whether the instance (along with the agent preference fixed by the arm means) satisfies alpha condition following~\cite{karpov2019necessary}.  
\end{itemize}

For the UCB-D4 algorithm we use $\beta = 1/2K$, for the CA-UCB we use $\lambda = 0.2$ and for Phased ETC we use $\epsilon = 0.2$ for the $N=5$ and $K=6$ case.

\subsection{Performance of UCB-C, CA-UCB and UCB-D4 on general instances}
In this sub-section, we describe the results of the three algorithms with $N=5$ agents and $K=6$ arms on instances that go beyond the uniqueness consistency assumption. Note that in theory, UCB-C provides the optimal $\log(T)$ guarantee, CA-UCB provides a (possibly sub-optimal) guarantee of $\log^2(T)$ while we have no theoretical upper-bound on the regret of UCB-D4. Nonetheless, the results in Figure~\ref{fig:generalSmall} seem to indicate that CA-UCB has a potentially stronger theoretical upper-bound since its performance is very close to that of UCB-C which has $\log(T)$ upper-bound in the worst-case. Surprisingly, we also see that UCB-D4 \emph{converges} with all the agents eventually obtaining a sub-linear regret indicating that this algorithm may indeed have theoretical upper-bounds even in the more general setup. 

\begin{figure}[tb]
\centering
  \includegraphics[scale=0.25]{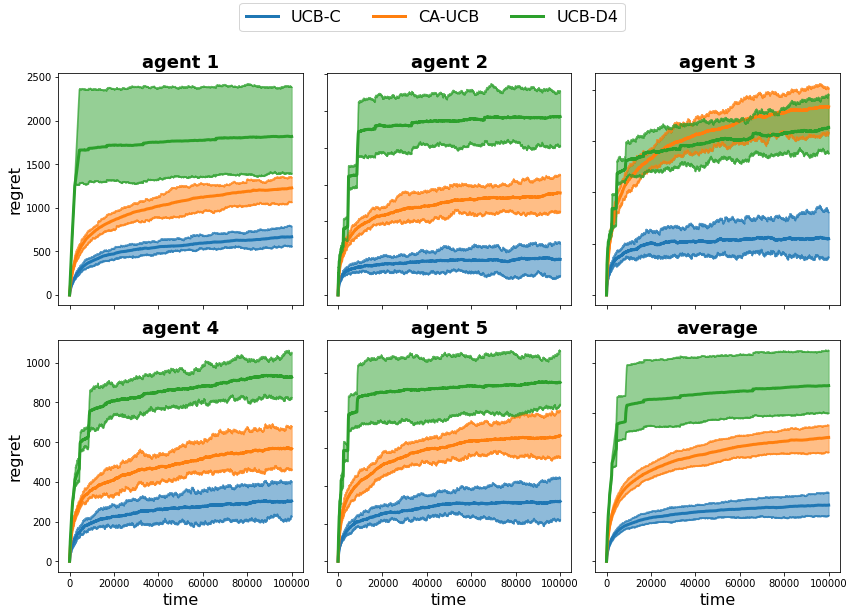}
\caption{Regret in a general instance (not satisfying $\alpha$-condition) with $5$ agents and $6$ arms.}
\label{fig:generalSmall}
\end{figure}

\begin{figure*}[ht!]
\centering
 \begin{subfigure}{0.5\textwidth}
\centering
  \includegraphics[width=0.85\textwidth]{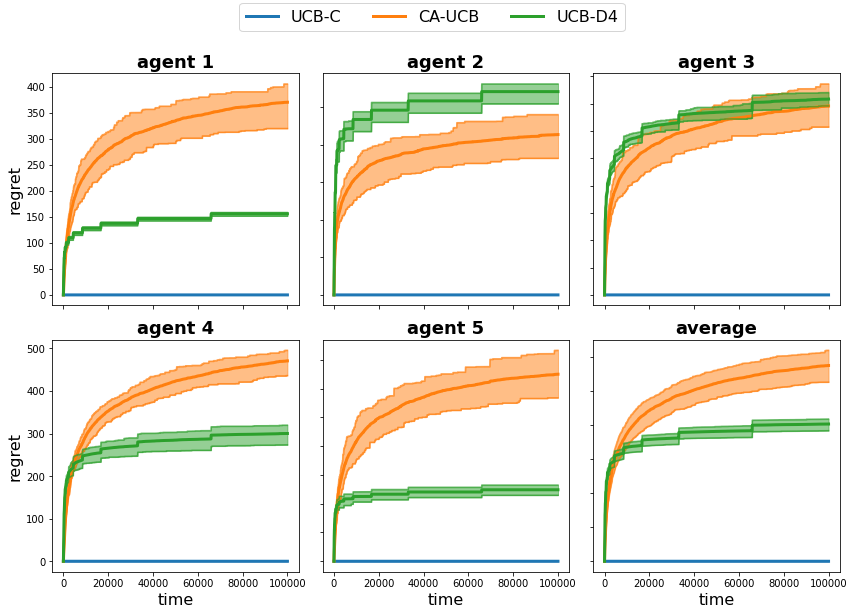}
\caption{Instance satisfying  SPC.}
\label{fig:spcCol}
\end{subfigure}%
\begin{subfigure}{0.5\textwidth}
\centering
  \includegraphics[width=0.85\textwidth]{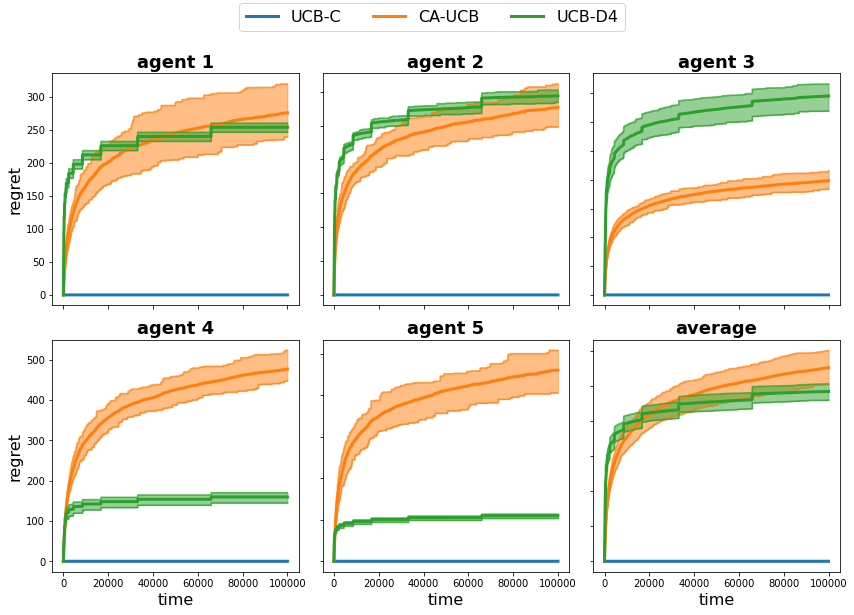}
\caption{Instance satisfying $\alpha$-condition.}
\label{fig:alphaCol}
\end{subfigure}

\begin{subfigure}{0.5\textwidth}
\centering
  \includegraphics[width=0.85\textwidth]{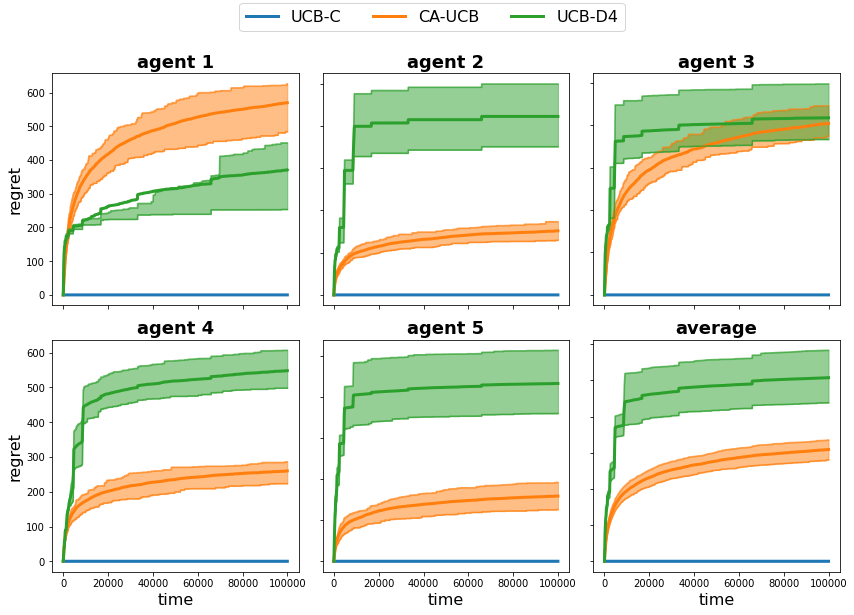}
\caption{General instances.}
\label{fig:generalCol}
\end{subfigure}
\caption{Collision regret comparison with $5$ agents and $6$ arms.}
\label{fig:Col}
\end{figure*}
\subsection{Collision Regret for UCB-C, CA-UCB and UCB-D4}

In this sub-section, we show the collision regret incurred by each of the three algorithms in the three settings under which we study their overall regret. As expected, UCB-C has no collision regret because of centralized communication. Surprisingly, CA-UCB has high regret due to collision despite having additional feedback in the SPC setting. This seems to indicate that most of the regret contribution for CA-UCB comes from collisions and once they are resolved the dynamics of should settle to a state which incurs no further regret.




\subsection{Performance of the algorithms on larger instances}
\label{subsec:largeInstances}

In this sub-section we run the algorithms for larger instances. In particular, we have $N=11$ agents and $K=15$ agents.\footnote{The number $11$ was chosen to obtain a rectangular $3 \times 4$ grid plot} 

{\em Tuned Phase Length:} We tune the phase length for larger instances. The tuning mainly balances some boundary conditions arising due to large communication blocks (which is only there in the fully decentralized setting) for large instances. Specifically, with large instances in the initial phases communication creates large regret if the phase lengths are small where not many samples can be explored.  For tuning Phased ETC (Algorithm~\ref{algo:etc}) we use exponent $c_0$, and multiplier $c_1$, where the  $i$-th phase now has length $c_1 \times c_0^{i}$. We have $c_1=1$ and $c_0=2$ for Algorithm~\ref{algo:etc}. For tuning UCB-D4 (Algorithm~\ref{alg:UCBD4}) we introduce exponent $c_0$, and multiplier $c_1$, where the  $i$-th phase now has length 
$\left((N-1)K + c_1 \times c_0^{i}\right)$. The UCB-D4(Algorithm~\ref{alg:UCBD4}) presented in the main paper we have $c_0=2$, and $c_1=1$.

The hyper-parameters for these plots are as follows. We use \\
1. phase exponent $c_0=1.5$, phase multiplier $c_1 = 1$, and exploration degree $\epsilon = 0.2$ for Phased-UCB,\\
2. phase exponent $c_0=1.2$, phase multiplier $c_1 = 3$, and the local collision threshold $\beta = 1/2K$ for UCB-D4, and \\
3. $\lambda = 0.2$ for CA-UCB.

The results that were previously observed also hold similarly for this larger instance. 
We note that the negative regret in the centralized UCB is natural, as during the initial phases an agent can match with an arm which has higher mean than its stable matched arm.

\begin{figure}[ht!]
\centering
  \includegraphics[width=0.7\textwidth]{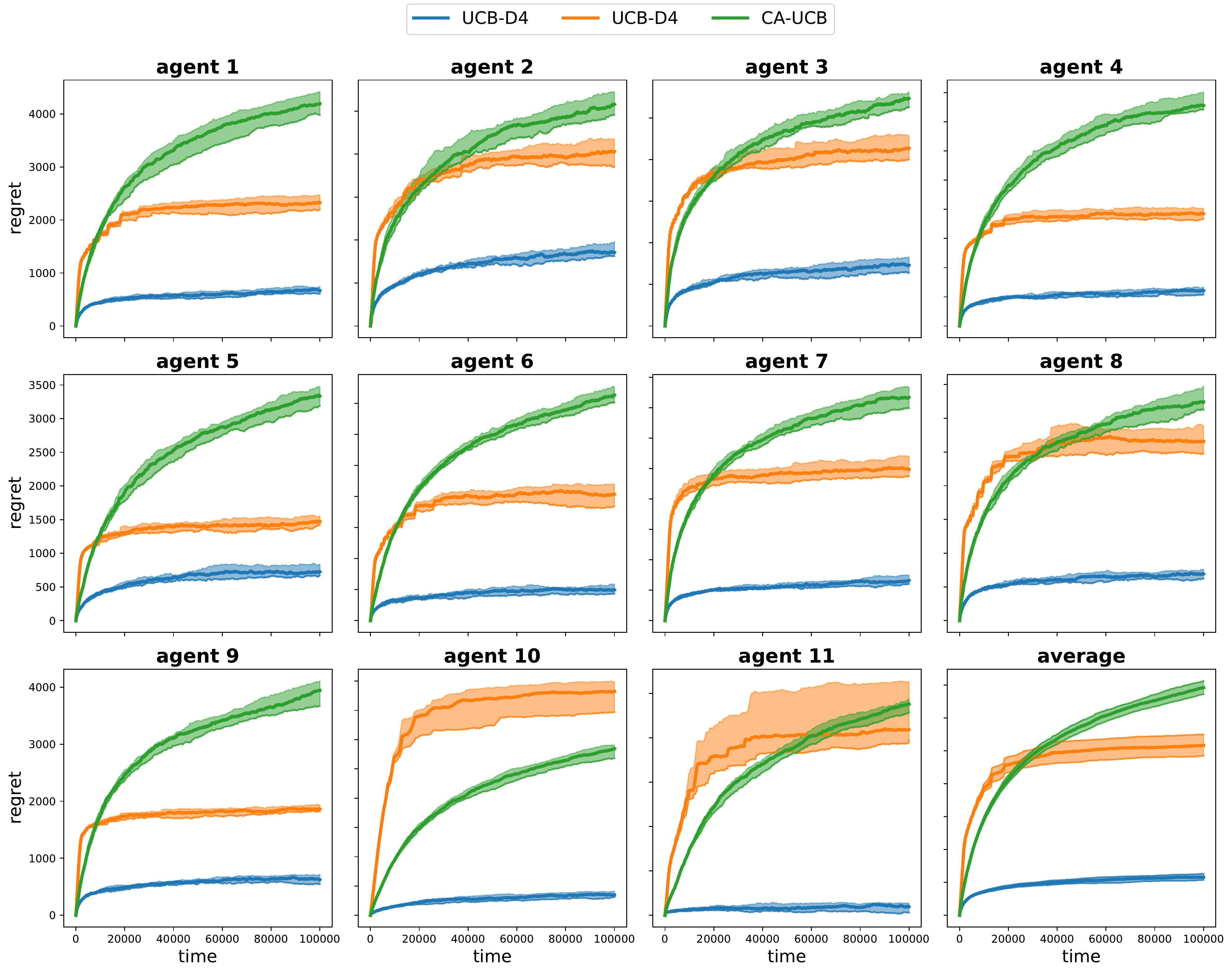}
\caption{Instance satisfying  SPC  with $11$ agents, and $15$ arms.}
\label{fig:spc_large}
\end{figure}

\begin{figure}[ht!]
\centering
  \includegraphics[width=0.7\textwidth]{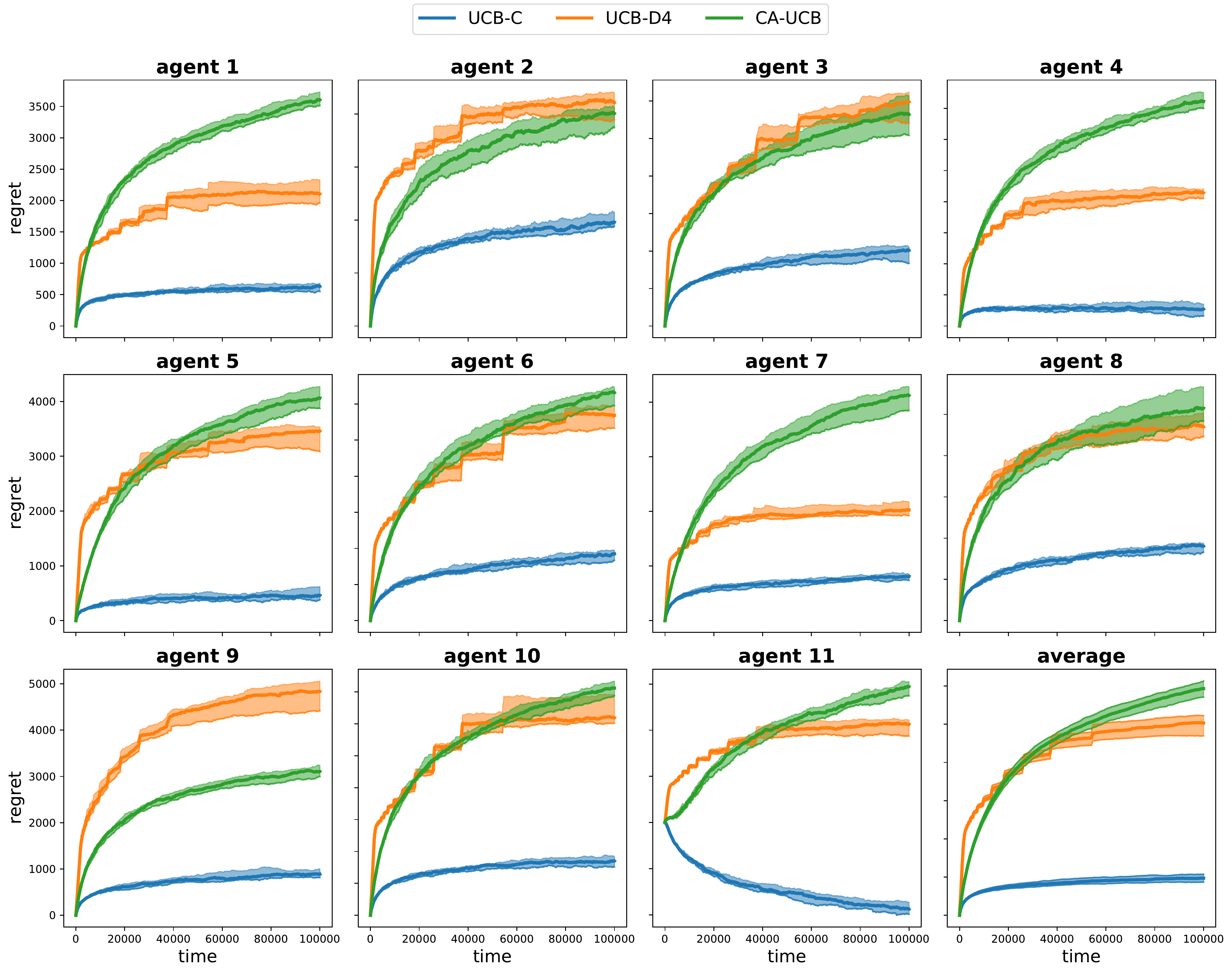}
\caption{Instance satisfying $\alpha$-condition  with $11$ agents, and $15$ arms.}
\label{fig:alpha_large}
\end{figure}

\begin{figure}[ht!]
\centering
  \includegraphics[width=0.7\textwidth]{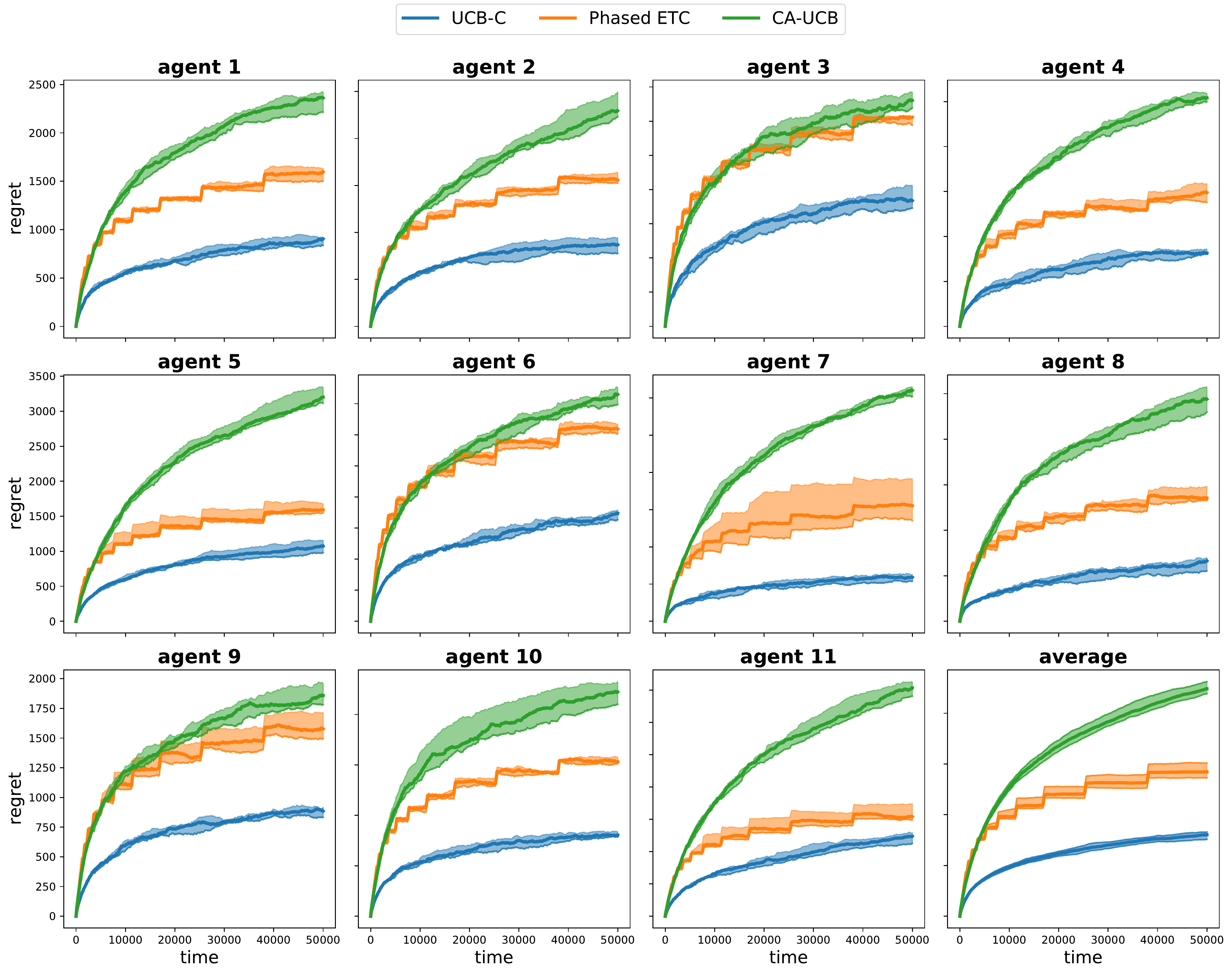}
\caption{General instances  with $11$ agents, and $15$ arms.}
\label{fig:general_large }
\end{figure}

{\em Regret Guarantees:} The regret bounds remain mostly unchanged due to the above tuning.  The regret of the modified Phased ETC is given by replacing the $\log_2(T)$ by $\log_{c_0}(T/c_1)$, and changing the constant to $\Theta\left(c_0^{ 1/\Delta^{2/\varepsilon}}\right)$.
For the modified UCB-D4 algorithm the $\log(T)$ regret due to collision and sub-optimal play does not change. The communication regret changes to $(K - 1 + |\mathcal{B}_{jk^*_j}|)\log_{c_0}(T/c_1)$. Finally, the constant part of the regret still remains $O\left(\max\left\{\tfrac{N}{\Delta^2_{min}}\log(\tfrac{N}{\Delta^2_{min}}), NK \log(NK)\right\}\right)$.
\end{appendices}

\end{document}